\newcommand{\B}{\mathcal B}
\newcommand{\V}{\mathbb V}
\renewcommand{\P}{\mathbb P}
\newcommand{\no}{\epsilon}
\newcommand{\loi}{\mathcal L}
\newlength{\minipagewidth}
\newcommand{\bookbox}[1]{\small
\par\medskip\noindent
\framebox[0.98\columnwidth]{
\begin{minipage}{0.98\minipagewidth} {#1} \end{minipage} } \par\medskip }
\newcommand{\beq}{\begin{equation}}
\newcommand{\eeq}{\end{equation}}
\newcommand{\beqa}{\begin{eqnarray}}
\newcommand{\eeqa}{\end{eqnarray}}
\newcommand{\beqan}{\begin{eqnarray*}}
\newcommand{\eeqan}{\end{eqnarray*}}
\newcommand{\E}{\mathbb{E}}
\newcommand{\alg}{\mathcal A}
\newcommand{\hmu}{\hat{\mu}}
\newcommand{\var}{\sigma^2}
\newcommand{\si}{\sigma}
\newcommand{\hsi}{\hat{\sigma}}
\newcommand{\N}{\mathcal N}
\newcommand{\T}{{\mathcal{T}}}
\newcommand{\ind}[1]{\mathbb I\left\lbrace {#1} \right\rbrace}
\newcommand{\X}{\mathcal X}
\newcommand{\R}{\mathbb R}
\newtheorem{assumption}{Assumption}
\newtheorem{proposition}{Proposition}
\newtheorem{theorem}{Theorem}
\newtheorem{lemma}{Lemma}
\newtheorem{corollary}{Corollary}
\newenvironment{proof}[1][Proof]{\begin{trivlist}
\item[\hskip \labelsep {\bfseries #1}]}{\end{trivlist}}
\icmltitlerunning{Toward Optimal Stratification for Stratified Monte-Carlo Integration}
\begin{document} 

\twocolumn[
\icmltitle{Toward Optimal Stratification for Stratified Monte-Carlo Integration}

% It is OKAY to include author information, even for blind
% submissions: the style file will automatically remove it for you
% unless you've provided the [accepted] option to the icml2013
% package.
\icmlauthor{Alexandra Carpentier}{a.carpentier@statslab.cam.ac.uk}
\icmladdress{Statistical Laboratory, Center for Mathematical Sciences, Wilberforce Road, CB3 0WB Cambridge, United Kingdom}
\icmlauthor{R\'emi Munos}{remi.munos@inria.fr}
\icmladdress{INRIA Lille - Nord Europe, Parc Scientifique de la Haute-Borne, 40 Avenue Halley, 59650 Villeneuve d’Ascq, France}

% You may provide any keywords that you 
% find helpful for describing your paper; these are used to populate 
% the "keywords" metadata in the PDF but will not be shown in the document
\icmlkeywords{Bandit Theory, Monte-Carlo Integration}

\vskip 0.3in
]

\begin{abstract}
We consider the problem of adaptive stratified sampling for Monte Carlo integration of a noisy function, given a finite budget $n$ of noisy evaluations to the function. We tackle in this paper the problem of adapting to the function at the same time the number of samples into each stratum and the partition itself. More precisely, it is interesting to refine the partition of the domain in area where the noise to the function, or where the variations of the function, are very heterogeneous. On the other hand, having a (too) refined stratification is not optimal. Indeed, the more refined the stratification, the more difficult it is to adjust the allocation of the samples to the stratification, i.e.~sample more points where the noise or variations of the function are larger.
We provide in this paper an algorithm that selects online, among a large class of partitions, the partition that provides the optimal trade-off, and allocates the samples almost optimally on this partition.
\end{abstract}

\vspace{-0.7cm}

\section{Introduction}
\vspace{-0.2cm}

The objective of this paper is to provide an efficient strategy for integrating a noisy function $F$. The learner can sample $n$ times the function. If it samples the function at a time $t$ in a point $x_t$ of the domain $\X$ \textit{that it can choose to its convenience}, it obtains the noisy sample $F(x_t,\no_t)$, where $\no_t$ is drawn independently at random from some distribution $\loi_{x_t}$, where $\loi_x$ is a probability distribution that depends on $x$.

If the variations of the function $F$ are known to the learner, an efficient strategy is to sample more points in parts of the domain $\X$ where the variations of $F$ are larger. This intuition is explained more formally in the setting of \textit{Stratified Sampling} (see e.g.~\cite{rubinstein2008simulation}).%, i.e.~a setting where one divides the domain in $K$ strata and allocates samples in the strata.
\\
 More precisely, assume that the domain $\X$ is divided in $K_{\N}$ regions (according to the usual terminology of stratified sampling, we refer to these regions as strata) that form a partition $\N$ of $\X$. It is optimal (for an oracle) to allocate a number of points in each stratum proportional to the measure of the stratum times a quantity depending of the variations of $F$ in the stratum (see Subsection 5.5 of \cite{rubinstein2008simulation}). We refer to this strategy as optimal oracle strategy for partition $\N$. %and write $\frac{\Sigma_{\N}^2}{n}$ the mean squared error (with respect to the integral of $F$) of the estimate outputted by the optimal oracle strategy.

The problem is that the variations of the function $F$ in each stratum of $\N$ are unknown to the learner. 
 In the papers~\cite{A-EtoJou10, Grover, MC-UCB}, the authors expose the problem of, at the same time, estimating the variations of $F$ in each stratum, and allocating the samples optimally among the strata according to these estimates.
\\
%More precisely, in~\cite{rapp-tech-MC-UCB}\footnote{This is the detailed version of~\cite{MC-UCB}, where the bounds are enhanced.}, the authors provide an asymptotically consistent algorithm whose pseudo-risk\footnote{We define precisely later in the paper the notion of pseudo-risk. It is a proxy for the mean squared error of the estimate of the integral.} is bounded by $\frac{\Sigma_{\N}^2}{n} + C_{\min} \Sigma_{\N} \frac{K_{\N}^{1/3}}{n^{4/3}}$, where $C_{\min}$ is a constant. 
Up to some variation in efficiency or assumptions, these papers provide learners that are indeed able to learn about the variations of the function and allocate optimally the samples in the strata, up to a negligible term. However, all these papers make explicit in the theoretical bounds, or at least intuitively, the existence of a natural trade-off in terms of the refinement of the partition. The more refined the partition (especially if it gets more refined where variations of $F$ are larger), the smaller the variance of the estimate outputted by the optimal oracle strategy. However, the larger the error of an adaptive strategy with respect to this optimal oracle strategy, since the more strata there are, the harder it is to adapt to each stratum.
%If the domain is wisely stratified, according to $F$, and in many strata, then $\frac{\Sigma_{\N}^2}{n}$ is small. However the term $\frac{K_{\N}^{1/3}}{n^{4/3}}$ in the bound depends also of the partition of the space and increases with the number of strata. The intuition behind this fact is that the learner has to learn the variations of the function inside each stratum, and the more strata there are, the harder it is.

%It is however not a good strategy to divide the domain in only one stratum. Indeed, the interest of stratified sampling is that the variance is reduced by sampling more deterministically, and if the problem of learning the variances inside the strata did not exist, a good strategy would be to have as many strata as possible, that is to say $K=n$ strata. It indeed makes the (rescaled) variance of the static oracle, $\Sigma_{\N}^2$, diminish.
%\\
It is thus important to adapt also the partition to the function, and refine more the strata where variations of the function $F$ are larger, while at the same time limiting the number of strata.
As a matter of fact, a good partition of the domain is such that, inside each stratum, the values taken by $F$ are as homogeneous as possible (see Subsection 5.5 of \cite{rubinstein2008simulation}), while at the same time the number of strata is not too large.
\\
There are some recent papers on how to stratify efficiently the space, e.g.~\cite{glasserman1999asymptotically,kawai2010asymptotically, A-EtoForJouMou11,carpentier2012minimax, carpentier2012online}. More specifically, in the recent paper~\cite{A-EtoForJouMou11}, the authors propose an algorithm for performing this task online and efficiently. They do not provide proofs of convergence for their algorithm, but they give some properties of optimal stratified estimate when the number of strata goes to infinity, notably convergence results under the optimal allocation. They also give some intuitions on how to split efficiently the strata. Having an asymptotic vision of this problem prevents them however from giving clear directions on how exactly to adapt the strata, as well as from providing theoretical guarantees. In paper~\cite{carpentier2012minimax}, the authors propose to stratify the domain according to some preliminary knowledge on the class of smoothness of the function. They however fix the partition before sampling and thus do not consider \textit{online} adaptation of the partition to the function. Finally, although considering online adaptation of the partition to the function, the paper~\cite{carpentier2012online} considers the specific and somehow very different\footnote{In this setting where the function $F$ is noiseless and very regular, efficient strategies share ideas with quasi Monte-Carlo strategies, and the number of strata should be almost equal to the budget $n$.} setting where the noise $\epsilon$ to the function $F$ is null, and where $F$ is differentiable according to $x$.

\vspace{-0.3cm}
\paragraph{Contributions:}

We consider in this paper the problem of designing efficiently and according to the function a partition of the space, and of allocating the samples efficiently on this partition. More precisely, our aim is to build an algorithm that allocates the samples almost in an oracle way on the best possible partition (adaptive to the function $F$, i.e.~that solves the trade-off that we named before) in a large class of partitions. We consider in this paper the class of partition to be the set of partitions defined by a hierarchical partitioning of the domain (as for instance what was considered in~\cite{bubeck2008online} for function optimization). %A reasonable strategy is to refine the partition in areas of the domain \textit{if and only if} the performance of MC-UCB on the more refined partition is better than the performance of MC-UCB on the initial partition. Our contributions are the following:
\vspace{-0.3cm}
\begin{itemize}
\item We provide new, to the best of our knowledge, ideas for sampling a domain very homogeneously, i.e.~such that the samples are well scattered. The sampling schemes we introduce share ideas with low discrepancy schemes (see e.g.~\cite{niederreiter2010quasi}), and provide some theoretic guarantees for their efficiency. 
\item We provide an algorithm, called Monte-Carlo Upper Lower Confidence band. We prove that it manages to at the same time select an optimal partition of the hierarchical partitioning and then to allocate the samples in this partition almost as an oracle would do. More precisely, we prove that its pseudo-risk is smaller, up to a constant, than the pseudo-risk of MC-UCB on \textit{any} partition of the hierarchical partitioning.
%Two distinct analyses corresponding to two different configurations of this problem: i) If the stratas are homogeneous or if the budget is big (i.e. if the problem is ``simple''), we are able to state that we deviate from the targetted variance from only $\widetilde O(n^{-3/2})$ while ii) if the stratas are inhomogeneous or the budget is small (i.e. the problem is difficult), then we deviate from $O(n^{-4/3})$.
\end{itemize}
\vspace{-0.3cm}
The rest of the paper is organised as follows. In Section~\ref{s:preliminaries} we formalise the problem and introduce the notations used throughout the paper. We also remind the problem independent bound for algorithm MC-UCB. Section~\ref{s:algo.mculcb} presents algorithm MC-ULCB, and its bound on the pseudo-risk. After a technical part on notations, we introduce what we call Balanced Sampling Scheme (BSS) and a variant of it, BSS-A. These are sampling schemes for allocating samples in a random yet almost low discrepancy way, on a domain. Algorithm MC-ULCB that we present afterwards relies heavily on them. We also discuss the results, and finally conclude the paper.

\vspace{-0.2cm}

\section{Preliminaries}\label{s:preliminaries}
\vspace{-0.2cm}

\subsection{The function}
\vspace{-0.2cm}

Consider a noisy function  $F:(x,\no) \in \X \times \Omega \rightarrow \R$.\\
In this definition, $\X$ is the domain on which the learner can \textit{choose} in which point $x$ to sample, and $\Omega$ is a space on which the noise to the function $\no$ is defined. We define for any $x \in \X$ the distribution of noise $\no$ conditional to $x$ as $\loi_{x}$. We also define a \textit{finite} measure $\nu$ on $\X$ corresponding to a $\sigma-$algebra whose sets belong to $\X$. Without loss of generality, we assume that $\nu(\X) = 1$ ($\nu$ is a probability measure).

The objective of the learner is to sample the domain $\X$ in order to build an efficient estimate of the integral of the noisy function $F$ according to the measure $(\nu,\loi_{x}|x)$, that is to say $\int_{\X} \E_{\no_x \sim \loi_x} F(x,\no_x) d(\nu)(x)$. The learner can sample sequentially the function $n$ times, and observe noisy samples. When sampling the function at time $t$ in $x_t$, it observes a noisy sample $F(x_t,\no_t)$. The noise $\no_t \sim \loi_{x_t}$ conditional to $x_t$ is independent of the previous samples $(x_i,\no_i)_{i<t}$.

For any point $x \in \X$, define
\vspace{-0.2cm}
\begin{small}
\begin{align*}
g(x) = \E_{\no \sim \loi_x} F(x, \no) \hspace{0.2cm} and  \hspace{0.2cm} s(x) = \sqrt{\E_{\no \sim \loi_x}\Big[\big(F(x, \no) - g(x) \big)^2 \Big]}. 
\end{align*}
\end{small}
\vspace{-0.2cm}
We state the following Assumption on the function
\vspace{-0.1cm}
\begin{assumption}\label{ass:boundness}
We assume that both $g$ and $s$ are bounded in absolute value by a constant $f_{\max}$.
Let $\upsilon(x,\no) = \frac{F(x, \no) - g(x)}{s(x)}$ (if $s(x)=0$, set $\upsilon(x,\no) = 0$). We assume that $\exists b$ such that $\forall \lambda <\frac{1}{b}$,
\begin{center}
$\E_{\no \sim \loi_x}\Big[ \exp(\lambda \upsilon(x,\no) ) \Big] \leq \exp\Big( \frac{\lambda^2 }{2(1 - \lambda b)}\Big)$,     and       $\E_{\no \sim \loi_x}\Big[ \exp(\lambda \upsilon(x,\no)^2 - \lambda) \Big] \leq \exp\Big( \frac{\lambda^2 }{2(1 - \lambda b)}\Big).$
\end{center}
\end{assumption}
\vspace{-0.2cm}
Assumption~\ref{ass:boundness} means that the variations coming from the noise in $F$, although potentially unbounded, are not too large\footnote{This assumption implies that the variations induced by the noise are sub-Gaussian. It is actually slightly stronger than the usual sub-Gaussian assumption. Nevertheless, e.g.~bounded random variables and Gaussian random variables satisfy it.}. We believe that it is rather general. In particular, it is satisfied if $F$ is bounded, or also for e.g.~a bounded function perturbed by an additive, heterocedastic, (sub-)Gaussian noise.

\vspace{-0.2cm}
\subsection{Notations for a hierarchical partitioning}
\vspace{-0.2cm}

The strategies that we are going to consider for integration are allowed to choose where to sample the domain. In order to do that, the strategies we consider will partition the domain $\X$ into strata and sample randomly in the strata. In theory the stratification is at the discretion of the strategy and can be arbitrary. However in practice, we will consider strategies that rely on given hierarchical partitioning.

Define a dyadic hierarchical partitioning $\mathbf{\T}$ of the domain $\X$. More precisely, we consider a set of partitions of $\X$ at every depth $h \geq 0$: for any integer $h$, $\X$ is partitioned into a set of $2^h$ strata $\X_{[h,i]}$, where $0 \leq i \leq 2^{h} - 1$. This partitioning can be represented by a dyadic tree structure, where each stratum $\X_{[h,i]}$ corresponds to a node $[h, i]$ of the tree (indexed by its depth $h$ and index $i$). Each node $[h, i]$ has $2$ children nodes $[h+1, 2i]$ and $[h+1, 2i+1]$. In addition, the strata of the children form a sub-partition of the parents stratum $\X_{[h,i]}$ . The root of the tree corresponds to the whole domain $\X$.

We state the following assumption on the measurability and on the measure of any stratum of the hierarchical partitioning.
\vspace{-0.2cm}
\begin{assumption}\label{ass:stratummeasurables}
 $\forall [h,i] \in \mathbf{\T}$, the stratum $\X_{[h,i]}$ is measurable according to the $\sigma-$algebra on which the probability measure $\nu$ is defined.
\end{assumption}
\vspace{-0.2cm}
We write $w_{[h,i]}$ the measure of stratum $\X_{[h,i]}$, i.e.~$w_{[h,i]} = \nu(\X_{[h,i]})$. We also assume that the hierarchical partitioning is such that all the strata of a given depth have same measure, i.e.~$w_{[h,i]} = w_h$.
\vspace{-0.2cm}
\begin{assumption}\label{ass:stratumequal}
 $\forall [h,i] \in \mathbf{\T}$, the children strata of $[h,i]$ are such that $w_{h+1} = \nu(\X_{[h+1,2i]}) = \nu(\X_{[h+1,2i+1]}) = \frac{\nu(\X_{[h,i]})}{2} = \frac{w_{h}}{2}$.
\end{assumption}
\vspace{-0.2cm}
If for example $\X=[0,1]$, a hierarchical partitioning that satisfies the previous assumptions with the Lebesgue measure is illustrated in Figure~\ref{fig:diffvalC}.
\begin{center}
\begin{figure}[htbp]
\begin{center}
\includegraphics[width=8cm]{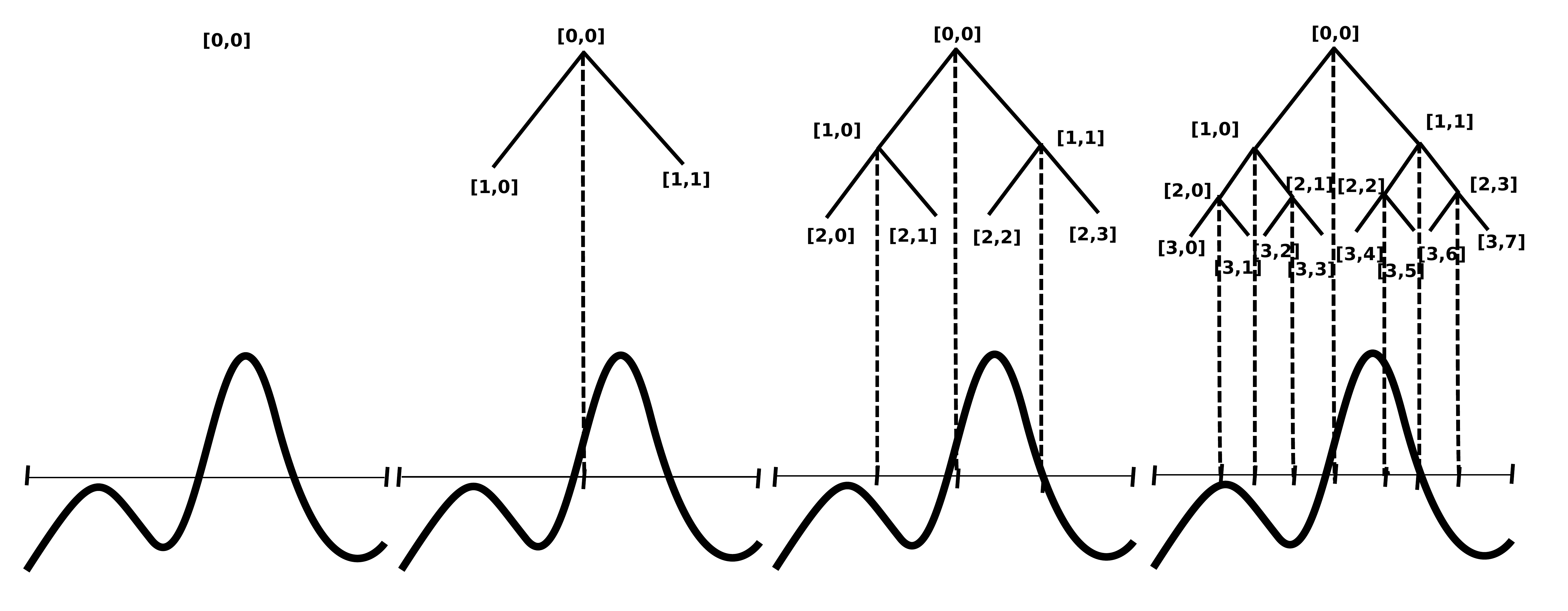}
\vspace{-0.6cm}
\caption{Example of hierarchical partitioning in dimension $1$.} \label{fig:diffvalC}
\vspace{-0.4cm}
\end{center}
\end{figure}
\end{center}
%We consider two sets of nodes $\N_1$ and $\N_2$. We write $\N_1 \bigcup \N_2$ the set of nodes that are in $\N_1$ or $\N_2$. We write $\N_1 \setminus \N_2$ the set of nodes that contains all nodes that are in $\N_1$ but not in $\N_2$ (i.e.~$\N_1 \bigcap \N_2^c$).
We write mean and variance of stratum $\X_{[h,i]}$ the mean and variance of a sample of the function $F$, collected in the point $X$, where $X$ is drawn at random according to $\nu$ conditioned to stratum $\X_{[h,i]}$. %Assume that a sample is collected at time $t$ in stratum $k$. Let $Y_t=f(X,t)$ be this sample, where $X$ is a random variable distributed as $\nu$ restricted on stratum $\X_{[h,i]}$, that is to say $\nu_{\X_{[h,i]}}$.
We write
\vspace{-0.2cm}
\begin{small}
\begin{align*}
\mu_{[h,i]} = \E_{X \sim \nu_{\X_{[h,i]}}}\Big[ \E_{\no \sim \loi_X}[ F(X,\no)] \Big] =\frac{1}{w_h}\int_{\X_{[h,i]}} g(x) d\nu(x),
\end{align*}
\end{small}
\noindent
\vspace{-0.1cm}
the mean and
\vspace{-0.2cm}
\begin{footnotesize}
\begin{align*}
\si^2_{[h,i]} &=  \V_{X \sim \nu_{\X_{[h,i]}} ,\no \sim \loi_X} [F(X,\no)] \\
&= \frac{1}{w_h}\int_{\X_{[h,i]}} \Big(g(x) - \mu_{[h,i]} \Big)^2 d\nu(x) + \frac{1}{w_h}\int_{\X_{[h,i]}} s^2(x) d\nu(x),
\end{align*}
\end{footnotesize}
\vspace{-0.2cm}
\noindent
the variance (we remind that $g$ and $s$ are defined in Assumption~\ref{ass:boundness}).

\vspace{-0.2cm}
%Building a hierarchical partitioning of the space is often

% 
% Let $\N$ be an ensemble of nodes that represents a partition of the space. We assume that we pick uniformly and independently, in each node $[h,i]$ of $\N$ a number $T_{[h,i]}$ of random samples. We denote by $\Big(X_{[h,i],t}\Big)_{[h,i] \in \N, t \leq T_{[h,i]}}$ these samples, and we call $\hmu_{[h,i]} = \frac{1}{T_{[h,i]}} \sum_{t=1}^{T_{[h,i]}}X_{[h,i],t}$. We estimate the integral of $f$ on $[0,1]$ by $\hmu = \sum_{[h,i] \in \N} w_h \hmu_{[h,i]}$.
% 
% 
% Note that we have
% 
% 
% \begin{equation*}
%  \E(\hmu) = \sum_{[h,i] \in \N} w_h \mu_{[h,i]} = \sum_{[h,i] \in \N} \int_{U_{[h,i]}} f(u) du = \int_0^1 f(u) du = \mu.
% \end{equation*}
% 
% Note also that 
% 
% \begin{equation*}
%  \V(\hmu) = \sum_{[h,i] \in \N} \frac{w_h^2 \V(\hmu_{[h,i]})}{T_{[h,i]}} = \sum_{[h,i] \in \N} \frac{w_h^2 \var_{[h,i]}}{T_{[h,i]}}.
% \end{equation*}

\subsection{Pseudo-performance of an algorithm and optimal static strategies}
\vspace{-0.2cm}

We denote by $\alg$ an algorithm that allocates the budget $n$ and returns a partition $\N_n = \Big(\X_{[h,i]} \Big)_{[h,i] \in \N_n}$ included in the hierarchical partitioning $\T$ of the domain. In each node $[h,i]$ of $\N_n$, algorithm $\alg$ allocates uniformly $T_{[h,i],n}$ random samples. We write $\Big(X_{[h,i],t}\Big)_{[h,i] \in \N_n, t \leq T_{[h,i],n}}$ these samples, and we write $\hmu_{[h,i],n} = \frac{1}{T_{[h,i],n}} \sum_{t=1}^{T_{[h,i],n}}X_{[h,i],t}$ the empirical mean built with these samples. We estimate the integral of $F$ on $\X$ by $\hmu_n = \sum_{[h,i] \in \N_n} w_h \hmu_{[h,i],n}$. This is the estimate returned by the algorithm.

If $\N_n$ is fixed as well as the number $T_{[h,i],n}$ of samples in each stratum, and if the $T_{[h,i],n}$ samples are independent and chosen uniformly according to the measure $\nu$ restricted to each stratum $\X_{[h,i]}$, we have
\begin{small}
\begin{equation*}
\vspace{-0.2cm}
 \E(\hmu_n) = \sum_{[h,i] \in \N_n} w_h \mu_{[h,i]} = \sum_{[h,i] \in \N_n} \int_{\X_{[h,i]}} g(u) d\nu(u) = \mu,
\vspace{-0.2cm}
\end{equation*}
\end{small}
and also
\begin{equation*}
\vspace{-0.2cm}
 \V(\hmu_n) = \sum_{[h,i] \in \N_n} w_h^2 \E(\hmu_{[h,i],n}-\mu_{[h,i]})^2 = \sum_{[h,i] \in \N_n} \frac{w_h^2 \var_{[h,i]}}{T_{[h,i],n}},
\vspace{-0.2cm}
\end{equation*}
where the expectations and variance are computed with respect to the samples collected in the strata.

For a given algorithm $\alg$, we denote by \emph{pseudo-risk} the quantity
\begin{equation}\label{risk}
\vspace{-0.2cm}
 L_n(\alg) = \sum_{[h,i] \in \N_n} \frac{w_h^2 \var_{[h,i]}}{T_{[h,i],n}}.
\vspace{-0.2cm}
\end{equation}
This measure of performance is discussed more in depths in papers~\cite{Grover, rapp-tech-MC-UCB}. In particular, paper~\citep{rapp-tech-MC-UCB} links it with the mean squared error.
%where $\N_n$ is such that the samples in all nodes $[h,i]$ of $\N_n$ are allocated uniformly at random according to the Lebesgue measure restricted to $\X_{[h,i]}$.

Note that if, for a given partition $\N$, an algorithm $\alg_{\N}^*$ would have access the variances $\var_{[h,i]}$ of the strata in $\N$, it could allocate the budget in order to minimise the pseudo-risk, by choosing to pick in each stratum $\X_{[h,i]}$ (up to rounding issues) $T_{[h,i]}^* = \frac{w_h \si_{[h,i]}n}{\sum_{x \in \N} w_x \si_{x}}$ samples. The pseudo risk for this oracle strategy is then
\vspace{-0.2cm}
\begin{equation}\label{BestStrat}
 L_n(\alg_{\N}^*) = \frac{\Big(\sum_{[h,i] \in \N} w_h \si_{[h,i]}\Big)^2}{n} = \frac{\Sigma_{\N}^2}{n},
\vspace{-0.2cm}
\end{equation}
where we write $\Sigma_{\N} = \sum_{x\in \N} w_x \si_{x}$. We also refer, in the sequel, as optimal allocation (for a partition $\N$), to $\lambda_{[h,i],\N} = \frac{w_h \si_{[h,i]}}{\Sigma_{\N_n}}$. Even when the optimal allocation is not realizable because of rounding issues, it can still be used as a benchmark since the quantity $L_n(\alg_{\N}^*)$ is a lower bound on the variance of the estimate outputted by any oracle strategy.

%We define the pseudo-risk on partition $\N$ in the case when the samples within each stratum $\X_{[h,i]}$ are chosen \textit{uniformly at random} in the stratum according to the measure $\nu_{\X_{[h,i]}}$. In this article, we however do not sample uniformly at random in each stratum of partition $\N$, but according to a sampling scheme, called USS, that we introduce in the following Section. We prove that the variance of the empirical mean of the samples collected with this sampling scheme is smaller than the variance when sampling uniformly at random in stratum $\X_{[h,i]}$, which justifies the use of this scheme.
%It is however consistent to sample in an other way (not uniformly at random) inside the strata, and then prove that the variance of the estimate outputted with the alternative way of sampling has a risk with respect to the integral of the function on the stratum that is smaller than the variance of the estimate outputted by sampling uniformly in the stratum.
%Note now, as a very important remark on this pseudo-risk, that it only makes sense to consider a partition on which (i) either the samples in each stratum are pulled uniformly or (ii) the sample are not allocated uniformly, but it is possible to show that the used allocation is more efficient in terms of variance than the uniform one.

\subsection{Main result for algorithm MC-UCB and point of comparison}
\vspace{-0.2cm}

Let us consider a fixed partition $\N$ of the domain, and write $K_{\N}$ for the number of strata it contains. We first recall (and slightly adapt) one of the main results of paper~\cite{rapp-tech-MC-UCB} (Theorem 2). It provides a result on the pseudo-risk of an algorithm called MC-UCB. This algorithm takes some parameters linked to upper bounds on the variability of the function\footnote{It is needed that the function is bounded and that the noise to the function is sub-Gaussian.}, a small probability $\delta$, and the partition $\N$.  MC-UCB builds, for each stratum in the \textit{fixed}\footnote{It is very important to note that the partition is fixed for this algorithm and that it only adapts the allocation to the function.} partition $\N$, an upper confidence band (UCB) on it's standard deviation, and allocates the samples proportionnal to the measure of each stratum times this UCB. Its pseudo-risk is bounded in high probability by $\frac{\Sigma_{\N}^2}{n} + \Sigma_{\N}  O(\frac{K_{\N}^{1/3}}{n^{4/3}})$.
This theorem holds also in our setting. The fact that the measure $\nu$ is finite together with Assumptions~\ref{ass:stratummeasurables} and~\ref{ass:boundness} imply that the distribution of the samples obtained by sampling in the strata are sub-Gaussian (as a bounded mixture of sub-Gaussian random variables). We remind and slightly improve this theorem.
\vspace{-0.2cm}
\begin{theorem}\label{prop:m-regret}
%Let $K$ be the number of strata in $\N$.
Under Assumptions~\ref{ass:stratummeasurables} and~\ref{ass:boundness}, the pseudo-risk of MC-UCB\footnote{In order to fit with the assumptions of this paper, we redefine $\forall x \in \N$ and $\forall t \leq n$ the upper confidence bound defined in the original paper as $B_{x,t} = \frac{1}{T_{x,t-1}} w_x\Big(\hsi_{x,t} +  \frac{A}{\sqrt{T_{x,t}}}\Big)$.} launched on partition $\N$ with parameters $f_{\max}$, $b$ and $\delta$ is bounded, if $n \geq 4K$, with probability $1-\delta$,
\begin{equation*}
\vspace{-0.2cm}
L_{n}(\alg_{MC-UCB})   \leq \frac{\Sigma_{\N}^2}{n} + C_{\min} \Sigma_{\N} \sum_{x \in \N} \frac{w_x^{2/3}}{n^{4/3}},
\vspace{-0.2cm}
\end{equation*}
where $C_{\min} = (4\sqrt{2}\sqrt{A} + 3 f_{\max} A)$ and $A = 2\sqrt{2(1 + 3b + 4f_{\max})\log(4n^2(3f_{\max})^3/\delta)}$.
\end{theorem}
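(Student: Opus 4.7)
Since Theorem~\ref{prop:m-regret} restates the main result of \cite{rapp-tech-MC-UCB} under Assumption~\ref{ass:boundness} instead of strict boundedness of $F$, the plan is to follow the three-step skeleton of the original MC-UCB analysis, modifying only the concentration step to accommodate the sub-Gaussian-type noise model of Assumption~\ref{ass:boundness}.

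The first and only substantially new step is to build a high-probability event $\xi$ on which the empirical standard deviation concentrates. Concretely, I would show that with probability at least $1-\delta$, for every stratum $x\in\N$ and every $1\leq t\leq n$,
\begin{equation*}
|\hsi_{x,t}-\si_x| \leq \frac{A}{\sqrt{T_{x,t-1}}}.
\end{equation*}
The two exponential moment bounds in Assumption~\ref{ass:boundness} are of sub-Gamma type (variance proxy $1$, scale $b$), one for the standardised centred samples $\upsilon(X,\no)$ and one for $\upsilon(X,\no)^2$. Applying the associated Bernstein inequalities to $\hmu_{x,t}$ and to $\hsi^2_{x,t}$, using $\si_x\leq f_{\max}$ to convert the bound on $|\hsi^2_{x,t}-\si^2_x|$ into one on $|\hsi_{x,t}-\si_x|$ via $|a-b|=|a^2-b^2|/(a+b)$, and taking a union bound over $x\in\N$ (at most $K_{\N}\leq n/4$ strata by the hypothesis $n\geq 4K$) and $t\leq n$ together deliver the explicit value of $A$ in the theorem.

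The second step is the allocation analysis on $\xi$. The definition of $B_{x,t}$ gives $w_x\si_x/T_{x,t-1}\leq B_{x,t}\leq (w_x\si_x+2Aw_x/\sqrt{T_{x,t-1}})/T_{x,t-1}$ on $\xi$, so $B_{x,t}$ is a valid upper confidence bound on $w_x\si_x/T_{x,t-1}$ with slack of order $1/\sqrt{T_{x,t-1}}$. A standard UCB argument (inspect the last round at which stratum $x$ was selected, on which $B_{x,t}\geq B_{x',t}$ for every $x'$) then yields the two-sided control
\begin{equation*}
T_{x,n} \geq n\lambda_{x,\N} - C_1\,\frac{n^{2/3}\,w_x^{2/3}}{\Sigma_{\N}^{2/3}},
\end{equation*}
where $\lambda_{x,\N}=w_x\si_x/\Sigma_{\N}$ and $C_1$ is of order $\sqrt{A}$; the forced initialisation of MC-UCB (one pull per stratum, which uses up exactly $K_{\N}\leq n/4$ of the budget) guarantees this even for strata where $\si_x$ is very small.

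The third step is algebraic. Plugging the allocation bound into $L_n=\sum_x w_x^2\si_x^2/T_{x,n}$ and using $1/(a-b)\leq 1/a+2b/a^2$ for $b\leq a/2$, the leading term equals $\sum_x w_x^2\si_x^2/(n\lambda_{x,\N})=\Sigma_{\N}^2/n$, while the remainder simplifies, after substituting $\lambda_{x,\N}$, to an expression proportional to $\Sigma_{\N}\sum_x w_x^{2/3}/n^{4/3}$. Tracking the constants through the two sources of slack (a $\sqrt{A}$ term coming from the Bernstein half-width and an $Af_{\max}$ term from the quadratic residual in $1/T_{x,n}$) yields the advertised $C_{\min}=4\sqrt{2}\sqrt{A}+3f_{\max}A$. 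The real obstacle in the whole argument is the first step: one has to combine a Bernstein bound for the empirical mean with one for the empirical variance under Assumption~\ref{ass:boundness}, and keep the constants tight enough to recover exactly the closed-form $A$; steps two and three are then essentially a mechanical re-run of the MC-UCB proof of \cite{rapp-tech-MC-UCB}.
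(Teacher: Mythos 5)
Your skeleton is the paper's own: your Step 1 is exactly the content of the paper's Lemmas~\ref{lem:bernstein}, \ref{ss:variance} and~\ref{lem:xi}, and your Step 2 is Step~1 of Proposition~\ref{thm:m2-regret}, so those parts are sound. The genuine gap is in Step 3. The last-pull comparison naturally yields the \emph{ratio} bound of Equation~\ref{eq:almost-lower-bound-2}, valid simultaneously for every stratum $p$: $\frac{w_p\si_p}{T_{p,n}} \leq \frac{\Sigma_{\N}}{n} + C_{\min}\frac{\sum_q w_q^{2/3}}{n^{4/3}}$. From this the theorem follows with no inversion at all, since $L_n = \sum_p (w_p\si_p)\cdot\frac{w_p\si_p}{T_{p,n}} \leq \Sigma_{\N}\max_p\frac{w_p\si_p}{T_{p,n}}$; low-variance strata are harmless because $\si_p$ sits in the numerator. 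You instead pass to a lower bound on $T_{x,n}$ and invert it inside $L_n$ via $1/(a-b)\leq 1/a+2b/a^2$, which requires $b\leq a/2$. With the remainder in the form you state, $C_1 n^{2/3}w_x^{2/3}/\Sigma_{\N}^{2/3}$ --- which is \emph{not} proportional to $\lambda_{x,\N}$ --- this condition fails for every stratum with $\si_x$ small, where $n\lambda_{x,\N}$ can be arbitrarily smaller than the remainder, so the lower bound on $T_{x,n}$ is vacuous exactly where you need it. (Had you written the remainder as the argument actually delivers it, proportional to $\lambda_{x,\N}$ as in Step~2bis of Proposition~\ref{thm:m2-regret}, the condition $b\leq a/2$ would be uniform in $x$ and reduce to the budget condition $n\geq 2B\sum_q w_q^{2/3}n^{2/3}$, and your inversion would go through.)

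Your patch for the problematic strata --- ``one pull per stratum at initialisation'' --- is quantitatively insufficient. For a stratum where the expansion fails, the defining inequality only gives $w_x\si_x \lesssim C_1 w_x^{2/3}\Sigma_{\N}^{1/3}/n^{1/3}$, so with $T_{x,n}\geq 1$ the best available bound is $w_x^2\si_x^2/T_{x,n} \lesssim C_1^2 w_x^{4/3}\Sigma_{\N}^{2/3}/n^{2/3}$, which overshoots the allowed contribution $C_{\min}\Sigma_{\N}w_x^{2/3}/n^{4/3}$ by a factor of order $w_x^{2/3}n^{2/3}$. What is actually needed --- and what the paper invokes just before Equation~\ref{e:bound.B.2} --- is that the bonus $A/\sqrt{T_{x,t}}$ in the redefined $B_{x,t}$ forces every stratum to receive $T_{x,n}\geq\lfloor A w_x^{2/3}n^{2/3}\rfloor$ pulls. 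Note that this $n^{2/3}$-level forced exploration is not optional for you either: your Step 2 already needs it to convert the concentration half-width $A/\sqrt{T_{x,t}}$ into the slack of order $\sqrt{A}\,w_x^{-1/3}n^{-1/3}$ used in the comparison step. So either invoke that forced-exploration level to handle the low-variance strata, or, more cleanly, avoid the double inversion altogether as the paper does.
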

\vspace{-0.2cm}
The bound in this Theorem is slightly sharper than in the original paper. The (improved) proof is in the Supplementary Material, see Appendix~\ref{proof:algo}

We will use in the sequel the bound in this Theorem as a benchmark for the efficiency of any algorithm that adapts the partition. %More precisely, we compare the pseudo-risk of the strategies that we propose with the pseudo-risk of algorithm MC-UCB launched on partitions of the hierarchical partitioning of the domain.
The aim will be to construct a strategy whose pseudo-regret is almost as small as the minimum of this bound over a large class of partitions (e.g.~the partitions defined by the hierarchical partitioning). In paper~\citep{carpentier2012minimax}, it was proved that this bound is minimax optimal which makes it a sensible benchmark.%prove that our strategies are almost as efficient as MC-UCB launched on the partition (of the hierarchical partitioning of the domain) that minimises this bound.

The bound in this Theorem depends on two terms. The first, $\frac{\Sigma_{\N}^2}{n}$, which is the oracle optimal variance of the estimate on partition $\N$, decreases with the number of strata, and more specifically if the strata are ``well-shaped'' (i.e.~more strata where the variations of $g$ and $s$ are larger). On the other hand, the second term, $ \sum_{x \in \N} \frac{w_x^{2/3}}{n^{4/3}}$, increases when the partition is more refined. There are however two extremal situations for this term, leading to two very different behaviours with the number of strata. If the strata have all the same measure $\frac{1}{K_{\N}}$ where $K_{\N}$ is the number of strata in partition $\N$, then $\sum_{x \in \N} \frac{w_x^{2/3}}{n^{4/3}} = \frac{K_{\N}^{1/3}}{n^{4/3}}$. Now if the partition is very localised (i.e.~exponential decrease of the measure of the strata), then \textit{whatever} the number of strata, $\sum_{x \in \N} \frac{w_x^{2/3}}{n^{4/3}}$ is of order $O(\frac{1}{n^{4/3}})$, and the number of strata $K_{\N}$ has no more influence than a constant.\\
These two facts enlighten the importance of adapting the \textit{shape} of the partition to the function by having potentially strata of heterogeneous measure.%This bound is thus more refined than the original one, and is thus more suitable to really adapt to the trade-off in terms of shape and number of strata, for building the optimal partition of the domain.

%The bound on the pseudo-risk of MC-UCB launched on a partition $\N$ is $\frac{\Sigma_{\N}^2}{n} + C_{\min} \Sigma_{\N_n} \sum_{x \in \N} \frac{w_x^{2/3}}{n^{4/3}}$. A partition

% for the adaptation of a partition to the function and the available budget: the objective is to design an algorithm whose pseudo-risk is smaller than this bound on the pseudo-risk of MC-UCB on any partitions of the hierarchical partitioning.

\vspace{-0.2cm}
\section{Algorithm MC-ULCB}\label{s:algo.mculcb}
\vspace{-0.2cm}

\subsection{Additional definitions for algorithm MC-ULCB}
\vspace{-0.2cm}

Let $\delta>0$. We first define $A = 2\sqrt{2(1 + 3b + 4f_{\max})\log(4n^2(3f_{\max})^3/\delta)}$ where $f_{\max}$ and $b$ are chosen such that they satisfy Assumption~\ref{ass:boundness}. Set also for any $h$, $t_{h} = \lfloor A w_{h}^{2/3} n^{2/3}  \rfloor$.

Let $[h,i]$ be a node of the hierarchical partitioning.

Assume that the children $([h+1,2i], [h+1,2i+1])$ of node $[h,i]$ have received at least $t_{h+1}$ samples (and stratum $\X_{[h,i]}$ has received at least $2t_{h+1}$ samples). The standard deviations $\hsi_{[h+1,j]}$ (for $j \in \{2i,2i+1\}$) are computed using the first $t_{h+1}$ samples only:
\begin{equation}\label{eq:estim-var2}
\vspace{-0.2cm}
\hsi_{[h+1,j]} = \sqrt{\frac{1}{t_{h+1}} \sum_{u=1}^{t_{h+1}} (X_{[h+1,j],u}- \frac{1}{t_{h+1}} \sum_{k=1}^{t_{h+1}} X_{[h+1,j],k})^2},
\vspace{-0.2cm}
\end{equation}
where $X_{[h+1,j],u}$ is the $u$-th sample in stratum $\X_{[h+1,j]}$. %We remind that the samples $\{X_{[h+1,2i],u} \}_u$ and $\{X_{[h+1,2i+1],u} \}_u$ are subsets of the samples $\{X_{[h,2],u} \}_u$ by definition of the stratification and of the sample schemes.\\
We also introduce another estimate for the standard deviation $\hsi_{[h,i]}$, namely $\tilde \si_{[h,i]}$, which is computed with the first $2 t_{h+1}$ samples in stratum $\X_{[h,i]}$ (and not with the first $t_h$ samples as $\hsi_{[h,i]}$):
\begin{equation}\label{eq:estim-var22}
\vspace{-0.2cm}
\tilde \si_{[h,i]} = \sqrt{\frac{1}{2t_{h+1}} \sum_{u=1}^{2t_{h+1}} (X_{[h,i],u}- \frac{1}{2t_{h+1}} \sum_{k=1}^{2t_{h+1}} X_{[h,i],k})^2}.
\vspace{-0.2cm}
\end{equation}
We use this estimate for technical purposes only.

We now define by induction the value $r$ for any stratum $\X_{[h,i]}$. We initialise the $r$ when there is enough points i.e.~at least $t_{0}$ points in stratum $\X_{[0,0]}$, by $r_{[0,0]} = \hsi_{[0,0]} - \frac{c\sqrt{A}}{n^{1/3}}$. Assume that $r_{[h,i]}$ is defined. Whenever there are at least $t_{[h+1]}$ points in strata $\X_{[h+1,j]}$ for $j \in \{2i,2i+1 \}$, we define the value $r_{[h+1,j]}$ for $j \in \{2i,2i+1\}$ (and $j^-$ the other) as
\begin{small}
\begin{align}
\vspace{-0.2cm}
&r_{[h+1,j]} = \Big( \frac{w_{h+1} \hsi_{[h+1,j]} + c\sqrt{A}\frac{w_{h+1}^{2/3}}{n^{1/3}}}{w_{h} \tilde \si_{[h,i]}} \Big) r_{[h,i]}\label{eq:r}\\
\vspace{-0.2cm}
&\times \ind{w_{h+1} \hsi_{[h+1,j^-]} - w_{h+1} \hsi_{[h+1,j]} \geq 2c \sqrt{A}\frac{w_{h+1}^{2/3}}{n^{1/3}}} \nonumber\\
\vspace{-0.2cm}
&+ \Big( \frac{w_{h+1} \hsi_{[h+1,j]} - c\sqrt{A}\frac{w_{h+1}^{2/3}}{n^{1/3}}}{w_{h} \tilde  \si_{[h,i]}} \Big) r_{[h,i]}\nonumber\\
\vspace{-0.2cm}
&\times \ind{w_{h+1} \hsi_{[h+1,j^-]} - w_{h+1} \hsi_{[h+1,j]} \leq - 2c\sqrt{A}\frac{w_{h+1}^{2/3}}{n^{1/3}}} \nonumber\\
\vspace{-0.2cm}
&+ \min\Big( \frac{w_{h+1}  \min\big( \hsi_{[h+1,j]},\hsi_{[h+1,j^-]}\big) + c\sqrt{A}\frac{w_{h+1}^{2/3}}{n^{1/3}}}{w_{h} \tilde \si_{[h,i]}} , \frac{1}{2} \Big)r_{[h,i]} \nonumber\\
\vspace{-0.2cm}
&\times \ind{|w_{h+1} \hsi_{[h+1,j^-]} - w_{h+1} \hsi_{[h+1,j]}| \leq  2c\sqrt{A}\frac{w_{h+1}^{2/3}}{n^{1/3}}}, \nonumber
\vspace{-0.2cm}
\end{align}
\end{small}
where  $c = (8\tilde \Sigma +1)\sqrt{A}$, $\tilde \Sigma = \hsi_{[0,0]} + \frac{\sqrt{A}}{n^{1/3}}$. It is either a (proportional) upper, or a (proportional) lower confidence bound on $w_{[h+1,j]}\si_{[h+1,j]}$. It is a (proportional) upper confidence bound for the stratum $[h+1,j]$ that has the smallest empirical standard deviation, and a (proportional) lower confidence bound for the other. If the quantities $w_{[h+1,j]}\hsi_{[h+1,2i]}$ and $w_{[h+1,j]}\hsi_{[h+1,2i+1]}$ are too close, we set the same value to both sub-strata. The quantities $r_{[h,i]}$ are key elements in algorithm MC-ULCB, and they account for the name of the algorithm (Monte Carlo Upper Lower Confidence Bound).

Additional to that, we define the technical quantities $H = \lfloor \frac{\log\big((3 f_{\max})^3 n\big)}{\log(2)} \rfloor +1$, $B=38\sqrt{2A} c  (1 + \frac{1}{\tilde\Sigma})$ and $C_{\max}'= \max(B,14 H c\sqrt{A}) + 2\sqrt{A}$.

\vspace{-0.2cm}
\subsection{Sampling Schemes}
\vspace{-0.2cm}

The algorithm MC-ULCB that we will consider in the next Subsection works by updating a partition of the domain, refining it more where it seems necessary (i.e.~where the algorithms detects that $g$ or $s$ have large variations). In order to do that, the algorithm needs to split some nodes $[h,i]$ in their children nodes. We thus need guarantees on the number of samples in each child node $[h+1,2i]$ and $[h+1,2i+1]$, when there are $t$ samples in $[h,i]$. More precisely, we would like to have, up to rounding issues, $t/2$ samples in each child node.\\
The problem is that usual sampling procedures do not guarantee that. In particular, if one chooses the naive idea for sampling stratum $\X_{[h,i]}$, i.e.~collect $t$ samples independently at random according to $\nu_{\X_{[h,i]}}$, then there is no guarantees on the exact numbers of samples in $[h+1,2i]$ and $[h+1,2i+1]$. However, we would like that the sampling scheme that we use conserve the nice properties of sampling according to $\nu_{\X_{[h,i]}}$, i.e.~that the empirical mean built on the samples remains an unbiased estimate of $\mu_{[h,i]}$ and that it has a variance smaller than or equal to $\si_{[h,i]}^2/t$.\\
This is one of the reasons why we need alternative sampling schemes

\vspace{-0.2cm}
\paragraph{The Balanced Sampling Scheme}
\vspace{-0.1cm}

We first describe what we call Balanced Sampling Scheme (BSS). %We will use it for the main algorithm that we describe in this paper.

We design this sampling scheme in order to be able to divide at any time each stratum, so that at any time, the number of points in each sub-stratum is proportional to the measure of the sub-stratum (up to one sample of difference).% This means that we need to sample uniformly on the domain, almost in a low-discrepancy way.

The proposed methodology is the following recursive procedure. Consider a stratum $\X_{[h,i]}$, indexed by node $[h,i]$ and that has already been sampled according to the BSS $t$ times. It has two children in the hierarchical partitioning, namely $[h+1, 2i]$ and $[h+1, 2i+1]$. If they have been sampled a different number of times, e.g.~$T_{[h+1,2i]} < T_{[h+1,2i+1]}$, we choose the child that contains the smallest number of points, e.g.~$[h+1, 2i+1]$, and apply BSS to this child. If the number of points in each of these nodes is equal, i.e.~$T_{[h+1,2i]} = T_{[h+1,2i+1]}$, we choose uniformly at random one of these two children, and apply BSS to this child. Then we iterate the procedure in this node, until for some depth $h+l$ and node $j$, one has $T_{[h+l,j]} = 0$. Then when $T_{[h+l,j]} = 0$, sample randomly a point in stratum $\X_{[h+l,j]}$, according to $\nu_{\X_{[h+l,j]}}$. This provides the $(t+1)$th sample.

We provide in Figure~\ref{fig:sampling} the pseudo-code of this recursive procedure.
\begin{figure}[ht]
\vspace{-0.2cm}
\begin{center}
\begin{minipage}{8cm}
\bookbox{
\begin{algorithmic}
\STATE \hspace{2cm} $X = ${\bf BSS}$([p,j])$
\STATE   {\bf if} $T_{[p+1, 2j]} \neq T_{[p+1, 2j+1]}$ {\bf then}
\STATE \hspace{0.5cm} {\bf return} {\bf BSS}$\big(\arg \min(T_{[p+1,2j]}, T_{[p+1,2j+1]})\big)$
\STATE \hspace{0cm}  {\bf else if} $T_{[p+1, 2j]} = T_{[p+1, 2j+1]} >0$ {\bf then}
\STATE \hspace{0.5cm} {\bf return} {\bf BSS}$\big( [p+1, 2j + \mathcal B(1/2)\big)$
\STATE \hspace{0cm}  {\bf else}
\STATE \hspace{0.5cm} {\bf return} $X \sim \nu_{\X_{[p,j]}}$
\STATE \hspace{0cm}  {\bf endif}
\end{algorithmic}}
\end{minipage}
\vspace{-0.2cm}
\caption{Recursive BSS procedure. $\mathcal B(1/2)$ is a sample of the Bernouilli distribution of parameter $1/2$ (i.e.~we sample at random among the two children strata).}\label{fig:sampling}
\vspace{-0.8cm}
\end{center}
\end{figure}
\noindent
An immediate property is that if stratum $[h,i]$ is sampled $t$ times according to the BSS, any descendant stratum $[p,j]$ of $[h,i]$ is such that $T_{[p,j]} \geq \lfloor \frac{w_p}{w_h} t \rfloor \geq \frac{w_p}{w_h} t - 1$.

We also provide the following Lemma providing properties of an estimate of the empirical mean when sampling with the BSS.
\vspace{-0.2cm}
\begin{lemma}\label{lem:uss}
\vspace{-0.2cm}
 Let $\X_{[h,i]}$ be a stratum where one samples $t$ times according to the BSS. Then the empirical mean $\hmu_{[h,i]}$ of the samples is such that
\vspace{-0.6cm}
\begin{equation*}
 \E [\hmu_{[h,i]}] = \mu_{[h,i]},
 \quad and \quad
 \V [\hmu_{[h,i]}] \leq \frac{\si_{[h,i]}^2}{t}.
\vspace{-0.5cm}
\end{equation*}
\vspace{-0.4cm}
\end{lemma}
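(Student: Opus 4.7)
The plan is to establish both equations simultaneously by induction on the sample budget $t$, exploiting the recursive structure of BSS. For the base case $t=1$, note that when BSS is first called on $[h,i]$ we have $T_{[h+1,2i]}=T_{[h+1,2i+1]}=0$, so the pseudo-code drops directly into its last branch and draws $X_1\sim\nu_{\X_{[h,i]}}$; combined with $\epsilon_1\sim\loi_{X_1}$ and the definitions of $g$ and $s$, the laws of total expectation and total variance then give $\E[F(X_1,\epsilon_1)]=\mu_{[h,i]}$ and $\V[F(X_1,\epsilon_1)]=\sigma^2_{[h,i]}$ with equality.

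For the inductive step, fix $t\ge 2$ and let $N_0,N_1$ be the numbers of samples landing in $\X_{[h+1,2i]}$ and $\X_{[h+1,2i+1]}$ respectively after $t$ BSS calls on $[h,i]$; abbreviate $\mu_j:=\mu_{[h+1,2i+j]}$ and $\sigma_j^2:=\sigma^2_{[h+1,2i+j]}$. The first step I would carry out is a recursive decomposition: the ordered sub-sequence of samples landing in the $j$-th child is a BSS run on that child with $N_j$ calls, and the two sub-sequences are conditionally independent given $(N_0,N_1)$. This follows by construction, since recursing into child $j$ simply invokes BSS on that child and the counts of its descendants change only via samples passing through it. Next I would record, by inspecting the dynamics of $(N_0,N_1)$ together with the symmetry of BSS under swapping the two children, that $|N_0-N_1|\le 1$, $\E[N_0]=\E[N_1]=t/2$, $\V[N_0]=0$ when $t$ is even, and $\V[N_0]=1/4$ when $t$ is odd. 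In particular $N_0,N_1\ge 1$ for $t\ge 2$, so the inductive hypothesis applies to each child's empirical mean $Y_j$ and gives $\E[Y_j\mid N_j]=\mu_j$ and $\V[Y_j\mid N_j]\le\sigma_j^2/N_j$. Writing $\hat\mu_{[h,i]}=(N_0Y_0+N_1Y_1)/t$ and conditioning on $(N_0,N_1)$, unbiasedness is then immediate from Assumption~\ref{ass:stratumequal} (so $\mu_{[h,i]}=(\mu_0+\mu_1)/2$) together with $\E[N_0]=\E[N_1]=t/2$. For the variance, the law of total variance and the conditional independence of $Y_0,Y_1$ yield
\[
\V[\hat\mu_{[h,i]}]\le \frac{\sigma_0^2+\sigma_1^2}{2t}+\frac{\V[N_0]\,(\mu_0-\mu_1)^2}{t^2},
\]
to be compared with the identity $\sigma^2_{[h,i]}=(\sigma_0^2+\sigma_1^2)/2+(\mu_0-\mu_1)^2/4$ obtained by applying the law of total variance to the partition of $\X_{[h,i]}$ into its two children. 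Dividing this identity by $t$ and using $\V[N_0]/t\le 1/4$ yields the claimed bound and closes the induction.

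The main obstacle, as I see it, is making the recursive decomposition rigorous: one has to factor the BSS randomness into the ``which child is visited'' dynamics (determined by the running count state and the tie-breaking coin flips) and the ``where inside a leaf the sample lands'' draws (from $\nu$), and then verify that, conditional on $(N_0,N_1)$, the two sub-processes really are independent honest BSS runs on the two children. Once this clean separation is in place, the remainder of the argument is routine total-variance bookkeeping.
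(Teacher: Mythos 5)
Your proof is correct, and it takes a genuinely different route from the paper's. The paper argues non-inductively: writing the budget in binary, $t=\sum_{p}A_p2^p$, it shows that the BSS samples can be relabeled into complete stratified layers --- for each $p$ with $A_p=1$, a group of $2^p$ samples containing exactly one draw from $\nu$ restricted to each depth-$(h+p)$ descendant of $[h,i]$ --- and that, conditionally on $t$, all relabeled samples are independent. The empirical mean is then a convex combination of independent one-sample-per-stratum stratified estimates; each layer is unbiased for $\mu_{[h,i]}$, and each layer's variance is controlled by the law of total variance (refining a partition never increases the average within-stratum variance), which after summing over layers gives exactly $\si_{[h,i]}^2/t$. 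You instead induct on $t$ through a single two-child split, conditioning on the counts $(N_0,N_1)$ and applying the law of total variance once per level; this makes the slack in the inequality explicit as the term $\V[N_0](\mu_0-\mu_1)^2/t^2\le(\mu_0-\mu_1)^2/(4t)$, which vanishes for even $t$. Both arguments ultimately rest on the same two structural facts: balancedness of the counts ($|N_0-N_1|\le 1$, hence $N_0=N_1=t/2$ for even $t$ and $\V[N_0]=1/4$ for odd $t$), and the factorization of the BSS randomness into routing decisions (explicit coin flips, plus the uniform child-assignment of a point drawn directly from $\nu_{\X_{[h,i]}}$) independent of the within-stratum position draws. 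Your ``main obstacle'' is thus precisely the conditional-independence/relabeling step the paper also has to make, except that you need it only for two children at one level rather than globally across all depths, which makes it easier to state and verify rigorously; your version also localizes exactly where the inequality loses (the between-children mean-gap term at odd budgets). What the paper's global decomposition buys in exchange is that it avoids induction and any bookkeeping of the law of $(N_0,N_1)$, exhibiting the estimator explicitly as a mixture of stratified estimates --- at the price of a heavier relabeling argument whose independence claim is stated rather informally there.
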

The proof of this Lemma is in the Supplementary Material (Appendix~\ref{proof:uss}). This Lemma also holds for the children nodes of $[h,i]$ (for a descendant $[p,j]$, it holds with $\lfloor \frac{w_p t}{w_h} \rfloor$ samples, since the procedure is recursive).

%This sampling scheme is thus efficient. It is meaningful to write the pseudo-risk on a partition where the samples in each node are collected according to the USS, since the variance of the estimate of the mean constructed with this sampling scheme is smaller than or equal to crude Monte-Carlo on the stratum.

\vspace{-0.2cm}
\paragraph{A variant of the BSS: the BSS-A procedure}

We now define a variant of the BSS: the BSS-A sampling scheme.

The reason why we need also this variant is that it is crucial, if two children of a node have obviously very different variances, to allocate more samples in the node that has higher variance. Indeed, the number of samples that one allocates to a node is directly linked to the amount of exploration that one can do of this node, and thus to the local refinement of the partitioning taht one may consider. But it is also necessary to be careful and have an allocation that is more efficient than uniform allocation, as it is not sure that it is a good idea to split the parent-node. In order to do that, we construct a scheme that uses upper confidence bounds for the less variating node, and lower confidence bounds for the most variating node: we use the $r_{[h,i]}$ that were defined for this purpose. We assume that these $r_{[h,i]}$ are defined in some sub-tree $\T^e$ of the hierarchical partitioning, and undefined outside. Using such an allocation is naturally less efficient than the optimal oracle allocation, but however more efficient than uniform allocation. We illustrate this concept in Figure~\ref{fig:samplingsch} and provide the pseudo-code in Figure~\ref{fig:sampling2}.

\begin{figure}[h]
\hspace{-1.5cm}
\includegraphics[width=11cm]{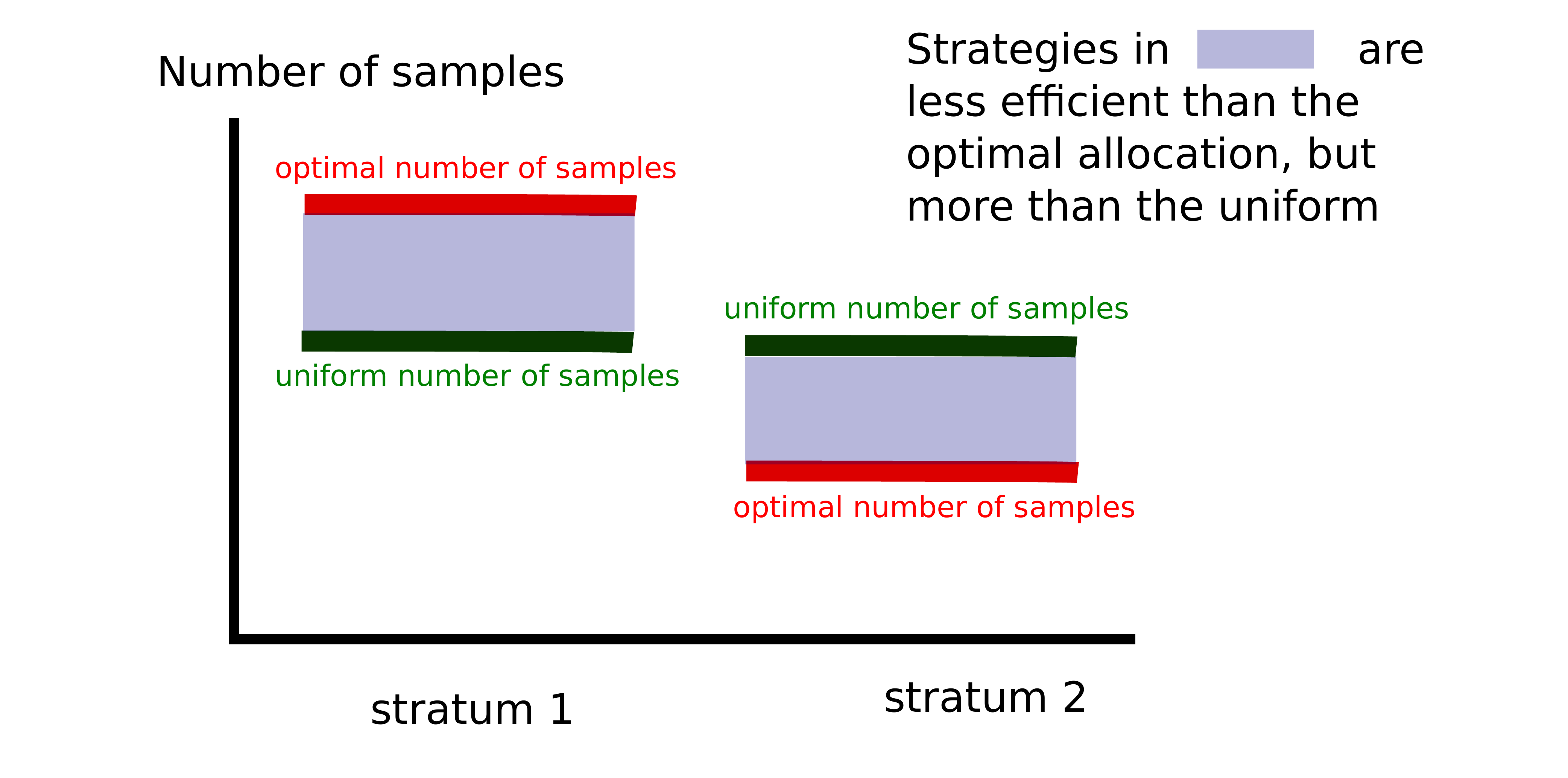}
\vspace{-1cm} \caption{With high probability, the children of each node in $\N_n$ are sampled a number of time that is in the gray zone by MC-ULCB.} \label{fig:samplingsch}
\hspace{-1cm}
\vspace{-0.5cm}
\end{figure}

\begin{figure}[h]
\vspace{-0.5cm}
\bookbox{
\begin{algorithmic}
\STATE \hspace{2.2cm} $X = ${\bf BSS-A}$([p,j], \T^e)$
\STATE \hspace{0.2cm}  {\bf if} $\{[p+1,2j], [p+1,2j+1]\} \in \T^e$ {\bf then}
\STATE \hspace{0.7cm} {\bf return} {\bf BSS-A}$\big(\arg \min(\frac{r_{[p+1,2j]}}{T_{[p+1,2j]}}, \frac{r_{[p+1,2j+1]}}{T_{[p+1,2j+1]}}), \T^e\big)$
\STATE \hspace{0.2cm}  {\bf else}
\STATE \hspace{0.7cm} {\bf return} $X = {\bf BSS}([p,j])$
\STATE \hspace{0.2cm}  {\bf endif}
\end{algorithmic}}
\vspace{-0.5cm}
\caption{Recursive BSS-A procedure.}\label{fig:sampling2}
\vspace{-0.5cm}
\end{figure}

\vspace{-0.2cm}

\subsection{Algorithm Monte-Carlo Upper-Lower Confidence Bound}\label{ss:MCULCB}

\vspace{-0.2cm}

We describe now the algorithm Monte-Carlo Upper-Lower Confidence Bound. It is decomposed in two main phases, a first Exploration Phase, and then an Exploitation Phase.

The {\bf Exploration Phase} uses Upper and Lower Confidence bounds for allocating correctly the samples. During this phase, we update an Exploration partition, that we write $\N_t^e$, and that is included in the hierarchical partitioning. When, in a stratum $[h,i] \in \N_t^e$, there are more than $t_h$ samples (also if the standard deviation of teh stratum is large enough), we update $\N_t^e$ by setting $\N_{t+1}^e = \N_t^e \bigcup [h+1,2i] \bigcup [h+1,2i+1] \setminus [h,i]$: we divide $[h,i]$ in its two children strata, and compute the $r$ corresponding to the children strata. The points are then allocated in the strata according to $\frac{r_{[h,i]}}{T_{[h,i],t}}$: a point is allocated in stratum $[h,i] \in \N_t^e$ if $\frac{r_{[h,i]}}{T_{[h,i],t}} \geq \frac{4 \tilde \Sigma}{n}$. All the points are allocated inside each stratum $[h,i] \in \N_t^e$ according to the BSS procedure.

The Exploration Phase stops at time $T$, when every node $[h,i] \in \N_T^e$ is such that $\frac{r_{[h,i]}}{T_{[h,i],T}+1} < \frac{4 \tilde \Sigma}{n}$. We write $\T_T^e$ the tree that is composed of all the nodes in $\N_T^e$ and of their ancestors. The algorithm selects in this tree a partition, that we write $\N_n$, and that is an empirical minimiser (over all partitions in $\T_T^e$) of the upper bound on the regret of algorithm MC-UCB.

Finally, we perform the {\bf Exploitation Phase} which is very similar to launching algorithm MC-UCB on $\N_n$. We pull the samples in the strata of $\N_n$ according to the BSS-A sampling scheme (described in Figure~\ref{fig:sampling2}). We compute the final estimate $\hat \mu_n$ of $\mu$ as a stratified estimate with respect to the deepest partition of $\T_T^e$, i.e.~$\N_T^e$:
\vspace{-1cm}
\begin{align}\label{eq:mea2}
\vspace{-1cm}
\hmu_n = \sum_{[h,i] \in \N_T^e} w_{h} \hmu_{[h,i],n},
\vspace{-1cm}
\end{align}
where $\hmu_{[h,i],n}$ is the empirical mean of all the samples in stratum $\X_{[h,i]}$.

We now provide the pseudo-code of algorithm MC-ULCB in Figure~\ref{f:m-algorithm-2}.
\begin{figure}[ht]
\vspace{-0.5cm}
\bookbox{
\begin{algorithmic}
\STATE \textbf{Input:} $f_{\max}$, $b$ and $\delta$.
\STATE \textbf{Initialization:}  Pull $t_{0}$ samples by BSS($[0,0]$). Set $\N_{t}^e = \{[0,0]\}$.
\STATE \textbf{Exploration Phase:}
\WHILE{$\exists [h,i] \in \N^e_t : \frac{r_{[h,i]}}{T_{[h,i]},t} > \frac{4 \tilde \Sigma}{n}$}
\STATE Take a sample in BSS($[h,i]$).
\IF{$\exists [h,i]\in \N^e_t : \Big\{ T_{[h,i],t} = 2 t_{h+1}, w_h \hsi_{[h,i],t} \geq 6Hc\sqrt{A}\frac{w_h^{2/3}}{n^{1/3}} , h < H\Big\}$}
\STATE $\N^e_{t+1} = \N^e_t \bigcup [h+1,2i] \bigcup [h+1,2i+1] \setminus [h,i]$
\STATE Compute $r_{[h+1,2i]}$ and $r_{[h+1,2i+1]}$
\ENDIF
\ENDWHILE
\STATE Select $\N_n$ such that $\N_n = \arg\min_{\N \in \T^e_n} \Big( \hat \Sigma_{\N} + (C_{\max}' - \sqrt{A}) \sum_{y \in \N} \frac{w_y^{2/3}}{n^{1/3}} \Big)$
\STATE $T=t$
\STATE {\bf Exploitation Phase:}
\FOR{$t = T+1 ,\ldots,n$}
%  \STATE Compute $\hsi_{[h,i]}$ for any $[h,i] \in \N_n$%with only the first $A w_h^{2/3} n^{2/3}$ points in leaf $[h,i] \in \N_n$
  \STATE Compute $B_{[h,i],t} = \frac{w_{h}}{T_{[h,i],t-1}} \Big( \hsi_{[h,i]} + \sqrt{\frac{A}{n^{1/3}}}  \Big)$ for any $[h,i] \in \N_n$
  \STATE Choose a leaf $[h,i]_t$ such that $[h,i]_t=\arg\max_{[p,j]\in \N_n} B_{[p,j],t}\quad$
  \STATE Pick a point according to BSS-A($[h,i]_t$)
\ENDFOR
\STATE \textbf{Output:} $\hmu_{n}$
\end{algorithmic}}
\vspace{-0.5cm}
\caption{The pseudo-code of the Tree-MC-UCB algorithm. The empirical standard deviations and means $\hsi_{[h,i]}$ and $\hmu_{n}$ and $\tilde \si_{[h,i]}$ are computed using Equation~\ref{eq:estim-var2},~\ref{eq:mea2} and~\ref{eq:estim-var22}. The value of $r_{[h,i]}$ is computed using Equation~\ref{eq:r}. The BSS algorithm is described in Figure~\ref{fig:sampling} and the BSS-A algorithm is described in Figure~\ref{fig:sampling2}.}\label{f:m-algorithm-2}
\vspace{-0.5cm}
\end{figure}

\vspace{-0.2cm}

\subsection{Main result}
\vspace{-0.2cm}

We are now going to provide the main result for the pseudo-risk of algorithm MC-ULCB.
\vspace{-0.1cm}
\begin{theorem}\label{th:algo.2}
Under Assumption~\ref{ass:stratummeasurables} and~\ref{ass:stratumequal} for the strata and~\ref{ass:boundness} for the function $F$, the pseudo-risk of algorithm MC-ULCB is bounded with probability $1-\delta$ as
\begin{small}
\begin{align*}
% L_n = \sum_{x \in \N_n^e} \frac{(w_x \si_x)^2}{T_{x,n}} \leq \frac{\Sigma_{\N_n}^2}{n} + B \Sigma_{\N_n} \sum_{y\in{\N_n}}\frac{w_y^{2/3}}{n^{1/3}} \leq \min_{\N} \Bigg[\frac{\Sigma_{\N}^2}{n} + C_{\max}' \Sigma_{\N_n} \sum_{y\in{\N}}\frac{w_y^{2/3}}{n^{1/3}} \Bigg],
 &L_n(\alg_{MC-ULCB}) \leq  \sum_{[h,i] \in \N_n} \frac{(w_h \si_{[h,i]})^2}{T_{[h,i],n}}\\
&\leq \min_{\N} \Bigg[\frac{\Sigma_{\N}^2}{n} + C_{\max}' \Sigma_{\N} \sum_{[h,i]\in{\N}}\frac{w_h^{2/3}}{n^{4/3}} + C_{\max}'^2 \Big(\sum_{[h,i]\in{\N}}\frac{w_h^{2/3}}{n^{4/3}}\Big)^2
\Bigg],
\end{align*} 
\end{small}
where $\min$ means minimum over all partitions of the hierarchical partitioning, and $C_{\max}' \leq 320\sqrt{(1 + 3b + 4f_{\max})\log(4n^2(3f_{\max})^3/\delta)} (1/\sigma_{[0,0]} + 1)(8\sigma_{[0,0]} + 1)\log\big((3 f_{\max})^3 n\big)$.
\end{theorem}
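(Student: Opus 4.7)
\textbf{Proof plan for Theorem~\ref{th:algo.2}.}

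The plan is to argue in four stages: (i) set up a favorable concentration event, (ii) show the $r_{[h,i]}$ are valid proportional UCB/LCBs on $w_h\si_{[h,i]}$, (iii) control what happens in the exploration phase and the selected partition $\N_n$, and (iv) bound the pseudo-risk during exploitation by a MC-UCB style argument lifted to $\N_n$.

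First, I would define a high-probability event $\Ev$ on which, for every node $[h,i]$ that is ever visited by the algorithm (at most polynomially many, since depth is capped at $H$), the empirical standard deviations computed in \eqref{eq:estim-var2} and \eqref{eq:estim-var22} concentrate around their true values: $|\hsi_{[h,i]}-\si_{[h,i]}|\leq \sqrt{A/T_{[h,i]}}$ and similarly for $\tilde\si_{[h,i]}$, with $A$ the parameter in the statement. This uses Assumption~\ref{ass:boundness} (which gives sub-Gaussian-type tails for both centered samples and their squares) plus Lemma~\ref{lem:uss}, which says that BSS preserves unbiasedness and the $\si^2/t$ variance bound, so the BSS samples behave just like i.i.d.\ draws from $\nu_{\X_{[h,i]}}$ for concentration purposes; a union bound over the $O(n)$ candidate nodes absorbs into the $\log$ in $A$. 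On $\Ev$, which holds with probability $1-\delta$, all subsequent bounds are deterministic.

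Second, I would prove by induction on depth that on $\Ev$, each $r_{[h,i]}$ defined during the exploration satisfies
\[
r_{[h,i]} \;\in\; \big[\, w_h\si_{[h,i]} - c\sqrt{A}\, w_h^{2/3}/n^{1/3},\; w_h\si_{[h,i]} + c\sqrt{A}\, w_h^{2/3}/n^{1/3}\,\big],
\]
with the correct sign depending on which sibling has the larger empirical std. The base case $r_{[0,0]}=\hsi_{[0,0]}-c\sqrt{A}/n^{1/3}$ is immediate from concentration. The inductive step splits into the three cases of \eqref{eq:r}. When sibling stds are well-separated, the ``$+c\sqrt{A}w^{2/3}/n^{1/3}$'' arm gives a UCB on $w_{h+1}\si_{[h+1,j]}/(w_h\tilde\si_{[h,i]})$ and the ``$-$'' arm a LCB, so multiplying by the inductively valid $r_{[h,i]}$ preserves the UCB/LCB property up to a propagated error that is telescoping in depth (this is why $c=(8\tilde\Sigma+1)\sqrt{A}$ contains the $\tilde\Sigma$ factor — to absorb the propagated multiplicative perturbation from $\tilde\si_{[h,i]}$ vs.\ $\si_{[h,i]}$). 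The indistinguishable case (last branch of \eqref{eq:r}) assigns the safe value $\min(\ldots,1/2)\cdot r_{[h,i]}$, which trivially remains between the needed bounds.

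Third, I would exploit the stopping rule. When exploration halts, each $[h,i]\in\N^e_T$ has $T_{[h,i],T}\geq n r_{[h,i]}/(4\tilde\Sigma)-1$, and by Step~2 this is $\geq n w_h\si_{[h,i]}/(4\Sigma_{\N^e_T})$ up to the $c\sqrt{A}w_h^{2/3}/n^{1/3}$ slack — i.e., up to lower-order terms the allocation on $\N^e_T$ is the optimal oracle one. Nodes not split (because $w_h\hsi_{[h,i]}<6Hc\sqrt{A}w_h^{2/3}/n^{1/3}$) have tiny true std, so refining beyond them cannot appreciably reduce $\Sigma_\N$: for any partition $\N$ of the hierarchical tree, its ``projection'' $\tilde\N$ onto $\T^e_T$ (collapsing unexplored subtrees to their ancestors in $\T^e_T$) satisfies $\Sigma_{\tilde\N}\leq\Sigma_\N + O(C'_{\max}\sum_{y\in\N}w_y^{2/3}/n^{1/3})$. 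Combined with the selection rule $\N_n=\arg\min_{\N\in\T^e_T}\bigl(\hat\Sigma_\N+(C'_{\max}-\sqrt{A})\sum w^{2/3}/n^{1/3}\bigr)$ and the $\hat\Sigma\leftrightarrow\Sigma$ concentration, this gives, for every $\N$ in the hierarchical partitioning,
\[
\Sigma_{\N_n} \;\leq\; \Sigma_{\N} + C'_{\max}\sum_{[h,i]\in\N}\frac{w_h^{2/3}}{n^{1/3}}.
\]

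Finally, on the exploitation phase, BSS-A on $\N_n$ allocates at least $T_{[h,i],n}\geq n w_h\si_{[h,i]}/\Sigma_{\N_n} - C'_{\max}w_h^{2/3}n^{2/3}$ samples to each $[h,i]\in\N_n$, by the same UCB/LCB argument that proves Theorem~\ref{prop:m-regret}. Substituting into $L_n=\sum w_h^2\si_{[h,i]}^2/T_{[h,i],n}$ and Taylor-expanding $1/(x-y)\leq 1/x + y/x^2 + \ldots$, the leading term is $\Sigma_{\N_n}^2/n$ and the correction is $C'_{\max}\Sigma_{\N_n}\sum w_h^{2/3}/n^{4/3}$ plus its square. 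Substituting the bound on $\Sigma_{\N_n}$ from Step~3 yields the stated min over $\N$, with the quadratic $C'^{\,2}_{\max}(\sum w^{2/3}/n^{4/3})^2$ term absorbing the cross-products. The main obstacle is Step~2: making the inductive bound on $r_{[h,i]}$ tight enough that the compounding across depths $1,\ldots,H$ only costs a $\log n$ factor (hidden in $C'_{\max}$) rather than an exponential-in-$H$ blow-up; this is exactly why $c$ is chosen proportional to $\tilde\Sigma\sqrt{A}$ and why $\hsi_{[h,i]}$ is compared against the fresh-sample estimate $\tilde\si_{[h,i]}$ in \eqref{eq:r} rather than the stale $\hsi_{[h,i]}$, ensuring independence between the quantities appearing in the induction.
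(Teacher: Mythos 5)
Your overall architecture (concentration event, inductive control of the $r$-values, selection of $\N_n$, MC-UCB-style exploitation bound, final assembly via $\Sigma_{\N_n}\leq\Sigma_\N+C_{\max}'\sum w^{2/3}/n^{1/3}$) does mirror the paper's proof, but two of your steps have genuine gaps.

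First, you never prove the first inequality of the theorem, $L_n(\alg_{MC-ULCB}) \leq \sum_{[h,i]\in\N_n}(w_h\si_{[h,i]})^2/T_{[h,i],n}$, which the paper explicitly flags as non-trivial. The estimate $\hmu_n$ is stratified over the \emph{deepest} partition $\N^e_T$ (Equation~\ref{eq:mea2}), and inside each selected node $[h,i]\in\N_n$ the samples are spread over its descendant leaves by BSS-A, not uniformly according to $\nu_{\X_{[h,i]}}$. Hence the pseudo-risk is $\sum_{x\in\N^e_T}(w_x\si_x)^2/T_{x,n}$, and one must show this is dominated by the same expression on the coarser partition $\N_n$. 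The paper does this with Lemmas~\ref{lem:egalite} and~\ref{lem:regtree}: BSS-A splits the budget between sibling descendants proportionally to their $r$-values up to one sample, and since on $\xi$ the $r$-ratio of two siblings brackets the true ratio $w_{y_1}\si_{y_1}/(w_{y_1}\si_{y_1}+w_{y_2}\si_{y_2})$ from the correct side (UCB for the smaller, LCB for the larger), an induction down the tree gives $\sum_{y\in\N_x}(w_y\si_y)^2/T_{y,n}\leq (w_x\si_x)^2/T_{x,n}$. Your plan substitutes the exploitation allocation directly into ``$L_n=\sum_{[h,i]\in\N_n} w_h^2\si_{[h,i]}^2/T_{[h,i],n}$'', which silently assumes exactly what has to be proven.

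Second, your Step~2 sandwich, $r_{[h,i]}\in[w_h\si_{[h,i]}-c\sqrt{A}w_h^{2/3}/n^{1/3},\,w_h\si_{[h,i]}+c\sqrt{A}w_h^{2/3}/n^{1/3}]$, is not achievable: the recursion~\ref{eq:r} multiplies a ratio at every level, so the errors compound multiplicatively, and in the ``indistinguishable siblings'' branch both children receive the same value, which cannot be additively close to both $w\si$'s in general. What the paper actually proves (Lemma~\ref{lem:r1} and Corollary~\ref{lem:r}) is the multiplicative sandwich $\frac{2H-h}{2H}\big(w_{[h,i]}\si_{[h,i]}-2c\sqrt{A}w_{[h,i]}^{2/3}/n^{1/3}\big)\leq r_{[h,i]}\leq \frac{H+2h}{H}\big(w_{[h,i]}\si_{[h,i]}+2c\sqrt{A}w_{[h,i]}^{2/3}/n^{1/3}\big)$, i.e.\ $r$ tracks $w\si$ only up to constants in roughly $[1/2,3]$; the splitting threshold $6Hc\sqrt{A}w_h^{2/3}/n^{1/3}$ is precisely what keeps the per-level distortion at $1\pm\frac{1}{2H}$. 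This invalidates your Step~3 claim that the exploration allocation is ``the optimal oracle one up to lower-order terms.'' The paper's argument has the opposite structure: Lemma~\ref{lem:bosup} shows exploration allocates to each node at most a constant \emph{fraction} (about $3/4$) of its oracle share and at most $n/4$ samples in total (Lemma~\ref{lem:finition}), so that the exploitation phase retains enough budget for the MC-UCB mechanism to drive every node of $\N_n$ up to the near-oracle level (Step~3 of Lemma~\ref{lem:explo}, ``the Exploration Phase has not deteriorated the performances''). Without this under-shooting argument, the constant-factor slack in $r$ would surface as a constant-factor loss in the final bound rather than the $O(n^{-4/3})$ correction terms claimed in Theorem~\ref{th:algo.2}.
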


The proof of this result is in the Supplementary Material (Appendix~\ref{app:MC-ULCB}).

A first remark on this result is that even the first inequality (i.e.~$L_n(\alg_{MC-ULCB}) \leq  \sum_{[h,i] \in \N_n} \frac{(w_h \si_{[h,i]})^2}{T_{x,n}}$) is not trivial since the algorithm does not sample at random according to $\nu_{\X_{[h,i]}}$ in the strata $[h,i] \in \N_n$, but according the BSS-A. It was necessary to do that since in order to select wisely $\N_n$, one should have explored the tree $\T_T^e$, and thus it was necessary to allocate the points in order to allow splitting of the nodes and adequate exploration.

Assume that $\min_{\mathcal N} \Sigma_{\N}$ is lower bounded, e.g.~the function $F$ is noisy (i.e.~the function $s$ is not almost surely equal to $0$). Then a second remark is that the second term in the final bound, namely $C_{\max}'^2 \Big(\sum_{[h,i]\in{\N}}\frac{w_h^{2/3}}{n^{4/3}}\Big)^2$, is negligible when compared to the second term, namely $\sum_{[h,i]\in{\N}}\frac{w_h^{2/3}}{n^{4/3}}$. Indeed, since $\si_{[0,0]}$ is bounded by Assumption~\ref{ass:boundness} by $f_{\max}$, we know that $\min_{\N} \Bigg[\frac{\Sigma_{\N}^2}{n} + C_{\max}' \Sigma_{\N} \sum_{[h,i]\in{\N}}\frac{w_h^{2/3}}{n^{4/3}}\Bigg]$ is smaller than $\frac{C_{\max}'f_{\max}+f_{\max}^2}{n}$, which implies that for one of the partitions $\N$ that realises this minimum, we have $C_{\max}'^2 \Big(\sum_{[h,i]\in{\N}}\frac{w_h^{2/3}}{n^{4/3}} \leq C_{\max}'^2 \frac{(C_{\max}'f_{\max}+f_{\max}^2)^2}{n^2}$, which is negligible when compared to $n^{-4/3}$ and thus in particular $\sum_{[h,i]\in{\N}}\frac{w_h^{2/3}}{n^{4/3}}$.

\vspace{-0.2cm}
\subsection{Discussion}\label{ss:disc}
\vspace{-0.2cm}

{\bf Algorithm MC-ULCB does almost as well as MC-UCB on the best partition:} The result in Theorem~\ref{th:algo.2} states that algorithm MC-ULCB selects adaptively a partition that is almost a minimiser of the upper bound on the pseudo-risk of algorithm MC-UCB. It then allocates almost optimally the samples in this partition. Its upper bound on the regret is thus smaller, up to additional multiplicative term contained in $C_{\max}'$, than the upper bound on the regret of algorithm MC-UCB launched on an optimal partition of the hierarchical partitioning. The issue is that $C_{\max}'$ is bigger than the constant $C_{\min}$ for MC-UCB. More precisely, we have $C_{\max}' = C_{\min} \times C\log\big((3 f_{\max})^3n \big)$, where $C$ is a constant depending of $f_{\max}$ and $b$ (see bound on $C_{\max}'$ in Theorem~\ref{th:algo.2}). This additional dependency in $\log(n)$ is not an artifact of the proof and appears since we perform some model selection for selecting the partition $\N_n$. We do not know whether it is possible or not to get rid of it. Note however that a $\log$ factors already appears in the bound of MC-UCB, and that the question of whether it is or not needed remains open.
%and that $\Sigma_{\N_n}$ appears instead of $\Sigma_{\N}$. It is smaller than the constant $C_{\max}$ in Theorem~\ref{th:algo.1} for algorithm Deep-MC-UCB. We believe that it is very hard (impossible?) to manage to have the same constant in MC-UCB and MC-ULCB for the best partition.

{\bf The final partition $\N_n$:} Algorithm MC-ULCB refines more the partition $\N_n$ in parts of the domain where splitting a stratum $[h,i]$ in a sub-partition $\B_{[h,i],\N}$ is such that $w_{[h,i]} \si_{[h,i]} - \sum_{x \in \B_{[h,i],\N}} w_x \si_x$ is large. Note that this corresponds, by definition of the $\si_{[h,i]}$, to parts of the domain where $g$ and $s$ have large variations. We do not refine the partition in regions of the domain where this is not the case, since it is more efficient to have also as few strata as possible.

{\bf The sampling schemes:} The key-points in this paper are the sampling schemes. Indeed, we construct and use a sampling technique, the BSS, that is such that the samples are collected in a way that reminds low discrepancy sampling schemes\footnote{Although the samples are chosen randomly, the sampling scheme is such that we know in a deterministic and exact way the number of samples in each not too small part of the domain.} on the domain, and provide an estimate such that its variance is smaller than the one of crude Monte-Carlo. We also build another sampling scheme, BSS-A. This sampling scheme ensures that, with high probability, if two children strata have very different variances, then the one with higher variance is more sampled. At the same time, it ensures that if finally the decision of splitting a stratum is not taken, then the allocation in the stratum is still better than or as efficient as random allocation according to $\nu$ restricted to the stratum.

{\bf Evaluation of the precision of the estimate and confidence intervals:} An important question that one can ask here is on the prssibility of constructing a confidence interval around the estimate that we obtain. What we would suggest in this case is to upper bound the pseudo-risk of the estimate by $(\sum_{x \in \N_n} (w_x \hat \sigma_x + w_x^{2/3}/n^{1/3}))^2/n$, and construct a confidence interval considering this as a bound on the variance or the estimate, using e.g. Bennett's inequality. If e.g. the noise is symmetric, then the pseudo-risk equals the mean squared error, and the confidence interval is valid, and in particular asymptotically valid (see~\cite{rapp-tech-MC-UCB}). Also it is less wide (up to a negligible term) than the smallest valid confidence interval on the best (oracle) stratified estimate on the hierarchical partitioning (and then in particular than the one for the crude MC estimate). Indeed, the oracle variance of such estimate is $(\inf_{N} \sum_{x \in \N} w_x \sigma_x)^2/n$ which is by definition of $N_n$ larger or equal up to a negligible term to $(\sum_{x \in \N_n} w_x \sigma_x)^2/n$, and this equals up to a negligible term to the upper bound on the pseudo-risk we used to construct the confidence interval.

\section*{Conclusion}

In this paper, we presented an algorithm, MC-ULCB, that aims at integrating a function in an efficient way.

MC-ULCB improves the performances of Deep-MC-UCB and returns an estimate whose pseudo-risk is smaller, up to a constant, than the minimal pseudo-risk of MC-UCB run on any partition of the hierarchical partitioning. The algorithm adapts the partition to the function and noise on it, i.e.~it refines more the domain where $g$ and $s$ have large variations. We believe that this result is interesting since the class of hierarchical partitioning is very rich and can approximate many partition.

\paragraph{Acknoledgements:}The research leading to these results has received funding from the European Community's Seventh Framework Programme (FP7/2007-2013) under grant agreement n° 270327.

\newpage

\nocite{langley00}

\bibliography{allocation}

\begin{thebibliography}{11}
\providecommand{\natexlab}[1]{#1}
\providecommand{\url}[1]{\texttt{#1}}
\expandafter\ifx\csname urlstyle\endcsname\relax
  \providecommand{\doi}[1]{doi: #1}\else
  \providecommand{\doi}{doi: \begingroup \urlstyle{rm}\Url}\fi

\bibitem[Carpentier and Munos(2011{\natexlab{a}})]{MC-UCB}
A.~Carpentier and R.~Munos.
\newblock Finite-time analysis of stratified sampling for monte carlo.
\newblock In \emph{In Neural Information Processing Systems (NIPS)},
  2011{\natexlab{a}}.

\bibitem[Carpentier and Munos(2012{\natexlab{a}})]{carpentier2012minimax}
A.~Carpentier and R.~Munos.
\newblock Minimax number of strata for online stratified sampling given noisy
  samples.
\newblock \emph{Algorithmic Learning Theory}, 2012{\natexlab{a}}.

\bibitem[Carpentier and Munos(2012{\natexlab{b}})]{carpentier2012online}
A.~Carpentier and R.~Munos.
\newblock Adaptive stratified sampling for monte-carlo integration of
  differentiable functions.
\newblock In \emph{Advances in Neural Information Processing Systems 25},
  2012{\natexlab{b}}.

\bibitem[Carpentier and Munos(2011{\natexlab{b}})]{rapp-tech-MC-UCB}
A.~Carpentier and R.~Munos.
\newblock Finite-time analysis of stratified sampling for monte carlo.
\newblock Technical report, INRIA-00636924, 2011{\natexlab{b}}.

\bibitem[Etor\'e and Jourdain(2010)]{A-EtoJou10}
Pierre Etor\'e and Benjamin Jourdain.
\newblock Adaptive optimal allocation in stratified sampling methods.
\newblock \emph{Methodol. Comput. Appl. Probab.}, 12\penalty0 (3):\penalty0
  335--360, September 2010.

\bibitem[Etor\'e et~al.(2011)Etor\'e, Fort, Jourdain, and
  Moulines]{A-EtoForJouMou11}
Pierre Etor\'e, Gersende Fort, Benjamin Jourdain, and \'Eric Moulines.
\newblock On adaptive stratification.
\newblock \emph{Ann. Oper. Res.}, 2011.
\newblock to appear.

\bibitem[Glasserman et~al.(1999)Glasserman, Heidelberger, and
  Shahabuddin]{glasserman1999asymptotically}
P.~Glasserman, P.~Heidelberger, and P.~Shahabuddin.
\newblock Asymptotically optimal importance sampling and stratification for
  pricing path-dependent options.
\newblock \emph{Mathematical Finance}, 9\penalty0 (2):\penalty0 117--152, 1999.

\bibitem[Grover(2009)]{Grover}
V.~Grover.
\newblock Active learning and its application to heteroscedastic problems.
\newblock \emph{Department of Computing Science, Univ. of Alberta, MSc thesis},
  2009.

\bibitem[Kawai(2010)]{kawai2010asymptotically}
R.~Kawai.
\newblock {Asymptotically optimal allocation of stratified sampling with
  adaptive variance reduction by strata}.
\newblock \emph{ACM Transactions on Modeling and Computer Simulation (TOMACS)},
  20\penalty0 (2):\penalty0 1--17, 2010.
\newblock ISSN 1049-3301.

\bibitem[Niederreiter(2010)]{niederreiter2010quasi}
H.~Niederreiter.
\newblock \emph{Quasi-Monte Carlo Methods}.
\newblock Wiley Online Library, 2010.

\bibitem[Rubinstein and Kroese(2008)]{rubinstein2008simulation}
R.Y. Rubinstein and D.P. Kroese.
\newblock \emph{{Simulation and the Monte Carlo method}}.
\newblock Wiley-interscience, 2008.
\newblock ISBN 0470177942.

\end{thebibliography}
\bibliographystyle{icml2013}

%%%%%%%%%%%%%%%%%%%%%%%%%%%%%%%%%%%%%%%%%%%%%%%%%%%%%%%%%%%%%%%%%%%%%%%%%%%%%%%
%% APPENDIX
%%%%%%%%%%%%%%%%%%%%%%%%%%%%%%%%%%%%%%%%%%%%%%%%%%%%%%%%%%%%%%%%%%%%%%%%%%%%%%%
\newpage
\appendix

\onecolumn[

%%%%%%%%%%%%%%%%%%%%%%%%%%%%%%%%%%%%%%%%%%%%%%%%%%%%%%%%%%%%%%%%%%%%%%%%%%%%%%%
%% PROOF BERNSTEIN
%%%%%%%%%%%%%%%%%%%%%%%%%%%%%%%%%%%%%%%%%%%%%%%%%%%%%%%%%%%%%%%%%%%%%%%%%%%%%%%

{\huge Supplementary Material for paper: "Toward Optimal Stratification for Stratified Monte-Carlo Integration"}

We first introduce the following natation. We write $\B_{[h,i], \N}$, where $\N$ is a cut of a dyadic tree, the sub-partition given by the leafs of the tree issued from $[h,i]$  and with leaves $\N$ (we branch partition $\N$ on leaves $[h,i]$). We illustrate this in Figure~\ref{fig:branching}.
\begin{figure}[htbp]
\begin{center}
\includegraphics[width=6.5cm]{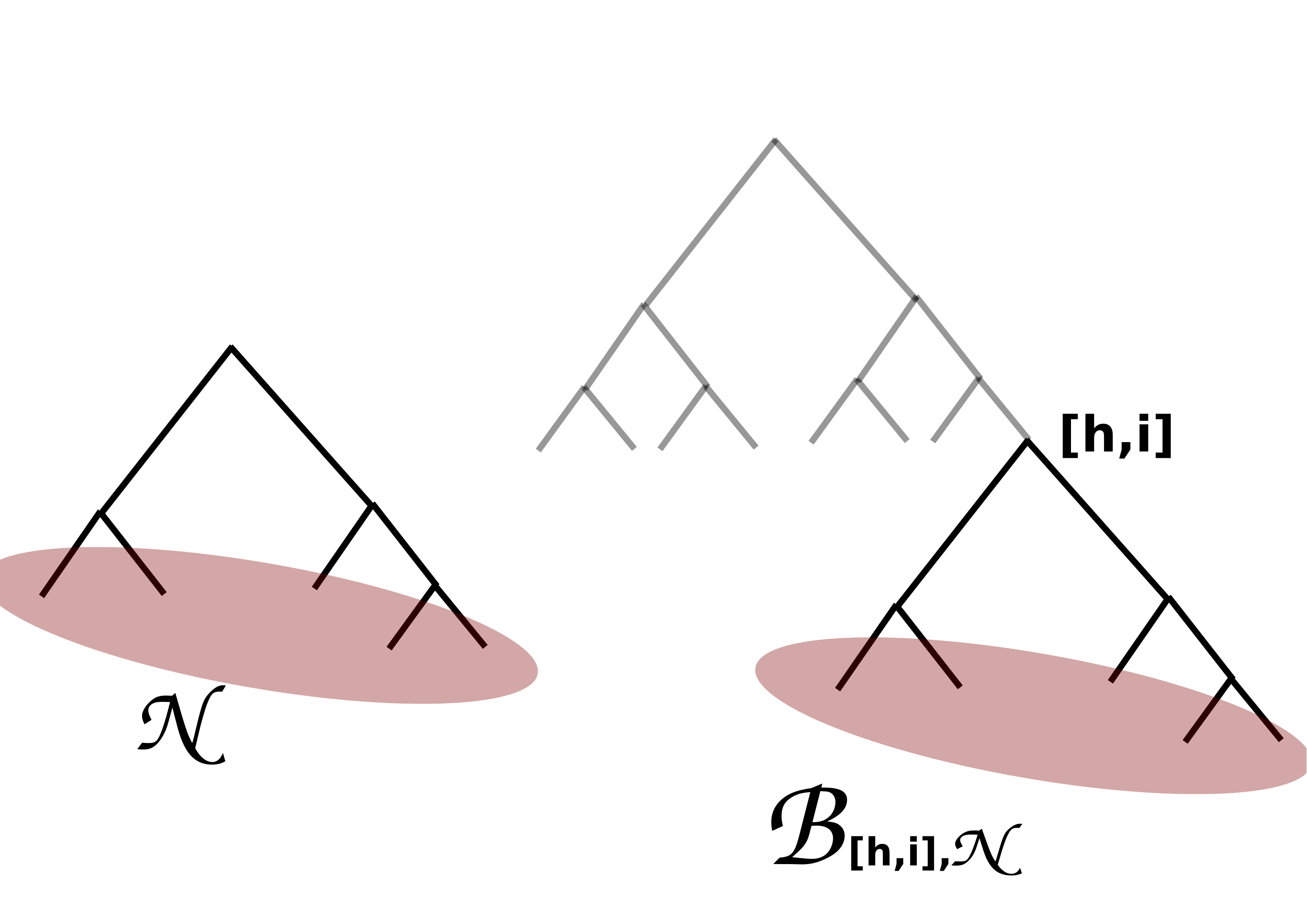}
\vspace{-0.6cm}
%\end{minipage}
\caption{Illustration of $\B_{[h,i], \N}$.} \label{fig:branching}
\vspace{-0.4cm}
\end{center}
\end{figure}
Similarly and by a slight abuse of notations, we write for any integer $l>0$ the sub-tree $\B_{[h,i], l}$ as the sub-tree ussyed from node $[h,i]$ and extended until depth $h+l$.

\vspace{-0.2cm}

\section{Numerical experiments}\label{s:experiments}

\vspace{-0.2cm}
%%%%%%%%%%%%%%%%%%%%%%%%%%%%%%%%%%%%%%%%%%%%%%%%%%%%%%%%%%%%%%%%%%%%%%%%%%%%%%%
%% Comparison
%%%%%%%%%%%%%%%%%%%%%%%%%%%%%%%%%%%%%%%%%%%%%%%%%%%%%%%%%%%%%%%%%%%%%%%%%%%%%%%

We consider the pricing problem of an Asian option introduced in \cite{glasserman1999asymptotically} and later considered in \cite{kawai2010asymptotically,A-EtoJou10}.
This uses a Black-Scholes model with strike $C$ and maturity $T$. Let $(W_t)_{0\leq t\leq T}$ be a Brownian motion. The discounted payoff of the Asian option is defined as a function of $W$, by:
\vspace{-0.2cm}
\begin{footnotesize}
\begin{equation*}
F((W_t)_t) = e^{-rT} \max \Big[ \int_0^T S_0 e^{\Big( (r - \frac{1}{2}s_0^2)t + s_0 W_t\Big)} dt -C , 0 \Big],
\vspace{-0.2cm}
\end{equation*}
\end{footnotesize}
%\vspace{-0.1cm}
where $S_0$, $r$, and $s_0$ are constants.

We want to estimate the price $p = \E_{W}[F(W)]$ by Monte-Carlo simulations (by sampling on $W$). In order to reduce the variance of the estimated price, we stratify as suggested in~\cite{glasserman1999asymptotically,kawai2010asymptotically} the space of $(W_t)_{0\leq t\leq T}$ according to the quantiles of $W_T$, i.e.~the quantiles of a normal distribution $\mathcal N(0,T)$. In other words, we re-write $F:=F((W_t)_{0 \leq t < T}, x)$ where $x \in \X=[0,1]$ is the quantile that corresponds to $W_T$. In this context, the noise $\epsilon$ comes from the directions along which we do not stratify, namely $(W_t)_{0 \leq t < T}$. After having sampled $W_T$ according to the algorithm for stratified Monte-Carlo (e.g.~MC-ULCB), we simulate the rest of the Brownian motion $(W_t)_{0 \leq t < T}$ by a Brownian Bridge (concretely, we discretize this Brownian motion in order to be able to simulate it in $16$ values).
We choose the same numerical values as \cite{kawai2010asymptotically}: $S_0=100$, $r=0.05$, $s_0=0.30$, $T=1$ and $d=16$. We choose a strike $C = 90$. %Note that the strike $C$ of the option has a direct impact on the variability of the strata. Indeed, the larger $C$, the more probable $F(W)=0$ for strata with small $W_d$, and thus, the smaller $\lambda_{\min}$.

%
% \begin{figure*}[!hbtp]
% \begin{minipage}{0.3\textwidth}
% \includegraphics[width=\textwidth]{K60.eps}
% \end{minipage}
% \begin{minipage}{0.3\textwidth}
% \includegraphics[width=\textwidth]{K90.eps}
% \end{minipage}
% \begin{minipage}{0.3\textwidth}
% \includegraphics[width=\textwidth]{K120.eps}
% \end{minipage}
% \caption{Representation of $\E(F((W_t)_t)/W_d=x)$ in function of $\Prob(W_d \leq x)$ for different values of the strike $C$.} \label{fig:diffvalC}
% \end{figure*}

By studying the range of the $F(W)$, we set the (meta-)parameters of the algorithm MC-ULCB to $A= 2\log(n)$ and $H = 0.3 \log(n)$ (the other parameters adjust automatically with these two meta-parameters). Our main competitor is the algorithm described in~\cite{A-EtoForJouMou11}, to which we refer to as A-SSAA, and which also perform adaptive allocation and stratification. %In order to illustrate the performances of these algorithms, we compare them with MC-UCB launched on partitions increasing refinement.

We first observe the behaviour of MC-ULCB with a budget of $n=2000$. On a typical run, algorithm MC-ULCB divides the domain $[0,1]$ in approximately $15$ strata that form partition $\N_n$, and the partition is more refined where $s$ and $g$ vary more. We illustrate this in Figure~\ref{fig:diffvalC3}.
\begin{figure}[!hbtp]
\vspace{-0.5cm}
\begin{center}
\includegraphics[width =8cm, height=6cm]{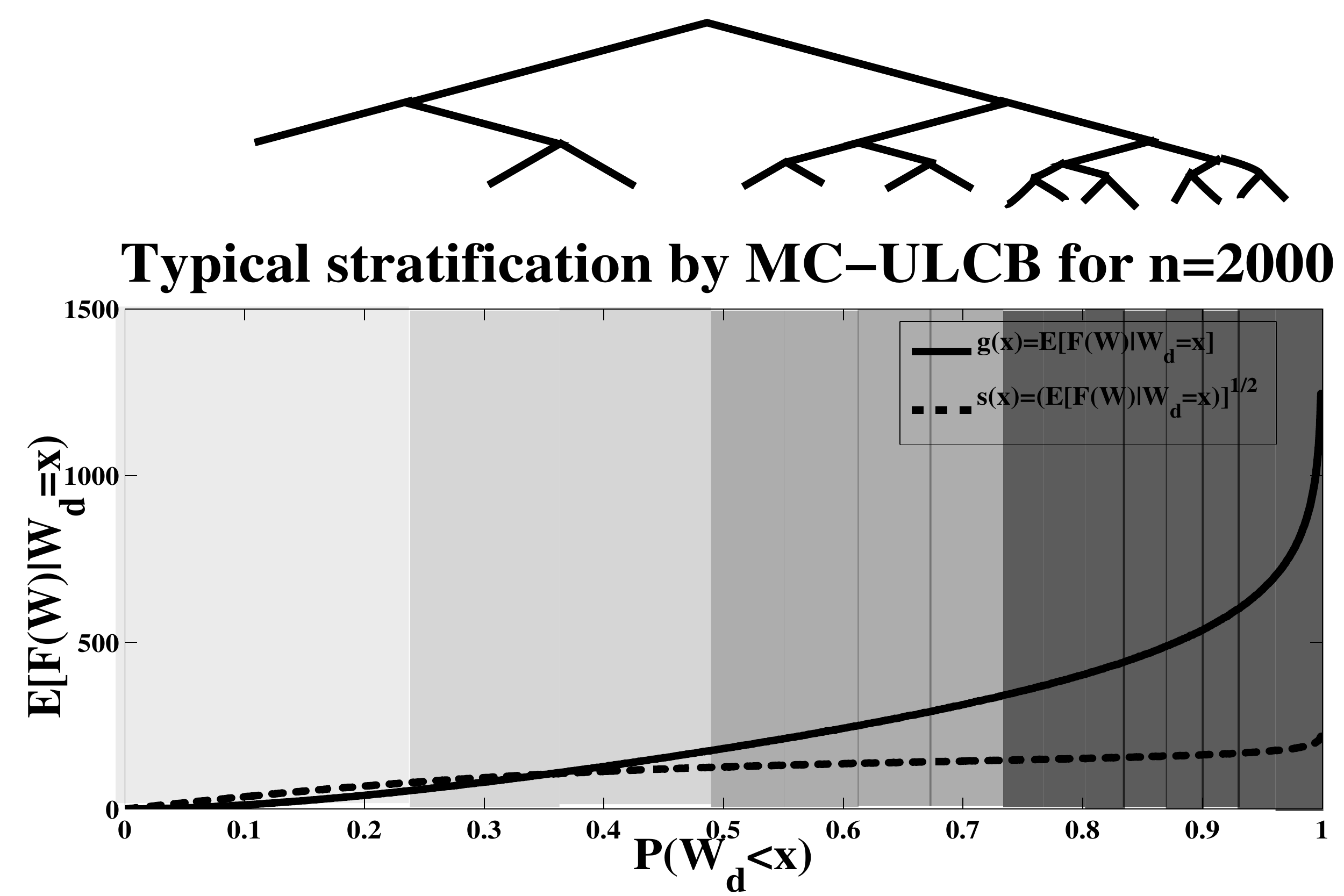}
\end{center}
\vspace{-0.7cm}
\caption{Stratification of the space for a typical run of MC-ULCB.} \label{fig:diffvalC3}
\vspace{-0.5cm}
\end{figure}

In Figure~\ref{f:perf100000}, we display the (averaged over $10000$ runs) performances of algorithms MC-ULCB, A-SSAA, and MC-UCB (launched on some partitions in $K$ hypercubes of same measure). Note first that trough the performances of MC-UCB launched on partitions with varying number of strata, we observe the optimal number of strata increases with $n$. We observe that MC-ULCB is more efficient than algorithm MC-UCB launched on any of these partitions in $K$ strata. This is not very surprising since we only consider MC-UCB launched on partitions where all strata have the same size, i.e.~these partitions are not adapted to the function $F$. We would probably observe slightly better results for MC-UCB if we launched it on an oracle partition with respect to $F$, but such a partition is not easy to build, even when the function $F$ is known. Also, MC-ULCB is more efficient than A-SSAA, and that for any sample size. It is not very surprising since the price model for Asian option happens to verify Assumption~\ref{ass:boundness}, which is more restrictive than the assumptions made in paper~\cite{A-EtoForJouMou11}. This Assumption is used to tune the algorithm. In paper~\cite{A-EtoForJouMou11}, since they do not make this sub-Gaussian assumption, they can not calibrate the length of the exploration phase with respect to the properties of the distribution, and thus fit the exploration/exploitation to the problem.

\begin{table}[ht]
\begin{center}
\begin{tabular}{|c||c|c|c|c|}
  \hline
Budget $n$ & $n=200$  & $n=2000$  & $n=20000$ \\
\hline
\hline
  Crude MC & $5.1$ & $5.1\; 10^{-1}$ & $5.1 \; 10^{-2}$ \\%& $1.02 \; 10^{-2}$ \\
  \hline
  MC-UCB, $K=5$ & $4.65$ & $4.65 \;10^{-1}$ & $4.64 \; 10^{-2}$ \\%& $9.29 \;10^{-3}$\\
  \hline
  MC-UCB, $K=10$ & $4.56$ & $4.55 \;10^{-1}$ & $4.55 \; 10^{-2}$ \\%& $9.10 \;10^{-3}$\\
\hline
  MC-UCB, $K=20$ & $4.63$ & $4.49 \;10^{-1}$ & $4.41 \; 10^{-2}$ \\%& $8.81 \;10^{-3}$ \\
\hline
  MC-UCB, $K=40$ & $4.71$ & $4.655 \;10^{-1}$ & $4.31 \; 10^{-2}$ \\%& $8.36 \;10^{-3}$\\
\hline
  A-SSAA & $4.32$ & $4.25 \;10^{-1}$ & $4.13 \; 10^{-2}$ \\%& $8.17 \;10^{-3}$\\
\hline
  MC-ULCB & $4.08$ & $3.95 \;10^{-1}$ & $3.82 \; 10^{-2}$ \\%& $7.78 \;10^{-3}$
\hline
\end{tabular}
\label{f:perf100000}
\end{center}
\caption{Mean squared errors of the estimates outputted by the strategies for different values of $n$.}
\end{table}

\section{Proof of Lemma~\ref{lem:uss}}\label{proof:uss}

Assume that stratum $\X_{[h,i]}$ has been sampled $t$ times according to the BSS. Let $(A_0,\ldots, A_l) \in \{0,1\}^l$ be the (uniquely defined) decomposition in basis $2$ of $t$, i.e.~$\sum_{p=0}^l A_p 2^r = t$ and $A_l = 1$. This implies by Assumption~\ref{ass:stratumequal} and by definition of $(A_r)_r$, that  $\sum_{p=0}^l A_p\frac{ w_h}{w_p} = t$. We denote by $\mathcal D_l = (X_1,\ldots, X_t)$ the set of the $t$ samples in stratum $\X_{[h,i]}$.

%Let $l = \lfloor \log(t) \rfloor$ be the maximal depth such that there is at least one point per sub-stratum in $\B_{[h,i], l}$.
By construction of the BSS, there are at most two and at least one element of $\mathcal D_l$ in each stratum of $\B_{[h,i], l}$.
For all $j \leq 2^{h+l} -1$, we write $X_{l,j}$ the first sample in stratum $[h+l,j]$. Conditionally to the number $t$ of samples, each of these samples is pulled randomly in stratum $[h+l,j]$ according to $\nu_{\X_{[h+l,j]}}$.

%Let us consider a sub-stratum in $\B_{[h,i], l}$ that contains two samples. 
Let us now consider the largest $p <l$ such that $A_p = 1$. Let us consider $\mathcal D_p = \mathcal D_l \setminus \{(X_{l,j})_{[h+l,j] \in \B_{[h,i], l} }\}$.
By construction of the BSS, conditionally to the knowledge that there is a re-numeration of the samples such that $\forall 0 \leq j < 2^l, X_{l,j} \sim \nu_{\X_{[h+l,j]}}$ (and thus conditionally \textit{only} to the number $t$ of samples since the fact that there is a re-numeration such that $\forall 0 \leq j < 2^l, X_{l,j} \sim \nu_{[h+l,j]}$ follows deterministically from the budget $t$), there are at most two and at least one element of $\mathcal D_p$ in each stratum of $\B_{[h,i], p}$.
%the second sample in one of the strata $\B_{[h,i], l}$ is such that this second sample has been pulled randomly in the stratum $[h+p,j]$ (that contains the second sample). 
We note $X_{p,j}$ the first sample. By construction of the BSS and conditionally to the number $t$ of samples, each of these samples is pulled randomly in stratum $[h+p,j]$ according to $\nu_{\X_{[h+p,j]}}$.

We can continue this induction for every $p$ such that $A_p = 1$. We have, at the end of the induction, relabeled (trough the relabeling that we presented) every sample (in $\mathcal D_l$) by $X_{p,j}$. We know that conditional to the number $t$ of samples, $\forall p/ A_p = 1$, and $\forall 0\leq j \leq 2^{h+p} -1$, $X_{p,j} \sim \nu_{\X_{[p,j]}}$ and also that these relabeled samples are all independent of each other (although the relabeling of each sample is random and is not independent of the other samples).
%Note that by construction of the BSS, this $p$ is precisely such that $A_p =1$ (since the BSS always pull at random a sample in regions of the domains where there are the least points).
%Let $A_p$ be the indicator that takes as value $1$ if there are samples in the partition $\B_{[h,i], p}$.

The empirical mean $\hat \mu_{[h,i]}$ on stratum $[h,i]$ thus satisfies
\begin{equation*}
 \hat \mu_{[h,i]} = \frac{1}{t} \sum_{s=1}^t X_s = \sum_{p=0}^l \frac{w_h}{w_pt} \sum_{[h+p,j] \in \B_{[h,i], p}} \frac{w_p}{w_h} X_{p,j} A_p.
\end{equation*}
%By construction, all the $X_{p,j} \sim \nu_{\X_{[h+p,j]}}$. 
Since by construction $\sum_{p=0}^l \frac{A_p w_h}{w_p} = t$, the empirical estimate of the mean thus satisfies
\begin{equation*}
 \E [\hat \mu_{[h,i]}] = \sum_{p=0}^l \frac{w_h}{w_pt} \sum_{[h+p,j] \in \B_{[h,i], p}}  \frac{w_p}{w_h} \mu_{[h+p,j]} A_p =  \sum_{p=0}^l \frac{w_h}{w_pt} \mu_{[h,i]} A_p = \mu_{[h,i]}.
\end{equation*}
Note now that the variance of this estimate is such that
\begin{equation*}
 \V [\hat \mu_{[h,i]}] = \sum_{p=0}^l \frac{w_h^2}{w_p^2 t^2} \sum_{[h+p,j] \in \B_{[h,i], p}}  (\frac{w_p}{w_h})^2 \si_{[h+p,j]}^2 A_p \leq  \sum_{p=0}^l \frac{w_p}{w_ht^2} \si_{[h,i]}^2 A_p  \leq \frac{\si_{[h,i]}^2}{t}.
\end{equation*}
% This USS thus provides an estimate such that
% \begin{itemize}
%  \item for a child strata $[p,j]$ of $[h,i]$, $T_{[p,j]} \geq \frac{w_p}{w_h} t - 1$.
% \item it is unbiased and its variance is smaller than the one of crude Monte-Carlo.
% \end{itemize}

\section{Preliminary results}\label{app:deepmcucb}

\subsection{An interesting large probability event}

\begin{lemma}\label{lem:xi}
For a stratum $\X_{[h,i]}$ of the hierarchical partition, write $\Big(X_{[h,i],0}, \ldots, X_{[h,i],n}\Big)$ the samples collected by BSS in stratum $\X_{[h,i]}$ (or by BSS in a stratum of smaller depth).
Consider the event
 \begin{equation}\label{def:xi}
 \xi = \bigcap_{[h,i] : h \leq H}  \bigcap_{t=2}^n \Bigg\{ \Big|\sqrt{\frac{1}{2^{\lfloor \log(t) \rfloor}}\sum_{a=0}^{2^{\lfloor \log(t) \rfloor} - 1} \Big(X_{[h,i],a} - \frac{1}{2^{\lfloor \log(t) \rfloor}}\sum_{a'=0}^{2^{\lfloor \log(t) \rfloor}} X_{[h,i], a'} \Big)^2 } - \si_{[h,i]}\Big| \leq A\sqrt{\frac{1}{t}} \Bigg\},
\end{equation}
where $A = 2\sqrt{2(1 + 3b + 4f_{\max})\log(4n^2(3f_{\max})^3/\delta)}$ and $H = \lfloor \frac{\log\big((3 f_{\max})^3 n\big)}{\log(2)} \rfloor +1$. Then $\P(\xi) \geq 1-\delta$.

Note also that for $h \geq H, \forall i \leq 2^h-1$, we have
\begin{align*}
 w_{[h,i]}\sigma_{[h,i]} \leq \frac{w_{[h,i]}^{2/3}}{n^{1/3}}.
\end{align*}
\end{lemma}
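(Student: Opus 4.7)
The statement has two parts: a deterministic tail estimate for $h \geq H$, and a high-probability concentration event $\xi$ involving empirical standard deviations at depths $h \leq H$. I would handle them separately.

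For the tail estimate, Assumption~\ref{ass:boundness} gives $|g(x)| \leq f_{\max}$ and $|s(x)| \leq f_{\max}$. Hence from
\begin{align*}
\si^2_{[h,i]} = \frac{1}{w_h}\int_{\X_{[h,i]}}(g(x)-\mu_{[h,i]})^2 d\nu(x) + \frac{1}{w_h}\int_{\X_{[h,i]}} s^2(x) d\nu(x),
\end{align*}
the first integrand is at most $(2f_{\max})^2$ and the second at most $f_{\max}^2$, so $\si_{[h,i]} \leq 3 f_{\max}$. The definition of $H$ gives $w_h = 2^{-h} \leq 2^{-H} \leq ((3f_{\max})^3 n)^{-1}$ for $h \geq H$, whence
$w_h \si_{[h,i]} \leq 3 f_{\max} \cdot w_h^{1/3} \cdot w_h^{2/3} \leq w_h^{2/3}/n^{1/3}$.

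For the concentration claim, fix $[h,i]$ with $h \leq H$, fix $t \in \{2,\ldots,n\}$, and let $\ell = \lfloor \log_2 t \rfloor$, so $2^\ell \geq t/2$. By the structure of BSS (the inductive relabeling in the proof of Lemma~\ref{lem:uss} applied at depth $\ell$), conditionally on the sample count the first $2^\ell$ BSS samples can be written as independent draws $Y_0,\ldots,Y_{2^\ell-1}$ with $Y_j \sim \nu_{\X_{[h+\ell,j]}}$. Writing $V = 2^{-\ell}\sum_j (Y_j-\bar Y)^2$ for the empirical variance in~(\ref{def:xi}) and using the law of total variance
\begin{align*}
\si^2_{[h,i]} = \frac{1}{2^\ell}\sum_j \si^2_{[h+\ell,j]} + \frac{1}{2^\ell}\sum_j (\mu_{[h+\ell,j]}-\mu_{[h,i]})^2,
\end{align*}
a direct computation yields $\E[V] = \si^2_{[h,i]} - 2^{-2\ell}\sum_j \si^2_{[h+\ell,j]}$, whose bias from $\si^2_{[h,i]}$ is at most $\si^2_{[h,i]}/2^\ell \leq 18 f_{\max}^2/t$. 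Assumption~\ref{ass:boundness} makes the normalized deviations $\upsilon(x,\no)$ sub-exponential (Bernstein-type with parameter $b$), which together with $|g|,|s|\leq f_{\max}$ propagates to the centered squares $(Y_j-\mu_{[h+\ell,j]})^2-\si^2_{[h+\ell,j]}$ with parameters controlled by $f_{\max}$ and $b$. A Bernstein inequality applied to the independent sum $\sum_j\big[(Y_j-\mu_{[h+\ell,j]})^2-\si^2_{[h+\ell,j]}\big]$, together with the $O(1/2^\ell)$ bound on $(\bar Y-\mu_{[h,i]})^2$ granted by Lemma~\ref{lem:uss}, gives
\begin{align*}
\Prob\Big(|V-\si^2_{[h,i]}| > A^2/t\Big) \leq 2\exp\Big(-\frac{c\cdot 2^\ell (A^2/t)^2}{1+3b+4f_{\max}}\Big)
\end{align*}
for a numerical constant $c$, provided $A^2/t$ lies in the admissible Bernstein range. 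The calibration $A = 2\sqrt{2(1+3b+4f_{\max})\log(4n^2(3f_{\max})^3/\delta)}$ with $2^\ell \geq t/2$ reduces the right-hand side to at most $\delta/(4n^2(3f_{\max})^3)$. The elementary bound $|\sqrt u - \sqrt v| \leq \sqrt{|u-v|}$ then transfers $|V - \si^2_{[h,i]}| \leq A^2/t$ into $|\sqrt V - \si_{[h,i]}| \leq A/\sqrt t$. A union bound over at most $2^{H+1} \leq 4(3f_{\max})^3 n$ nodes of depth $\leq H$ and $n-1$ values of $t$ gives $\Prob(\xi^c)\leq \delta$.

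The main obstacle is the non-i.i.d.~nature of BSS samples: the $2^\ell$ samples are independent draws from \emph{different} child distributions $\nu_{\X_{[h+\ell,j]}}$ rather than i.i.d.~from $\nu_{\X_{[h,i]}}$, so $V$ is a biased estimator of $\si^2_{[h,i]}$ and concentration must be performed against $\E[V]$. Controlling this bias via the law of total variance and showing it is $O(1/t)$—so that, after passing to square roots, it is absorbable into the $A/\sqrt t$ target—is the key technical step, together with verifying that the explicit sub-exponential moment bound in Assumption~\ref{ass:boundness} propagates to the centered squares with constants exactly matching the definition of $A$.
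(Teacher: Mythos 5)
Your overall route is the same as the paper's. The deterministic claim for $h \geq H$ is proved exactly as you do it: $\si_{[h,i]} \leq 3 f_{\max}$ from Assumption~\ref{ass:boundness}, then $w_h \leq 2^{-H} \leq \big((3f_{\max})^3 n\big)^{-1}$ and the split $w_h = w_h^{1/3} w_h^{2/3}$. For the concentration part, the paper also uses the BSS relabeling to view the first $2^{\lfloor \log t \rfloor}$ samples as independent draws, one from each depth-$\ell$ descendant stratum, then applies a Bernstein-type deviation bound for the empirical standard deviation of independent, non-identically distributed sub-exponential variables (its Lemma~\ref{ss:variance}), and closes with the same union bound over at most $2^{H+1}$ nodes and $n$ values of $t$, which is exactly what the factor $4n^2(3f_{\max})^3/\delta$ inside $A$ is calibrated for. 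One packaging difference: the paper's Lemma~\ref{ss:variance} defines its target $V$ directly as the mixture variance $\frac{1}{m}\sum_j(\mu_j - \bar{\mu})^2 + \frac{1}{m}\sum_j \si_j^2$, which under Assumption~\ref{ass:stratumequal} coincides exactly with $\si^2_{[h,i]}$; consequently no bias term $\E[V] - \si^2_{[h,i]}$ ever appears there, and your explicit bias computation (which is correct) is not needed once the deviation is measured against the mixture variance rather than against $\E[V]$.

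There is, however, one concrete mis-step in your sketch: you invoke Lemma~\ref{lem:uss} to control $(\bar{Y} - \mu_{[h,i]})^2$ by $O(1/2^\ell)$ \emph{inside the high-probability event}. Lemma~\ref{lem:uss} only asserts $\V[\bar{Y}] \leq \si^2_{[h,i]}/t$, an in-expectation statement; it gives no pointwise control of the realized value of $(\bar{Y} - \mu_{[h,i]})^2$ on the event you are building (converting it via Markov or Chebyshev would cost a factor polynomial in $1/\delta$ instead of $\log(1/\delta)$, destroying the form of $A$). The correct step, and what the paper's Lemma~\ref{ss:variance} actually does, is to apply the Bernstein inequality (Lemma~\ref{lem:bernstein}) to the centered sum $\sum_j (Y_j - \mu_j)$, obtaining $|\bar{Y} - \mu_{[h,i]}| \leq C\sqrt{\log(1/\delta)/2^\ell}$ with high probability, and then square. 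With that replacement your argument goes through and reproduces the paper's constants. A smaller omission, which the paper's own proof shares: expanding $\frac{1}{m}\sum_j (Y_j - \bar{Y})^2$ around the child means also produces the cross term $\frac{2}{m}\sum_j (Y_j - \mu_j)(\mu_j - \mu_{[h,i]})$, a centered sum of independent sub-exponential variables that requires its own Bernstein bound; neither your sketch nor Lemma~\ref{ss:variance} treats it explicitly.
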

\begin{proof}
{\bf Probability of the event $\xi$}

Let $[h,i]$ be a stratum of the hierarchical partitioning such that $h \leq H$ and $t\geq 2$. Let $l= \lfloor \log(t) \rfloor$. By definition of the BSS, we know that for $s \leq 2^l$, sample $X_{[h,i],s}$, conditionally to the $s-1$ other samples, is sampled uniformly inside the strata $\X_{[h+l,k]}$ that contain no samples, and independent of the other samples.

%Consider a leaf $[h,i]$ where there are $T$ points sampled as exposed in the sampling scheme. Note that by the definition of the USS, there are $t = 2^{\lfloor \log(T) \rfloor}$ points sampled uniformly in $t$ strata of equal measures, the remaining points being sampled uniformly in these strata. Note that $t \geq \frac{T}{2}$.

%We denote by $(X_{[h,i],1},\ldots,X_{[h_t,i],T_{[h,i],t}})$ the points in node $[h,i]$.

% Using the results from \ref{eq:bernfinal} from Appendix~\ref{app:ldev}, we have with probability $1-2\delta$ that
% \begin{align*}
% |\hmu_{[h,i],t} - \mu_{[h,i]}| &= \Big|\frac{1}{T_{[h,i],t}}\sum_{a=1}^{T_{[h,i],t}} X_{[h,i],a} - \mu_{[h,i]}\Big|\\
%  &= \Big|\frac{1}{T_{[h,i],t}} \sum_{a=1}^{T_{[h,i],t}} X_{[h,i],a} - \mu_{[h,i]}\Big|\\
% &\leq \sqrt{\frac{2\bar{V}\log(2/\delta)}{T_{[h,i],t}}} + \frac{b\log(2/\delta)}{T_{[h,i],t}}.
% \end{align*}

Using the results from Lemma~\ref{ss:variance}, we know that with probability $1-\delta$, the estimate of the standard deviation computed with the $2^l$ first samples satisfies
\begin{align*}
\Big|\sqrt{\frac{1}{2^l}\sum_{a=0}^{2^l -1} \Big(X_{[h,i],a} - \frac{1}{2^l}\sum_{b=0}^{2^l-1} X_{[h,i],b}\Big)^2} - \si_{[h,i]}\Big|
&\leq 2\sqrt{\frac{(1 + 3b + 4\bar{V})\log(2/\delta)}{2^l}}\\
&\leq 2\sqrt{\frac{2(1 + 3b + 4\bar{V})\log(2/\delta)}{t}}\\
&\leq 2\sqrt{\frac{2(1 + 3b + 4f_{\max})\log(2/\delta)}{t}}.
\end{align*}

By the definition of $H$, we know that there are less than $2 \times 2^{H}$ strata in the hierarchical partitioning of depth smaller than $H$. Because of the definition of $A$, we have $\P(\xi) \geq 1-\delta$.

{\bf Characterisation of the strata of depth bigger than $H$}

Consider a node $[h,i]$ of depth $h \geq H$. As both $m$ and $s$ are bounded by $f_{\max}$ (see Assumption~\ref{ass:boundness}), then
\begin{align*}
 w_{[h,i]}\sigma_{[h,i]}
&= \sqrt{w_{h,i}} \sqrt{\int_{\X_{[h,i]}} s^2(x)dx} + \sqrt{w_{h,i}} \sqrt{\int_{\X_{[h,i]}} (g(x) - \mu_{[h,i]})^2 dx}\\
 &\leq \sqrt{w_{[h,i]}} \sqrt{\int_{\X_{[h,i]}} f_{\max}^2 dx} + \sqrt{w_{[h,i]}} \sqrt{\int_{\X_{[h,i]}} 4f_{\max}^2 dx}\\
&\leq 3 w_{[h,i]} f_{\max}.
\end{align*}

As $h \geq H$, we have $w_{[h,i]} \leq \big(\frac{1}{2}\big)^{H} \leq  \big(\frac{1}{3f_{\max}}\big)^3 \frac{1}{n}$. From that we deduce that for $h \geq H$,
\begin{align*}
 w_{[h,i]}\sigma_{[h,i]} \leq \frac{w_{[h,i]}^{2/3}}{n^{1/3}}.
\end{align*}
% 
% 
% In order for this node to be such that $w_{[h,i]} \si_{[h,i]} \geq \frac{w_{[h,i]}^{2/3}}{n^{1/3}}$ (and note that as $A \geq 1$, we also have $C_{\min} \geq 1)$, it is necessary that $3 w_{[h,i]} f_{\max} \geq  \frac{w_{[h,i]}^{2/3}}{n^{1/3}}$, which is equivalent to
% \begin{align*}
% w_{[h,i]}  \geq \Big(\frac{1}{3f_{\max}}\Big)^{3} \frac{1}{n} \geq \frac{H}{n},
% \end{align*}
% with $H=\Big(\frac{1}{3f_{\max}}\Big)^{3}$. 
% 
% There are thus less than $\frac{2n}{H}$ nodes that are such that $w_{[h,i]} \geq \frac{H}{n}$. Because of the definition of $A$, we have $\P(\xi) \geq 1-\delta$.
\end{proof}

\subsection{Rate for the algorithm MC-UCB}\label{proof:algo}

We first prove the following result.
\begin{proposition}\label{thm:m2-regret}
Let Assumption~\ref{ass:stratummeasurables}, \ref{ass:stratumequal}, and~\ref{ass:boundness} hold. Assume that $n \geq 2B \sum_{q\in \N_n} w_q^{2/3}n^{2/3}$ (with $B=\frac{\big(4\sqrt{2A} + \Sigma_{\N_n} A \big)}{\Sigma_{\N_n}}$). For any $0<\delta \leq 1$, the algorithm MC-UCB on a partition $\N_n$ satisfies on $\xi$, and thus with probability at least $1-\delta$,
\begin{equation*}
\frac{w_p\si_{p}}{T_{p,n}} \leq \frac{\Sigma_{\N_n}}{n} + \big(4\sqrt{2A} + \Sigma_{\N_n} A \big) \frac{\sum_{q\in \N_n} w_q^{2/3}}{n^{4/3}} \leq \frac{\Sigma_{\N_n}}{n} + C_{\min} \frac{\sum_{q\in \N_n} w_q^{2/3}}{n^{4/3}},
\end{equation*}
where $C_{\min} = \big(4\sqrt{2A} + \Sigma_{\N_n} A \big) $  and
\begin{equation*}
T_{p,n} \geq \lambda_{p,\Sigma_{\N_n}} \Big(n - B\big(\sum_{q\in \N_n} w_q^{1/3}\big)n^{2/3}\Big),
\end{equation*}
where $B =  \frac{\big(4\sqrt{2A} + \Sigma_{\N_n} A \big)}{\Sigma_{\N_n}}$.
\end{proposition}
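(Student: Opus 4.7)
The plan is to run a UCB-style analysis on the good event $\xi$ from Lemma~\ref{lem:xi}, which has probability at least $1-\delta$. On $\xi$, for every stratum $x\in\N_n$ and every round $t$, the empirical standard deviation (computed from the first $2^{\lfloor\log T_{x,t}\rfloor}\geq T_{x,t}/2$ samples) satisfies $|\hat\sigma_{x,t}-\sigma_x|\leq \sqrt{2A}/\sqrt{T_{x,t}}$ up to an absolute constant. Consequently the index
\[B_{x,t}=\frac{w_x}{T_{x,t-1}}\Bigl(\hat\sigma_{x,t}+\frac{\sqrt{A}}{\sqrt{T_{x,t}}}\Bigr)\]
is simultaneously an upper confidence bound for $w_x\sigma_x/T_{x,t-1}$ and, up to an additive slack of order $w_x\sqrt{A}/(T_{x,t-1}\sqrt{T_{x,t}})$, a lower confidence bound for the same quantity.

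The first auxiliary step I would establish is a minimum-exploration guarantee: on $\xi$, $T_{q,n}\geq c\,w_q^{2/3}n^{2/3}$ for every $q\in\N_n$, for an explicit constant $c$ depending on $A$ and $\Sigma_{\N_n}$. This follows by a standard contradiction argument: if $T_{q,t}$ ever fell below the threshold, the lower-confidence part of the sandwich would make $B_{q,t}$ strictly larger than $\max_{x\neq q}B_{x,t}$ (which is itself bounded by $\Sigma_{\N_n}^2/n$ up to lower order terms), forcing the algorithm to pull $q$ until equality is reached.

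The heart of the argument is the comparison at the last pull. Fix $p\in\N_n$, and for each $q\in\N_n$ let $t_q$ denote the last round at which $q$ is pulled, so that $T_{q,t_q}=T_{q,n}$. The greedy selection rule $B_{p,t_q}\leq B_{q,t_q}$, combined with the sandwich above, gives
\[\frac{w_p\sigma_p}{T_{p,t_q-1}}\leq\frac{w_q\sigma_q}{T_{q,n}-1}+\frac{2w_q\sqrt{2A}}{(T_{q,n}-1)\sqrt{T_{q,n}}}.\]
Using $T_{p,n}\geq T_{p,t_q-1}$, inverting, and substituting the minimum-exploration bound $\sqrt{T_{q,n}}\geq\sqrt{c}\,w_q^{1/3}n^{1/3}$ yields, after applying $1/(1+x)\geq 1-x$,
\[T_{p,n}\geq\lambda_{p,\N_n}(T_{q,n}-1)-\lambda_{p,\N_n}\cdot O\!\left(\frac{\sqrt{A}\,w_q^{2/3}\,n^{-1/3}}{\sigma_q}\right)\cdot T_{q,n}.\]
Multiplying by $w_q\sigma_q$ and summing over $q\in\N_n$ (the $q=p$ case is trivial), the left hand sides combine to $T_{p,n}\Sigma_{\N_n}$, the leading right hand term becomes $w_p\sigma_p(n-K_{\N_n})$, and the error terms collapse to $w_p\sigma_p B\bigl(\sum_q w_q^{1/3}\bigr)n^{2/3}$. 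Dividing by $\Sigma_{\N_n}$ delivers the announced lower bound on $T_{p,n}$.

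The pseudo-risk bound then follows by writing $w_p\sigma_p/T_{p,n}\leq\Sigma_{\N_n}/(n-Bn^{2/3}\sum_q w_q^{1/3})$ and applying $1/(1-u)\leq 1+2u$ for $u\leq 1/2$, a condition furnished by the hypothesis $n\geq 2Bn^{2/3}\sum_q w_q^{2/3}$. The main obstacle I anticipate is bookkeeping: two separate linearisations of $1/(1+x)\approx 1-x$ are used (once inside the sum over $q$ to obtain the lower bound on $T_{p,n}$, once in the final inversion), and the factor $w_q\sigma_q$ multiplying each error term inside the sum must correctly absorb the $1/(w_q^{1/3}\sigma_q)$ coming from the minimum-exploration estimate so that the surviving exponent of $w_q$ is the one advertised. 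One must also check that the minimum-exploration constant $c$ is compatible with the prefactor $B=(4\sqrt{2A}+\Sigma_{\N_n}A)/\Sigma_{\N_n}$ claimed in the statement.
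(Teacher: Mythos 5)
Your core argument is the paper's own: work on the event $\xi$ of Lemma~\ref{lem:xi}, sandwich the index $B_{q,t}$ between $w_q\sigma_q/T_{q,t}$ and $\frac{w_q}{T_{q,t}}\big(\sigma_q+\text{slack}\big)$, compare indices at the last pull of each stratum $q$ to get $\frac{w_p\sigma_p}{T_{p,n}}(T_{q,n}-1)\leq w_q\sigma_q+\text{slack}_q$, sum over $q$, and linearise with $1/(1\pm x)$. That you extract the lower bound on $T_{p,n}$ first and the pseudo-risk ratio second (the paper inverts the summed inequality first, then deduces the pull lower bound) is in principle an immaterial rearrangement. However, two bookkeeping steps as you wrote them do not deliver the stated result: (i) you relax $\sum_q w_q^{2/3}$ to $\sum_q w_q^{1/3}$ \emph{before} the final inversion, so the condition $u\leq 1/2$ you need is $n\geq 2B\sum_q w_q^{1/3}n^{2/3}$, which is \emph{stronger} than the hypothesis $n\geq 2B\sum_q w_q^{2/3}n^{2/3}$ (since $w_q^{1/3}\geq w_q^{2/3}$ for $w_q\leq 1$); and (ii) passing through the $T_{p,n}$ bound and then inverting costs you a factor $2$ and the exponent, giving $\frac{\Sigma_{\N_n}}{n}+2C_{\min}\frac{\sum_q w_q^{1/3}}{n^{4/3}}$ rather than $\frac{\Sigma_{\N_n}}{n}+C_{\min}\frac{\sum_q w_q^{2/3}}{n^{4/3}}$. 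The paper's order of operations, keeping the exponent $2/3$ throughout, avoids both problems.

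The genuine divergence is your ``first auxiliary step.'' The paper never proves a minimum-exploration guarantee by contradiction: it takes $T_{q,t}\geq\lfloor Aw_q^{2/3}n^{2/3}\rfloor$ as a property of the algorithm itself (every stratum of the partition is guaranteed that many samples by construction), and this is precisely why it can use the \emph{fixed} confidence width $\sqrt{A}\,w_q^{-1/3}n^{-1/3}$ in the index: on $\xi$ the deviation $A/\sqrt{T_{q,t}}$ then automatically sits below that width. Your contradiction sketch, by contrast, is circular as stated: you bound $\max_{x\neq q}B_{x,t}$ by roughly $\Sigma_{\N_n}/n$ (your $\Sigma_{\N_n}^2/n$ is a dimensional slip) at an \emph{arbitrary} time $t$, but that bound only holds once the other strata are themselves well allocated --- which is exactly what is being proved. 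The repairable version of this argument (the one used in the original MC-UCB analysis) goes through a pigeonhole step: some stratum $x^*$ receives at least order $n/K_{\N_n}$ pulls, and one compares $B_{q,\cdot}$ to $B_{x^*,\cdot}$ at the last pull of $x^*$, where the upper half of the sandwich applies to $x^*$; the $T$-dependent term $\sqrt{A}/\sqrt{T_{q,t}}$ in the index then forces $T_{q,n}\gtrsim c\,w_q^{2/3}n^{2/3}$ with a constant $c$ depending on $K_{\N_n}$ and $\Sigma_{\N_n}$. That fix works, but the constant $c$ propagates into every slack term, so it will not reproduce the exact constants $C_{\min}=4\sqrt{2A}+\Sigma_{\N_n}A$ and $B=C_{\min}/\Sigma_{\N_n}$ of the proposition; to get those you should instead invoke the algorithm's built-in forced allocation, as the paper does.
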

\begin{proof}

\noindent
{\bf Step~1.~Properties of the algorithm.} For a node $q \in \N_{t+1}$, we first recall the definition of $B_{q,t+1}$ used in the MC-UCB algorithm
\begin{equation*}
B_{q,t+1} = \frac{w_q}{T_{q,t}} \Bigg( \hsi_{q} + \sqrt{A}\frac{1}{w_q^{1/3} n^{1/3}}\Bigg).
\end{equation*}

Using the definition of $\xi$ and the fact that if node $q$ is in $\N_{t+1}$, then $T_{q,t+1} \geq \lfloor A w_q^{2/3} n^{2/3} \rfloor$, it follows that, on $\xi$
\begin{equation}\label{e:bound.B.2}
 \frac{w_q\si_q}{T_{q,t}} \leq B_{q,t+1} \leq \frac{w_q}{T_{q,t}}\Bigg(\si_{q} + 2\sqrt{A}\frac{1}{w_q^{1/3} n^{1/3}} \Bigg).
\end{equation}
Let $t+1 \geq 2K+1$ be the time at which an arm $q$ is pulled for the last time, that is $T_{q,t} = T_{q,n}-1$. Note that there is at least one arm such that this happens as $n \geq 4K$. Since at $t+1$ arm $q$ is chosen, then for any other arm $p$, we have
\begin{equation}\label{e:meca2-2}
B_{p,t+1} \leq B_{q,t+1}\;.
\end{equation}
From Equation~\ref{e:bound.B} and $T_{q,t} = T_{q,n} -1$, and also since by construction of the algorithm $T_{q,n} \geq 2$, we obtain on $\xi$
\begin{equation}\label{e:meca.lb2.2}
B_{q,t+1} \leq \frac{w_q}{T_{q,t}}\Bigg(\si_{q} + 2\sqrt{2A}\frac{1}{w_q^{1/3} n^{1/3}} \Bigg).
\end{equation}
Furthermore, since $T_{p,t} \leq T_{p,n}$, then on $\xi$
\begin{equation}\label{e:meca.hb2-2}
B_{p,t+1} \geq \frac{w_p\si_{p}}{T_{p,t}} \geq \frac{w_p\si_{p}}{T_{p,n}}.
\end{equation}
Combining Equations~\ref{e:meca2}--\ref{e:meca.hb2-2}, we obtain on $\xi$
\begin{equation*}
\frac{w_p\si_{p}}{T_{p,n}}(T_{q,n}-1) \leq w_q\Bigg(\si_{q} + 2\sqrt{2A}\frac{1}{w_q^{1/3} n^{1/3}} \Bigg).
\end{equation*}
Summing over all $q$ such that the previous Equation is satisfied, i.e. such that $T_{q,n} > \lfloor w_q^{2/3}n^{2/3} \rfloor$, on both sides, we obtain on $\xi$
\begin{align*}
\frac{w_p\si_{p}}{T_{p,n}}\sum_{q|T_{q,n} > \lfloor Aw_q^{2/3}n^{2/3} \rfloor}(T_{q,n}-1) \leq \sum_{q|T_{q,n} > \lfloor w_q^{2/3}n^{2/3} \rfloor}w_q \Bigg(\si_{q} +2\sqrt{2A}\frac{1}{w_q^{1/3} n^{1/3}}  \Bigg).
\end{align*}

This implies
\begin{equation}\label{eq:main.lb}
\frac{w_p\si_{p}}{T_{p,n}}(n - \sum_q Aw_q^{2/3}n^{2/3}) \leq \sum_{q=1}^K w_q\Bigg(\si_{q} + 2\sqrt{2A}\frac{1}{w_q^{1/3} n^{1/3}}  \Bigg).
\end{equation}

\noindent
{\bf Step~2.~Lower bound.} Equation~\ref{eq:main.lb} implies
\begin{align*}
\frac{w_p\si_{p}}{T_{p,n}}(n - A\sum_q w_q^{2/3}n^{2/3})
&\leq \Sigma_{\N_n} + \frac{2\sqrt{2A}\sum_q w_q^{2/3}}{n^{1/3}},
\end{align*}
on $\xi$, since $T_{q,n} -1 \geq \frac{T_{q,n}}{2}$ (as $T_{q,n}\geq 2$).
Finally, if $n \geq 2A\sum_q w_q^{2/3}n^{2/3}$, we obtain on $\xi$ the following bound
\begin{equation}\label{eq:almost-lower-bound-2}
\frac{w_p\si_{p}}{T_{p,n}} \leq \frac{\Sigma_{\N_n}}{n} + \big(4\sqrt{2A} + \Sigma_{\N_n} A \big) \frac{\sum_{q\in \N_n} w_q^{2/3}}{n^{4/3}}.
\end{equation}

\noindent
{\bf Step~2bis.~Lower bound on the number of pulls.} By using Equation~\ref{eq:almost-lower-bound-2} and the fact that $\frac{1}{1+x} \geq 1 - x$ one gets

\begin{equation*}
T_{p,n} \geq \lambda_{p,\Sigma_{\N_n}} \Big(n - \frac{\big(4\sqrt{2A} + \Sigma_{\N_n} A \big)}{\Sigma_{\N_n}} \big(\sum_{q\in \N_n} w_q^{2/3}\big) n^{2/3} \Big) \geq \lambda_{p,\Sigma_{\N_n}} \Big(n - B\big(\sum_{q\in \N_n} w_q^{1/3}\big)n^{2/3}\Big),
\end{equation*}
where $B =  \frac{\big(4\sqrt{2A} + \Sigma_{\N_n} A \big)}{\Sigma_{\N_n}}$.

% %
% 
% \noindent
% {\bf Step~3.~Regret.} By summing and using Equation~\ref{eq:almost-lower-bound} which holds for all $p$, we obtain
% 
% \begin{equation}\label{eq:regret1}
% L_n = \sum_p \frac{w_p^2\si_{p}^2}{T_{p,n}} \leq \frac{\Sigma_{\N_n}^2}{n} + \big(4\sqrt{2A} + \Sigma_{\N_n} A \big)\Sigma_{\N_n} \frac{\sum_{q\in \N_n} w_q^{2/3}}{n^{4/3}}.
% \end{equation}

This concludes the proof.
\end{proof}

\section{Proof of Theorem~\ref{th:algo.2}}\label{app:MC-ULCB}

% We consider the same event $\xi$ than in Lemma~\ref{lem:xi} and will prove all results on that event.
% 
% Note that by definition for $j \in \{2i, 2i+1\}$
% \begin{align}
% r_{[h+1,j]} &= \Big( w_{h+1} \hsi_{[h+1,j],t} + 2 \sqrt{A}\frac{w_{h+1}^{2/3}}{n^{1/3}} \Big) \ind{w_{h+1} \hsi_{[h+1,j^-],t} - w_{h+1} \hsi_{[h+1,j],t} \geq \sqrt{2A}\frac{w_{h+1}^{2/3}}{n^{1/3}}} \nonumber\\
% &+ \Big( w_{h+1} \hsi_{[h+1,j],t} - \sqrt{A}\frac{w_{h+1}^{2/3}}{n^{1/3}} \Big) \ind{w_{h+1} \hsi_{[h+1,j^-],t} - w_{h+1} \hsi_{[h+1,j],t} \leq - \sqrt{2A}\frac{w_{h+1}^{2/3}}{n^{1/3}}} \nonumber\\
% &+ \min\Big( w_{h+1}  \min\big( \hsi_{[h+1,j],t},\hsi_{[h+1,j^-],t}\big) + 2 \sqrt{A}\frac{w_{h+1}^{2/3}}{n^{1/3}} , \frac{r_{[h,i]}}{2} \Big) \nonumber\\
% &\times \ind{|w_{h+1} \hsi_{[h+1,j^-],t} - w_{h+1} \hsi_{[h+1,j],t}| \leq  \sqrt{2A}\frac{w_{h+1}^{2/3}}{n^{1/3}}}, \nonumber
% \end{align}
% where $j^-$ is the complementary of $j$ in $\{2i, 2i+1\}$. Note that the three indicators used in the definition of $r$ form a partition of the space.

%\subsection{Study of the first Exploration Phase of the algorithm}

\subsection{Some preliminary bounds}

Let $c = (8\tilde \Sigma +1)\sqrt{A}$. Note that $c \geq 1$.

Let $[h,i]$ be a stratum that is explored during the Exploration Phase, and split in its to children.

This implies that $w_{h} \hsi_{[h,i]}  \geq 6 H c\sqrt{A}\frac{w_{h}^{2/3}}{n^{1/3}}$. By definition, for $j \in \{2i, 2i+1\}$
\begin{align}
r_{[h+1,j]} &= \Big( \frac{w_{h+1} \hsi_{[h+1,j]} + c\sqrt{A}\frac{w_{h+1}^{2/3}}{n^{1/3}}}{w_{h} \tilde \si_{[h,i]}} \Big) r_{[h,i]} \ind{w_{h+1} \hsi_{[h+1,j^-]} - w_{h+1} \hsi_{[h+1,j]} \geq 2c \sqrt{A}\frac{w_{h+1}^{2/3}}{n^{1/3}}} \nonumber\\
&+ \Big( \frac{w_{h+1} \hsi_{[h+1,j]} - c\sqrt{A}\frac{w_{h+1}^{2/3}}{n^{1/3}}}{w_{h} \tilde \si_{[h,i]}} \Big) r_{[h,i]} \ind{w_{h+1} \hsi_{[h+1,j^-]} - w_{h+1} \hsi_{[h+1,j]} \leq - 2c\sqrt{A}\frac{w_{h+1}^{2/3}}{n^{1/3}}} \nonumber\\
&+ \min\Big( \frac{w_{h+1}  \min\big( \hsi_{[h+1,j]},\hsi_{[h+1,j^-]}\big) + c\sqrt{A}\frac{w_{h+1}^{2/3}}{n^{1/3}}}{w_{h} \tilde  \si_{[h,i]}} , \frac{1}{2} \Big)r_{[h,i]} \nonumber\\
&\times \ind{|w_{h+1} \hsi_{[h+1,j^-]} - w_{h+1} \hsi_{[h+1,j]}| \leq  2c\sqrt{A}\frac{w_{h+1}^{2/3}}{n^{1/3}}}, \nonumber
\end{align}
where $j^-$ is the complementary of $j$ in $\{2i, 2i+1\}$. Note that the three indicators used in the definition of $r$ form a partition of the domain.

\begin{lemma}\label{lem:additiv}
If on $\xi$ a node $[h,i]$ has two children $[h+1,2i]$ and $[h+1,2i+1]$ that have been explored by the algorithm, then $r_{[h+1,2i]} + r_{[h+1,2i+1]} \leq r_{[h,i]}$.  
\end{lemma}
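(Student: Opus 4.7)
The plan is to do a case analysis on which of the three indicators in the definition~\eqref{eq:r} of $r$ is triggered for each child. Writing $j^-$ for the sibling of $j$, observe that the three indicators partition the outcome space, and moreover that whenever $j$ triggers the ``smaller'' (first) indicator its sibling $j^-$ necessarily triggers the ``larger'' (second) indicator and vice-versa, while the third ``close'' indicator is triggered symmetrically for both children at once. This leaves only two effective cases to check.

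In the asymmetric case, assume without loss of generality that $2i$ triggers the first indicator and $2i+1$ the second. Then the $+c\sqrt{A}w_{h+1}^{2/3}/n^{1/3}$ correction in $r_{[h+1,2i]}$ cancels the $-c\sqrt{A}w_{h+1}^{2/3}/n^{1/3}$ correction in $r_{[h+1,2i+1]}$, so
\begin{align*}
r_{[h+1,2i]} + r_{[h+1,2i+1]} = \frac{w_{h+1}\bigl(\hsi_{[h+1,2i]} + \hsi_{[h+1,2i+1]}\bigr)}{w_{h}\,\tilde\si_{[h,i]}}\, r_{[h,i]}.
\end{align*}
Since $w_h = 2w_{h+1}$ by Assumption~\ref{ass:stratumequal}, the claim reduces to showing $\hsi_{[h+1,2i]} + \hsi_{[h+1,2i+1]} \leq 2\tilde\si_{[h,i]}$. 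The key observation is that BSS keeps siblings perfectly balanced, so the $2t_{h+1}$ samples entering $\tilde\si_{[h,i]}$ split into exactly $t_{h+1}$ samples per child, and these coincide with the samples entering $\hsi_{[h+1,2i]}$ and $\hsi_{[h+1,2i+1]}$. The law of total variance then yields the ANOVA identity
\begin{align*}
2\tilde\si_{[h,i]}^2 = \hsi_{[h+1,2i]}^2 + \hsi_{[h+1,2i+1]}^2 + \tfrac{1}{2}\bigl(\hmu_{[h+1,2i]} - \hmu_{[h+1,2i+1]}\bigr)^2,
\end{align*}
and applying $(a+b)^2 \leq 2(a^2 + b^2)$ gives $\hsi_{[h+1,2i]} + \hsi_{[h+1,2i+1]} \leq 2\tilde\si_{[h,i]}$ as required.

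The symmetric case is essentially built into the definition: both children receive the same min-expression capped at $\tfrac{1}{2}\,r_{[h,i]}$, so the sum is at most $r_{[h,i]}$ for free, without any property of the estimators being invoked. The only part that requires care is the exact sample-balancing argument for BSS so that the ANOVA identity holds with the correct weights (and in particular that $\hsi_{[h+1,j]}$ and $\tilde\si_{[h,i]}$ are computed from compatible sub-samples); the event $\xi$ plays no role in this lemma and is named only to align with the conventions of the global analysis. I do not expect any real obstacle beyond this bookkeeping.
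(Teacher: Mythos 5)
Your proposal is correct and follows essentially the same route as the paper's proof: the same implicit case analysis on the three indicators, the same cancellation of the $\pm c\sqrt{A}\,w_{h+1}^{2/3}/n^{1/3}$ corrections in the asymmetric case, the trivial bound by $\tfrac12 r_{[h,i]}$ per child in the symmetric case, and the same key inequality $w_{h+1}\bigl(\hsi_{[h+1,2i]}+\hsi_{[h+1,2i+1]}\bigr)\leq w_{h}\,\tilde\si_{[h,i]}$. The only difference is one of detail: the paper merely asserts this inequality ``by definition of $\hsi$ and $\tilde\si$ and the properties of the empirical variance,'' whereas you substantiate it via the exact BSS sample-splitting and the ANOVA identity together with $(a+b)^2\leq 2(a^2+b^2)$, which is a filling-in of the paper's terse justification rather than a different argument.
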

\begin{proof}
Note first that $w_{h+1} \hsi_{[h+1,j^-]} + w_{h+1} \hsi_{[h+1,j]} \leq w_{h} \tilde \si_{[h,i]}$ (by definition of $\hsi$ and $\tilde  \si$, and also because of the properties of the empirical variance).

The result follows from the definition of $r$ as for $j  \in \{2i,2i+1\}$, $\Big( \frac{w_{h+1} \hsi_{[h+1,j]} + c\sqrt{A}\frac{w_{h+1}^{2/3}}{n^{1/3}}}{w_{h} \tilde \si_{[h,i]}} \Big) \Big( \frac{w_{h+1} \hsi_{[h+1,j^-]} - c\sqrt{A}\frac{w_{h+1}^{2/3}}{n^{1/3}}}{w_{h} \tilde  \si_{[h,i],t}} \Big)  \leq 1$.
\end{proof}

\begin{lemma}\label{lem:r1}
For any stratum $\X_{[h,i]}$, if $r_{[h,i]}$ of depth smaller than $H$ is defined then on $\xi$
\begin{align*}
 \frac{(2H-h)}{2H}\Big(w_{[h,i]} \hsi_{[h,i]} -  c\sqrt{A} \frac{w_{[h,i]}^{2/3}}{n^{1/3}}\Big) \leq r_{[h,i]} \leq \frac{(H+2h)}{H} \Big(w_{[h,i]} \hsi_{[h,i]} +  c\sqrt{A} \frac{w_{[h,i]}^{2/3}}{n^{1/3}}\Big).
\end{align*}
\end{lemma}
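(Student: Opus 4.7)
The plan is to proceed by induction on the depth $h$, with the base case $h=0$ following immediately from the initialization $r_{[0,0]} = \hsi_{[0,0]} - c\sqrt{A}/n^{1/3}$: since $w_0=1$, the lower bound becomes $\hsi_{[0,0]} - c\sqrt{A}/n^{1/3}$ (equality) and the upper bound becomes $\hsi_{[0,0]} + c\sqrt{A}/n^{1/3}$, so both hold at $h=0$.

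For the inductive step, suppose the bounds hold at depth $h$ for node $[h,i]$ and that its child $[h+1,j]$ (with sibling $[h+1,j^-]$) has been assigned an $r$-value; the splitting rule in MC-ULCB then guarantees $w_h\hsi_{[h,i]} \geq 6Hc\sqrt{A}\,w_h^{2/3}/n^{1/3}$. On $\xi$, Lemma~\ref{lem:xi} gives $|\hsi_{[h,i]} - \sigma_{[h,i]}| \leq \sqrt{A}/(w_h^{1/3}n^{1/3})$ and the analogous bound for $\tilde\si_{[h,i]}$ and each $\hsi_{[h+1,\cdot]}$; combined with the splitting threshold this yields $w_h\tilde\si_{[h,i]} \geq 4Hc\sqrt{A}\,w_h^{2/3}/n^{1/3}$ (for $c \geq 1$). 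Moreover, from $\sigma_{[h,i]}^2 = \tfrac12(\sigma_{[h+1,2i]}^2 + \sigma_{[h+1,2i+1]}^2) + \tfrac14(\mu_{[h+1,2i]}-\mu_{[h+1,2i+1]})^2$ I obtain the superadditivity $w_h\sigma_{[h,i]} \geq w_{h+1}(\sigma_{[h+1,2i]}+\sigma_{[h+1,2i+1]})$, which after concentration gives
\[
w_h\tilde\si_{[h,i]} \;\geq\; w_{h+1}\hsi_{[h+1,2i]} + w_{h+1}\hsi_{[h+1,2i+1]} - O\!\Big(\tfrac{\sqrt{A}\,w_{h+1}^{2/3}}{n^{1/3}}\Big).
\]

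I then analyze each of the three cases in the defining Equation~\ref{eq:r}. In Cases~1 and~2, the sibling variances differ by at least $2c\sqrt{A}\,w_{h+1}^{2/3}/n^{1/3}$, and $r_{[h+1,j]}$ equals $r_{[h,i]}$ times a multiplicative factor of the form $(w_{h+1}\hsi_{[h+1,j]} \pm c\sqrt{A}w_{h+1}^{2/3}/n^{1/3})/(w_h\tilde\si_{[h,i]})$. Plugging in the inductive hypothesis and regrouping, the only ratio to control is $(w_h\hsi_{[h,i]} \pm c\sqrt{A}w_h^{2/3}/n^{1/3})/(w_h\tilde\si_{[h,i]})$; by the concentration above and the lower bound on $\tilde\si_{[h,i]}$, this ratio equals $1 \pm O(1/(Hc))$, hence is at most $1 + \tfrac{2}{H+2h}$ in the ``$+$'' direction and at least $1 - \tfrac{1}{2H-h}$ in the ``$-$'' direction. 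This exactly propagates the coefficient $(H+2h)/H \mapsto (H+2h+2)/H$ for the upper bound and $(2H-h)/(2H) \mapsto (2H-h-1)/(2H)$ for the lower bound at depth $h+1$. In Case~3 the defining factor is capped by $1/2$ and uses $\min(\hsi_{[h+1,j]},\hsi_{[h+1,j^-]})$, so the upper bound follows from $r_{[h+1,j]} \leq r_{[h,i]}/2$ together with the superadditivity-based comparison $\tfrac12(w_h\hsi_{[h,i]} + c\sqrt{A}w_h^{2/3}/n^{1/3}) \leq w_{h+1}\hsi_{[h+1,j]} + O(\sqrt{A}w_{h+1}^{2/3}/n^{1/3})$, while the lower bound uses that in this case $|w_{h+1}\hsi_{[h+1,j]}-w_{h+1}\hsi_{[h+1,j^-]}| \leq 2c\sqrt{A}w_{h+1}^{2/3}/n^{1/3}$, so replacing $\hsi_{[h+1,j]}$ by the $\min$ only shifts the estimate by an amount absorbed into the $c\sqrt{A}w_{h+1}^{2/3}/n^{1/3}$ correction.

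The main obstacle will be Case~3, where the asymmetry between the ``upper-confidence-for-the-smaller'' and ``lower-confidence-for-the-larger'' treatment of the siblings has to be reconciled with the target symmetric-looking bounds, together with the interaction between the $\min(\cdot,1/2)$ cap and the inductive factor $(H+2h)/H$; the bookkeeping here requires checking that the cap is never binding in a way that would violate the lower bound, which is where the split-triggering condition $w_h\hsi_{[h,i]} \geq 6Hc\sqrt{A}w_h^{2/3}/n^{1/3}$ and the precise choice $c = (8\tilde\Sigma+1)\sqrt{A}$ come in.
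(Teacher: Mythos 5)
Your induction skeleton is the paper's: the base case at $[0,0]$, the use of the split threshold $w_h\hsi_{[h,i]} \geq 6Hc\sqrt{A}\,w_h^{2/3}/n^{1/3}$ together with concentration on $\xi$ (and the closeness of $\hsi_{[h,i]}$ and $\tilde\si_{[h,i]}$) to pin the ratio $\big(w_h\hsi_{[h,i]} \pm c\sqrt{A}\,w_h^{2/3}/n^{1/3}\big)/\big(w_h\tilde\si_{[h,i]}\big)$ near $1$, and the coefficient propagation $(H+2h)/H \mapsto (H+2(h+1))/H$ and $(2H-h)/(2H) \mapsto (2H-(h+1))/(2H)$ are exactly how the paper handles Cases 1 and 2. (Minor quibble: the ratio deviation is $\Theta(1/H)$, not $O(1/(Hc))$ --- the $c$ in the threshold cancels against the $c\sqrt{A}$ term in the numerator --- but the bound $1/(2H)$ is all that is needed, so this is cosmetic.)

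The genuine gap is in Case 3, and precisely in the tool you propose for its upper bound. The ``superadditivity-based comparison'' $\tfrac12\big(w_h\hsi_{[h,i]} + c\sqrt{A}\,w_h^{2/3}/n^{1/3}\big) \leq w_{h+1}\hsi_{[h+1,j]} + O\big(\sqrt{A}\,w_{h+1}^{2/3}/n^{1/3}\big)$ is false: superadditivity bounds the parent's standard deviation from \emph{below} by the children's, never from above, and the slack is the between-children mean term $\tfrac14(\mu_{[h+1,2i]}-\mu_{[h+1,2i+1]})^2$ in the variance decomposition, which Case 3's condition does not control. Concretely, if $g$ is a step function on $\X_{[h,i]}$ with a jump of height $M$ between the two children and the noise is tiny, then $\hsi_{[h+1,2i]} \approx \hsi_{[h+1,2i+1]} \approx 0$, so Case 3's indicator fires, while $w_h\hsi_{[h,i]} \approx w_h M/2$ is arbitrarily large; your chain $r_{[h+1,j]} \leq r_{[h,i]}/2 \leq \dots$ then cannot reach the target. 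The repair does not go through the cap at all: in Case 3 the defining factor uses $\min\big(\hsi_{[h+1,j]},\hsi_{[h+1,j^-]}\big) \leq \hsi_{[h+1,j]}$, hence $r_{[h+1,j]} \leq \Big(\frac{w_{h+1}\hsi_{[h+1,j]} + c\sqrt{A}\,w_{h+1}^{2/3}/n^{1/3}}{w_h\tilde\si_{[h,i]}}\Big) r_{[h,i]}$, and the Case 1 ratio argument applies verbatim. Symmetrically, the lower bound when the cap binds (i.e.\ $r_{[h+1,j]} = r_{[h,i]}/2$), which you defer to ``bookkeeping'', is exactly where the paper uses the \emph{deterministic} empirical sub-additivity $w_{h+1}\big(\hsi_{[h+1,j]}+\hsi_{[h+1,j^-]}\big) \leq w_h\tilde\si_{[h,i]}$ (exact, because $\tilde\si_{[h,i]}$ is computed on the union of the children's samples, so no concentration slack is incurred): combined with Case 3's closeness condition it forces $\frac{w_{h+1}\hsi_{[h+1,j]} - c\sqrt{A}\,w_{h+1}^{2/3}/n^{1/3}}{w_h\tilde\si_{[h,i]}} \leq \tfrac12$, so $r_{[h,i]}/2$ dominates the same lower bound as in Case 2. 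Your approximate, population-level superadditivity could substitute there only after re-tracking its extra $O(\sqrt{A}\,w_{h+1}^{2/3}/n^{1/3})$ error through the constants; the exact empirical inequality avoids this entirely.
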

\begin{proof}
The proof is done by induction. Note first that $r_{[0,0]}= w_{[0,0]} \hsi_{[0,0]} + c \sqrt{A} \frac{w_{[0,0]}^{2/3}}{n^{1/3}}$. The result is thus satisfied for node $[0,0]$.

Assume that the property of Lemma~\ref{lem:r1} is satisfied for a given $[h,i]$ on $\xi$. 

Assume that the children of this node are opened. This implies that $w_{h} \hsi_{[h,i]}  \geq 6 H c \sqrt{A}\frac{w_{h}^{2/3}}{n^{1/3}}$, i.e.
\begin{align}\label{eq:2H}
\frac{1}{2H}  \geq 3 c \frac{\sqrt{A}\frac{w_{h}^{2/3}}{n^{1/3}}}{w_{h} \hsi_{[h,i]}}.
\end{align}

Let $j \in \{2i, 2i+1\}$. Note first that $w_{h+1} \hsi_{[h+1,j^-]} + w_{h+1} \hsi_{[h+1,j]} \leq w_{h} \tilde \si_{[h,i]}$ (by definition of $\hsi$ and $\tilde  \si$, and also because of the properties of the empirical variance), and that on $\xi$, $|w_{h} \tilde  \si_{[h,i]} - w_{h} \hsi_{[h,i]}|\leq 2  \sqrt{A} \frac{w_{[h,i]}^{2/3}}{n^{1/3}}$ as a node is open only if there are enough samples in it, i.e.~if there are more than $\lfloor A w_{[h,i]}^{2/3}n^{2/3} \rfloor$ samples. This together with Equation~\ref{eq:2H} implies that 
\begin{align}\label{eq:moins}
\frac{w_{[h,i]} \hsi_{[h,i]} -  c\sqrt{A} \frac{w_{[h,i]}^{2/3}}{n^{1/3}}}{w_{[h,i]} \tilde \si_{[h,i]}} \geq \frac{w_{[h,i]} \tilde  \si_{[h,i],t} -  3 c\sqrt{A} \frac{w_{[h,i]}^{2/3}}{n^{1/3}}}{w_{[h,i]} \tilde  \si_{[h,i]}} \geq 1 - \frac{1}{2H}.,
\end{align}
as $c \geq 1$. In the same way
%In the same way, 
\begin{align}\label{eq:plus}
 \frac{w_{[h,i]} \hsi_{[h,i]} +  c\sqrt{A} \frac{w_{[h,i]}^{2/3}}{n^{1/3}}}{w_{[h,i]} \tilde  \si_{[h,i]}} \leq 1 + \frac{1}{2H}.
\end{align}

%Assume that $|w_{h+1} \hsi_{[h+1,j^-],t} - w_{h+1} \hsi_{[h+1,j],t}| \geq 2\sqrt{A}\frac{w_{h+1}^{2/3}}{n^{1/3}}$.
By Equation~\ref{eq:moins}
\begin{align}
\Big( \frac{w_{h+1} \hsi_{[h+1,j]} - c\sqrt{A}\frac{w_{h+1}^{2/3}}{n^{1/3}}}{w_{h} \tilde \si_{[h,i]}} \Big) r_{[h,i]} &\geq \Big( w_{h+1} \hsi_{[h+1,j]} - c\sqrt{A}\frac{w_{h+1}^{2/3}}{n^{1/3}} \Big) \big(\frac{2H - h}{2H}\big) \big( 1 - \frac{1}{2H}\big)\nonumber \\
&\geq \Big( w_{h+1} \hsi_{[h+1,j]} - c\sqrt{A}\frac{w_{h+1}^{2/3}}{n^{1/3}} \Big) \big(\frac{2H - (h+1)}{2H}\big). \label{eq:moins1}
\end{align}
In the same way, by Equation~\ref{eq:plus}
\begin{align}
\Big( \frac{w_{h+1} \hsi_{[h+1,j]} + c\sqrt{A}\frac{w_{h+1}^{2/3}}{n^{1/3}}}{w_{h} \tilde \si_{[h,i]}} \Big) r_{[h,i]} &\leq \Big( w_{h+1} \hsi_{[h+1,j]} + c\sqrt{A}\frac{w_{h+1}^{2/3}}{n^{1/3}} \Big) \big(\frac{H + 2h}{H}\big) \big( 1 + \frac{1}{2H}\big)\nonumber \\
&\leq \Big( w_{h+1} \hsi_{[h+1,j]} + c\sqrt{A}\frac{w_{h+1}^{2/3}}{n^{1/3}} \Big) \big(1 + \frac{2h}{H} + \frac{1}{2H} + \frac{h}{H^2} \big)\nonumber \\
&\leq \Big( w_{h+1} \hsi_{[h+1,j]} + c\sqrt{A}\frac{w_{h+1}^{2/3}}{n^{1/3}} \Big) \big(1 + \frac{2h}{H} + \frac{3}{2H} \big)\nonumber \\
&\leq \Big( w_{h+1} \hsi_{[h+1,j]} + c\sqrt{A}\frac{w_{h+1}^{2/3}}{n^{1/3}} \Big) \big(\frac{H + 2(h+1)}{H}\big), \label{eq:plus1}
\end{align}
as $h \leq H$.

Assume that $|w_{h+1} \hsi_{[h+1,j]} - w_{h+1} \hsi_{[h+1,j^-]}| \leq 2c\sqrt{A}\frac{w_{h+1}^{2/3}}{n^{1/3}}$. Then $\frac{w_{h+1} \hsi_{[h+1,j]} - c\sqrt{A}\frac{w_{h+1}^{2/3}}{n^{1/3}}}{w_{h+1} \tilde  \si_{[h,i]}} \leq \frac{1}{2}$. It implies that, by Equation~\ref{eq:moins1}
\begin{align}
\frac{r_{[h,i]}}{2} &\geq \Big( \frac{w_{h+1} \hsi_{[h+1,j]} - c\sqrt{A}\frac{w_{h+1}^{2/3}}{n^{1/3}}}{w_{h} \tilde \si_{[h,i]}} \Big) r_{[h,i]} \nonumber \\
&\geq \Big( w_{h+1} \hsi_{[h+1,j]} - c\sqrt{A}\frac{w_{h+1}^{2/3}}{n^{1/3}} \Big) \big(\frac{2H - (h+1)}{2H}\big). \label{eq:moins2}
\end{align}
%As this is true for $j$ such that $w_{h+1} \hsi_{[h+1,j],t} \geq w_{h+1} \hsi_{[h+1,j^-],t}$, it is a fortiori true for the sub-stratum of smallest variance.

Assume that $|w_{h+1} \hsi_{[h+1,j]} - w_{h+1} \hsi_{[h+1,j^-]}| \geq - 2c\sqrt{A}\frac{w_{h+1}^{2/3}}{n^{1/3}}$. Then $\frac{w_{h+1} \hsi_{[h+1,j]} + c\sqrt{A}\frac{w_{h+1}^{2/3}}{n^{1/3}}}{w_{h+1} \tilde \si_{[h,i]}} \geq \frac{1}{2}$. It implies that, by by Equation~\ref{eq:plus1}
\begin{align}
\frac{r_{[h,i]}}{2} &\leq \Big( \frac{w_{h+1} \hsi_{[h+1,j]} + c\sqrt{A}\frac{w_{h+1}^{2/3}}{n^{1/3}}}{w_{h} \tilde \si_{[h,i]}} \Big) r_{[h,i]} \nonumber \\
&\leq \Big( w_{h+1} \hsi_{[h+1,j]} + c\sqrt{A}\frac{w_{h+1}^{2/3}}{n^{1/3}} \Big) \big(\frac{H + 2(h+1)}{H}\big). \label{eq:plus2}
\end{align}
%As this is true for $j$ such that $w_{h+1} \hsi_{[h+1,j],t} \leq w_{h+1} \hsi_{[h+1,j^-],t}$, it is a fortiori true for the sub-stratum of biggest variance.

From Equations~\ref{eq:moins1} and~\ref{eq:moins2}, from the definition of $r$, and from the fact that $\Big( \frac{w_{h+1} \hsi_{[h+1,j]} - c\sqrt{A}\frac{w_{h+1}^{2/3}}{n^{1/3}}}{w_{h} \tilde \si_{[h,i]}} \Big) r_{[h,i]} \leq \Big( \frac{w_{h+1} \hsi_{[h+1,j]} + c\sqrt{A}\frac{w_{h+1}^{2/3}}{n^{1/3}}}{w_{h} \tilde \si_{[h,i]}} \Big) r_{[h,i]}$, we deduce that
\begin{align*}
r_{[h+1,j]}
&\geq \Big( w_{h+1} \hsi_{[h+1,j]} - c\sqrt{A}\frac{w_{h+1}^{2/3}}{n^{1/3}} \Big) \big(\frac{2H - (h+1)}{2H}\big),
\end{align*}
and finish the induction for the left-hand-side on $\xi$.

In the same way, by combining Equations~\ref{eq:plus1} and~\ref{eq:plus2}, we finish the induction for the right-hand-side on $\xi$.

% 
% \begin{align*}
% \Big(w_{h+1} \hsi_{[h+1,j],t} - \sqrt{A}\frac{w_{h+1}^{2/3}}{n^{1/3}} \leq \Big( \frac{w_{h+1} \hsi_{[h+1,j],t} - \sqrt{A}\frac{w_{h+1}^{2/3}}{n^{1/3}}}{w_{h} \hsi_{[h,i],t}} \Big) r_{[h,i]} \leq \Big( \frac{w_{h+1} \hsi_{[h+1,j],t} + \sqrt{A}\frac{w_{h+1}^{2/3}}{n^{1/3}}}{w_{h} \hsi_{[h,i],t}} \Big) r_{[h,i]} 
% \end{align*}
% 
% 
% By definition, for $j \in \{2i, 2i+1\}$
% \begin{align}
% r_{[h+1,j]} &= \Big( \frac{w_{h+1} \hsi_{[h+1,j],t} + \sqrt{A}\frac{w_{h+1}^{2/3}}{n^{1/3}}}{w_{h} \hsi_{[h,i],t}} \Big) r_{[h,i]} \ind{w_{h+1} \hsi_{[h+1,j^-],t} - w_{h+1} \hsi_{[h+1,j],t} \geq 2\sqrt{A}\frac{w_{h+1}^{2/3}}{n^{1/3}}} \nonumber\\
% &+ \Big( \frac{w_{h+1} \hsi_{[h+1,j],t} - \sqrt{A}\frac{w_{h+1}^{2/3}}{n^{1/3}}}{w_{h} \hsi_{[h,i],t}} \Big) r_{[h,i]} \ind{w_{h+1} \hsi_{[h+1,j^-],t} - w_{h+1} \hsi_{[h+1,j],t} \leq - 2\sqrt{A}\frac{w_{h+1}^{2/3}}{n^{1/3}}} \nonumber\\
% &+ \min\Big( \frac{w_{h+1}  \min\big( \hsi_{[h+1,j],t},\hsi_{[h+1,j^-],t}\big) + \sqrt{A}\frac{w_{h+1}^{2/3}}{n^{1/3}}}{w_{h} \hsi_{[h,i],t}} , \frac{1}{2} \Big)r_{[h,i]} \nonumber\\
% &\times \ind{|w_{h+1} \hsi_{[h+1,j^-],t} - w_{h+1} \hsi_{[h+1,j],t}| \leq  2\sqrt{A}\frac{w_{h+1}^{2/3}}{n^{1/3}}}, \nonumber
% \end{align}
% where $j^-$ is the complementary of $j$ in $\{2i, 2i+1\}$. Note that the three indicators used in the definition of $r$ form a partition of the space.
% 
% 
% 

\end{proof}

\begin{corollary}\label{lem:r}
 For any stratum $\X_{[h,i]}$, if $r_{[h,i]}$ is defined then on $\xi$
\begin{align*}
 \frac{(2H-h)}{2H}\Big(w_{[h,i]} \si_{[h,i]} -  2c\sqrt{A} \frac{w_{[h,i]}^{2/3}}{n^{1/3}}\Big) \leq r_{[h,i]} \leq \frac{(H+2h)}{H} \Big(w_{[h,i]} \si_{[h,i]} +  2c\sqrt{A} \frac{w_{[h,i]}^{2/3}}{n^{1/3}}\Big).
\end{align*}
\end{corollary}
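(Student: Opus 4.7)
The plan is to deduce Corollary~\ref{lem:r} directly from Lemma~\ref{lem:r1} by replacing, in its bounds, the empirical standard deviation $\hsi_{[h,i]}$ with the true standard deviation $\si_{[h,i]}$, and absorbing the resulting discrepancy into the existing $c\sqrt{A}\,w_h^{2/3}/n^{1/3}$ slack.

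First I would observe that whenever $r_{[h,i]}$ is defined, the stratum $\X_{[h,i]}$ has received at least $t_h = \lfloor A w_h^{2/3} n^{2/3}\rfloor$ samples collected through BSS. This is a direct consequence of the MC-ULCB construction: the value $r_{[h+1,j]}$ for $j\in\{2i,2i+1\}$ is assigned only when the parent node $[h,i]$ is split, which the algorithm performs only once $T_{[h,i],t} = 2 t_{h+1}$; by the properties of BSS, each child then carries at least $t_{h+1}$ samples, so in particular the empirical quantity $\hsi_{[h+1,j]}$ defined in equation~\ref{eq:estim-var2} is built on $t_{h+1}$ samples (the root case $r_{[0,0]}$ is handled analogously at sample size $t_0$). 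Applying the definition of $\xi$ in Lemma~\ref{lem:xi} at sample size $t_h$ then yields on $\xi$
\begin{align*}
w_h\, \bigl|\hsi_{[h,i]} - \si_{[h,i]}\bigr| \;\leq\; \frac{w_h\, A}{\sqrt{t_h}} \;\leq\; \frac{\sqrt{A}\, w_h^{2/3}}{n^{1/3}}.
\end{align*}

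Plugging this into the upper bound of Lemma~\ref{lem:r1} I would get
\begin{align*}
r_{[h,i]} \;\leq\; \frac{H+2h}{H}\Bigl(w_h \si_{[h,i]} + (c+1)\sqrt{A}\frac{w_h^{2/3}}{n^{1/3}}\Bigr) \;\leq\; \frac{H+2h}{H}\Bigl(w_h \si_{[h,i]} + 2c\sqrt{A}\frac{w_h^{2/3}}{n^{1/3}}\Bigr),
\end{align*}
where the last step uses that $c = (8\tilde\Sigma+1)\sqrt{A} \geq 1$ and therefore $c+1\leq 2c$. A completely symmetric manipulation, replacing $w_h\hsi_{[h,i]}$ by $w_h\si_{[h,i]} - \sqrt{A}w_h^{2/3}/n^{1/3}$ in the lower bound of Lemma~\ref{lem:r1}, delivers the matching lower bound on $r_{[h,i]}$.

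There is no real obstacle here: the proof is a one-step application of the concentration event $\xi$ to the empirical bound already proved in Lemma~\ref{lem:r1}. The only delicate point worth flagging is matching the exact sample count $t_h$ used in the definition of $\hsi_{[h,i]}$ with the $t$-dependent fluctuation appearing in the definition of $\xi$, which is formulated in terms of dyadic truncations $2^{\lfloor \log t\rfloor}$; this may cost at most a harmless $\sqrt{2}$ factor that is absorbed into the constant $A$ (and is treated in exactly the same way inside the proof of Lemma~\ref{lem:r1}).
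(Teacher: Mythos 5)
Your proposal is correct and matches the paper's own (one-line) argument: the paper likewise deduces the corollary from Lemma~\ref{lem:r1} by invoking the event $\xi$ to replace $\hsi_{[h,i]}$ with $\si_{[h,i]}$ at cost $\sqrt{A}\,w_h^{2/3}/n^{1/3}$ (available since $r_{[h,i]}$ is only defined once the stratum holds at least $\lfloor A w_h^{2/3} n^{2/3}\rfloor$ BSS samples), and then absorbs this into the slack via $c\geq 1$, i.e.~$c+1\leq 2c$. The dyadic-truncation subtlety in the definition of $\xi$ that you flag is glossed over in the paper in exactly the same way, so your treatment is faithful to it.
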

\begin{proof}
 This is straightforward from Lemma~\ref{lem:r1}, by the definition of $\xi$ and as $c \geq 1$.
\end{proof}

\begin{lemma}\label{lem:rbig}
For any stratum $\X_{[h,i]}$, if $r_{[h,i]}$ is defined then on $\xi$
\begin{align*}
r_{[h,i]} \times \Big(\frac{n}{4 \tilde \Sigma}\Big) > A w_h^{2/3} n^{2/3}.
\end{align*}
\end{lemma}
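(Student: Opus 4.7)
The plan is to prove this by case analysis on the three branches in the recursive definition of $r_{[h,i]}$, combined with a direct application of Lemma~\ref{lem:r1} to the parent of $[h,i]$, rather than a chained induction on depth. The key slack comes from the fact that the splitting condition triggers only when $w_h\hsi_{[h,i]} \geq 6Hc\sqrt{A}w_h^{2/3}/n^{1/3}$, which is much larger than the error term $c\sqrt{A}w_h^{2/3}/n^{1/3}$ thanks to the factor $6H$.

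For the base case $[h,i]=[0,0]$, $r_{[0,0]} = \hsi_{[0,0]} - c\sqrt{A}/n^{1/3}$. Substituting $c = (8\tilde\Sigma+1)\sqrt{A}$ and $\tilde\Sigma = \hsi_{[0,0]} + \sqrt{A}/n^{1/3}$, the target $r_{[0,0]} > 4\tilde\Sigma A/n^{1/3}$ reduces to an inequality that holds whenever $\hsi_{[0,0]}$ is not too small relative to $A/n^{1/3}$. If it fails, the test $r_{[0,0]}/T_{[0,0],t_0} > 4\tilde\Sigma/n$ already fails at $T_{[0,0],t_0} = t_0$, so no $r$ at depth $\geq 1$ is ever defined and the statement is vacuous there.

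For $h \geq 1$, the parent $[h-1,\lfloor i/2 \rfloor]$ was split, so its splitting condition $w_{h-1}\hsi_{[h-1,\cdot]} \geq 6Hc\sqrt{A}w_{h-1}^{2/3}/n^{1/3}$ holds. Applying Lemma~\ref{lem:r1} to the parent yields $r_{[h-1,\cdot]} \geq \tfrac{2H-(h-1)}{2H}\bigl(w_{h-1}\hsi_{[h-1,\cdot]} - c\sqrt{A}w_{h-1}^{2/3}/n^{1/3}\bigr)$, which already exceeds $4\tilde\Sigma A w_{h-1}^{2/3}/n^{1/3}$ by a factor of order $H$; moreover, on $\xi$ the ratio $r_{[h-1,\cdot]}/(w_{h-1}\tilde\si_{[h-1,\cdot]})$ is bounded below by a constant essentially $\tfrac12$, since $\tilde\si_{[h-1,\cdot]}$ and $\hsi_{[h-1,\cdot]}$ differ by $O(\sqrt{A}w_{h-1}^{2/3}/n^{1/3})$, which is $O(1/H)$-negligible compared to the splitting threshold. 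I then dispatch over the three cases in the definition of $r_{[h,i]}$: in the smaller-sibling case, the inflation term $c\sqrt{A}w_h^{2/3}/n^{1/3}$ in the numerator alone gives $r_{[h,i]} \geq (c\sqrt{A}w_h^{2/3}/n^{1/3})\cdot r_{[h-1,\cdot]}/(w_{h-1}\tilde\si_{[h-1,\cdot]}) > (8\tilde\Sigma+1)A w_h^{2/3}/(2n^{1/3}) > 4\tilde\Sigma A w_h^{2/3}/n^{1/3}$; in the larger-sibling case, subadditivity of empirical standard deviations combined with the sibling gap forces $w_h\hsi_{[h,i]} - c\sqrt{A}w_h^{2/3}/n^{1/3} \geq w_{h-1}\tilde\si_{[h-1,\cdot]}/2$, hence $r_{[h,i]} \geq r_{[h-1,\cdot]}/2$; the close-sibling case with the $\tfrac12$-cap likewise gives $r_{[h,i]} \geq r_{[h-1,\cdot]}/2$. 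In the last two cases the large parent-level bound already yields the conclusion with room to spare.

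The main obstacle I expect lies in the smaller-sibling case, where the inequality $(8\tilde\Sigma+1)/2 > 4\tilde\Sigma$ is strict but not by a wide margin, so one must keep careful track of the lower-order error terms from the $\xi$-concentration of $\tilde\si$ and $\hsi$ around $\si$. The precise tuning $c = (8\tilde\Sigma+1)\sqrt{A}$ is designed exactly for this slack, and the $6H$ factor in the splitting condition is what absorbs the remaining correction terms uniformly in $h \leq H$.
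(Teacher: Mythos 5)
Your framework (case analysis over the three branches of Equation~\ref{eq:r}, Lemma~\ref{lem:r1} applied at the parent, and the ratio bound $r_{\mathrm{parent}}/(w_{h-1}\tilde\si_{\mathrm{parent}})\gtrsim\tfrac12$) is essentially the paper's, and your smaller-sibling case is correct — it is exactly the paper's Equation~\ref{eq:minsamples2}, driven by the additive inflation term $c\sqrt{A}w_h^{2/3}/n^{1/3}$ alone. The genuine gap is in the other two cases. You claim that ``subadditivity of empirical standard deviations combined with the sibling gap forces $w_h\hsi_{[h,i]} - c\sqrt{A}w_h^{2/3}/n^{1/3} \geq w_{h-1}\tilde\si_{[h-1,\cdot]}/2$, hence $r_{[h,i]}\geq r_{[h-1,\cdot]}/2$.'' This is false. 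Subadditivity only bounds the \emph{sum} of the children's weighted standard deviations from above by the parent's; it gives no lower bound on either child in terms of the parent, because the parent's variance also contains the between-children-means term, which is invisible to both children's variances. Concretely: let the two children have within-stratum standard deviations $\approx 0$ and $\approx \eps$ with $w_h\eps$ just above the gap threshold $2c\sqrt{A}w_h^{2/3}/n^{1/3}$, but give them order-one different means $g$. Then $\tilde\si_{[h-1,\cdot]}$ is of the order of the mean separation, the splitting condition for the parent holds, and the larger child falls in the LCB branch, yet $\big(w_h\hsi_{[h,i]}-c\sqrt{A}w_h^{2/3}/n^{1/3}\big)\big/\big(w_{h-1}\tilde\si_{[h-1,\cdot]}\big)\to 0$, so both your intermediate inequality and the conclusion $r_{[h,i]}\geq r_{[h-1,\cdot]}/2$ fail. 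The same scenario with the two small children variances nearly equal breaks your close-sibling case: there the $\min(\cdot,\tfrac12)$ is achieved by the ratio term, not by $\tfrac12$.

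The lemma nevertheless holds in these cases, and the paper's proof shows how: in the LCB branch the gap condition itself gives $w_h\hsi_{[h,i]} - c\sqrt{A}\frac{w_h^{2/3}}{n^{1/3}} \geq w_h\hsi_{[h,i^-]} + c\sqrt{A}\frac{w_h^{2/3}}{n^{1/3}}$, i.e.\ the LCB value dominates the sibling's UCB value, so the LCB child inherits precisely the bound you proved in the smaller-sibling case, $r_{[h,i]} \geq \tfrac12 c\sqrt{A}\frac{w_h^{2/3}}{n^{1/3}} > 4\tilde\Sigma A \frac{w_h^{2/3}}{n^{1/3}}$. In the close-sibling case one splits on which term achieves the minimum: the ratio term is handled by the same UCB argument, and the $\tfrac12$ term by the parent-level bound $r_{[h-1,\cdot]}\geq \tfrac52 c\sqrt{A}\frac{w_{h-1}^{2/3}}{n^{1/3}}$ (the paper's Equation~\ref{eq:minsamples}). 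In other words, your smaller-sibling computation is the load-bearing argument for \emph{all three} branches; the separate ``child inherits half the parent'' route you propose for the other two branches is the part that breaks, precisely in the regime where splitting is most useful (parent variance dominated by the separation of the children's means). A minor additional point: your base case does not establish the lemma's claim for $[0,0]$ itself when $\hsi_{[0,0]}$ is small (you only argue no deeper $r$ gets defined); the paper's proof likewise covers only children of split nodes, so this matches the paper's implicit scope, but it should be stated rather than called vacuous.
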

\begin{proof}
%The proof is done by induction. Note first that $r_{[0,0]} = \tilde \Sigma  = w_{[0,0]} \hsi_{[0,0]} + c \sqrt{A} \frac{w_{[0,0]}^{2/3}}{n^{1/3}} \geq  c \sqrt{A} \frac{w_{[0,0]}^{2/3}}{n^{1/3}}$. The result is thus satisfied for node $[0,0]$.

Let $[h,i]$ be a node. 

Assume that the children of this node are explored at time $t$. This implies that $w_{h} \hsi_{[h,i]}  \geq 6 H c \sqrt{A}\frac{w_{h}^{2/3}}{n^{1/3}}$, and then by Lemma~\ref{lem:r1}, on $\xi$, (as $\frac{2H - h}{2H} \geq \frac{1}{2}$).
\begin{align*}
r_{[h,i]} &\geq \frac{1}{2} \Big( w_{h} \hsi_{[h,i]} - c \sqrt{A}\frac{w_{h}^{2/3}}{n^{1/3}} \Big)\\
&\geq \frac{1}{2} \Big( 6 H c \sqrt{A}\frac{w_{h}^{2/3}}{n^{1/3}} - c \sqrt{A}\frac{w_{h}^{2/3}}{n^{1/3}} \Big)\\
&\geq \frac{5}{2}  c \sqrt{A}\frac{w_{h}^{2/3}}{n^{1/3}},
\end{align*}
as $H \geq 2$. This implies as $c > 8 \tilde \Sigma \sqrt{A}$ that
\begin{align}\label{eq:minsamples}
 \frac{r_{[h,i]}}{2} \Big(\frac{n}{4 \tilde \Sigma} \Big) > A w_{h+1}^{2/3} n^{2/3}.
\end{align}

%Assume that $|w_{h+1} \hsi_{[h+1,j^-],t} - w_{h+1} \hsi_{[h+1,j],t}| \geq 2\sqrt{A}\frac{w_{h+1}^{2/3}}{n^{1/3}}$.
By Equation~\ref{eq:moins} (as $\frac{2H - h}{2H} \geq \frac{1}{2}$)
\begin{align}
\Big( \frac{w_{h+1} \hsi_{[h+1,j]} + c\sqrt{A}\frac{w_{h+1}^{2/3}}{n^{1/3}}}{w_{h} \tilde \si_{[h,i]}} \Big) r_{[h,i]} &\geq  \frac{1}{2}\Big( w_{h+1} \hsi_{[h+1,j]} + c\sqrt{A}\frac{w_{h+1}^{2/3}}{n^{1/3}} \Big) \nonumber\\
&\geq \frac{1}{2} c\sqrt{A}\frac{w_{h+1}^{2/3}}{n^{1/3}}. \nonumber
\end{align}
This implies as $c > 8 \tilde \Sigma \sqrt{A}$ that
\begin{align}\label{eq:minsamples2}
\Big( \frac{w_{h+1} \hsi_{[h+1,j]} + c\sqrt{A}\frac{w_{h+1}^{2/3}}{n^{1/3}}}{w_{h} \tilde \si_{[h,i]}} \Big) r_{[h,i]}  \Big(\frac{n}{4 \tilde \Sigma} \Big) > A w_{h+1}^{2/3} n^{2/3} 
\end{align}

Let $j^* = \arg\min_j r_{[h+1,j]}$. For $j = \{2i,2i+1\}$, we know that from the definition of $r$, $r_{[h+1,j]} \geq \min\Big[\Big( \frac{w_{h+1} \hsi_{[h+1,j^*]} + c\sqrt{A}\frac{w_{h+1}^{2/3}}{n^{1/3}}}{w_{h} \tilde \si_{[h,i]}} \Big) r_{[h,i]}, \frac{r_{[h,i]}}{2}\Big]$. From that and Equations~\ref{eq:minsamples} and~\ref{eq:minsamples2} we deduce the  Lemma.

\end{proof}

\subsection{Study of the Exploration Phase}

\begin{lemma}\label{lem:finition}
 On $\xi$, the Exploration phase ends at $T <n$ and all the nodes $x$ of partition $\N^e_n$ are such that $\frac{r_x}{T_{x,T}+1} \leq \frac{4\tilde{\Sigma}}{n}$ and $\frac{r_x}{T_{x,T}} > \frac{4\tilde{\Sigma}}{n}$.
\end{lemma}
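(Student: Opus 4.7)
The plan is to split the proof into two components: first, verifying the two termination inequalities for each $x\in\N^e_n$, which follow directly from the mechanics of the while loop, and second, establishing the strict bound $T<n$, which carries the real technical content.

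For the termination inequalities, once the while loop exits, the partition is frozen so $\N^e_n=\N^e_T$, and the exit condition of the loop says precisely that $\frac{r_x}{T_{x,T}+1}\le\frac{4\tilde\Sigma}{n}$ for every $x\in\N^e_T$, where $T_{x,T}$ is read as the count of $x$ just before the last sample placed in it and $T_{x,T}+1$ as the final count. The reverse inequality $\frac{r_x}{T_{x,T}}>\frac{4\tilde\Sigma}{n}$ simply records that the last sample at $x$ was drawn because the condition was still active. The only nontrivial check is that every $x\in\N^e_T$ is indeed sampled at least once after being created: when the parent of $x$ is split, the BSS rule (which forces sibling counts equal when the parent has $T_{\text{par}}=2t_{h_x}$ samples) gives $T_{x,\tau_x}=t_{h_x}$, while Lemma~\ref{lem:rbig} guarantees $r_x n/(4\tilde\Sigma)>t_{h_x}$, so the sampling condition is active right at creation and the while loop must pick $x$.

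For $T<n$ I will combine a per-node bound on $T_{x,T}$ with the sub-additivity provided by Lemma~\ref{lem:additiv}. The while-loop condition yields $T_{x,T}\le r_x n/(4\tilde\Sigma)+1$ for every $x\in\N^e_T$; summing and telescoping via Lemma~\ref{lem:additiv} iteratively up to the root gives $\sum_{x\in\N^e_T} r_x\le r_{[0,0]}$, and therefore
\[
T=\sum_{x\in\N^e_T} T_{x,T}\le\frac{n\,r_{[0,0]}}{4\tilde\Sigma}+|\N^e_T|.
\]
The initialization $r_{[0,0]}=\tilde\Sigma-(c+1)\sqrt{A}/n^{1/3}$ makes the first term at most $n/4$ minus a positive correction of order $n^{2/3}$, so everything reduces to showing that $|\N^e_T|$ is dominated by this correction.

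The hard part is precisely the bound on $|\N^e_T|$, since the crude combinatorial estimate $|\N^e_T|\le 2^H=\Theta(n)$ is useless. The idea I will use is that every $x\in\N^e_T$ at depth $h_x$ arrives in $\N^e$ with at least $t_{h_x}\ge t_H$ samples, inherited from its parent via the BSS rule at the moment of the split. Hence $T\ge|\N^e_T|\cdot t_H$, i.e.\ $|\N^e_T|\le T/t_H$, and substituting back gives the self-referential inequality
\[
T\le\frac{n}{4}-\frac{(c+1)\sqrt{A}\,n^{2/3}}{4\tilde\Sigma}+\frac{T}{t_H},
\]
which solves to $T\le\frac{t_H}{t_H-1}\bigl(n/4-\Theta(n^{2/3})\bigr)<n$ as soon as $t_H\ge 2$. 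Since $A$ grows with $\log n$ while $w_H^{2/3}n^{2/3}$ is of constant order (one checks $w_H^{2/3}n^{2/3}\ge 1/(2(3f_{\max})^3)^{2/3}$ from the definition of $H$), one obtains $t_H=\lfloor A w_H^{2/3} n^{2/3}\rfloor\ge 2$ for $n$ large enough, closing the argument. I expect the closing of this circular bound on $|\N^e_T|$, and the careful accounting of how BSS distributes samples between siblings, to be the steps that need the most care; the rest is direct bookkeeping of the while-loop plus invocations of Lemmas~\ref{lem:additiv} and~\ref{lem:rbig}.
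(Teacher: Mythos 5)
Your proof is correct in substance and is built from the same ingredients as the paper's (the while-loop sampling/exit conditions, Lemma~\ref{lem:additiv} to telescope $\sum_{x\in\N^e_n} r_x \le r_{[0,0]} \le \tilde\Sigma$, and Lemma~\ref{lem:rbig} at creation time), but your route to $T<n$ is genuinely different and considerably heavier than the paper's. The paper first establishes the inequality $\frac{r_x}{T_{x,T}} > \frac{4\tilde\Sigma}{n}$ for \emph{every} node of $\N^e_n$: for nodes resampled after creation this is the last-sample argument, and for nodes never resampled it holds \emph{directly}, because then $T_{x,T}=\lfloor A w_x^{2/3}n^{2/3}\rfloor$ and Lemma~\ref{lem:rbig} gives $r_x n/(4\tilde\Sigma) > A w_x^{2/3}n^{2/3}$ --- the paper does not need to argue, as you do, that such nodes cannot exist. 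This strict inequality gives the per-node bound $T_{x,T} < r_x n/(4\tilde\Sigma)$ with no additive $+1$, so a single summation yields $T=\sum_{x\in\N^e_n} T_{x,T} \le n/4 < n$ in one line: no counting of $|\N^e_T|$, no self-referential inequality, and no requirement that $t_H\ge 2$. By keeping the $+1$ you are forced to control $|\N^e_T|$ via the lower bound $T_{x,T}\ge t_{h_x}\ge t_H$ and then close a fixed-point inequality; this works, but it buys you only $T\le n/2$ instead of $n/4$ and costs the extra hypothesis ``$n$ large enough so that $t_H\ge 2$,'' which the lemma statement does not carry (the paper does make a comparable implicit assumption elsewhere, e.g.\ in Lemma~\ref{lem:open}, so this is a blemish rather than an error).

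One genuine issue of logical ordering must be fixed: your first part claims that ``the while loop must pick $x$'' after its creation, which presupposes that the loop terminates by its exit condition rather than by budget exhaustion --- i.e.\ it presupposes $T<n$, which you only prove in your second part. The circularity is harmless because your second part is self-contained (the bound $T_{x,T}\le r_x n/(4\tilde\Sigma)+1$ and the count lower bound $T_{x,T}\ge t_{h_x}$ hold at any time, including at a budget truncation), so the two parts simply need to be swapped; the paper sidesteps this by defining $T=n$ when the loop does not end and proving the ``$>$'' inequality at that possibly truncated time.
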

\begin{proof}
Let $T$ be the time at which the exploration phase ends (if it does not end, write $T = n$). %In order to prove that the Exploration Phase ends before all the budget is used, it is sufficient to prove that after having pulled all nodes such that $r_x \leq \frac{4\tilde{\Sigma}}{n}$, one has $\sum_{x \in \N^e_n} T_{x,t} < n$.

One needs to pull a node in $\N^e_n$ at a time $t' < T$ if and only if
\begin{align*}
 \frac{r_x}{T_{x,t'}+1} > \frac{4\tilde{\Sigma}}{n}.
\end{align*}
We thus know that the last time stratum $\X_x$ is sampled during the Exploration Phase (and thus at the end of the Exploration Phase)
\begin{align*}
 \frac{r_x}{T_{x,T}} \geq \frac{4\tilde{\Sigma}}{n}.
\end{align*}

If stratum $\X_x$ is not sampled during the Exploration Phase after having been opened, then
\begin{align*}
 T_{x,T} = \lfloor A w_x^{2/3}n^{2/3} \rfloor.
\end{align*}
Note that by Lemma~\ref{lem:rbig}, on $\xi$ $r_x \frac{n}{4 \tilde \Sigma}> A w_x^{2/3} n^{2/3}$. From that we deduce that
\begin{align*}
 \frac{r_x}{T_{x,T}} > \frac{4\tilde{\Sigma}}{n},
\end{align*}
and from that together with the fact that we only sample a node at time $t<T$ if  $\frac{r_x}{T_{x,t}} > \frac{4\tilde{\Sigma}}{n}$, we deduce the second part of the Lemma, i.e.~that on $\xi$, $\forall x \in \N^e_n,  \frac{r_x}{T_{x,T}} > \frac{4\tilde{\Sigma}}{n}$.

%Note also that the maximal depth of the partition is $H = \lfloor \frac{\log\big((3 f_{\max})^3 n\big)}{\log(2)} \rfloor +1$. There are thus less than $(3 f_{\max})^3 n$ strata in partition $\N^e_n$.

Note now that $\sum_{x \in \N_n^e} r_x \leq r_{[0,0]} = \tilde{\Sigma}$: it is straightforward by Lemma~\ref{lem:additiv}. This directly leads to:
\begin{align*}
 \tilde{\Sigma} \geq \sum_{x \in \N_n^e} r_x \geq \frac{4\tilde{\Sigma}}{n} \sum_{x \in \N_n^e}T_{x,T}.
\end{align*}

This directly implies that $\sum_{x \in \N_n^e}T_{x,T} \leq \frac{n}{4} < n$, which leads to the desired result, i.e.~that the Exploration Phase ends before all the budget has been used. This implies that on $\xi$, $\forall x \in \N^e_n,  \frac{r_x}{T_{x,T}+1} \leq \frac{4\tilde{\Sigma}}{n}$.

\end{proof}

\begin{lemma}\label{lem:open}
 
Let $x$ be a node such that $w_x \si_x \geq 14 H c \sqrt{A}\frac{w_{x}^{2/3}}{n^{1/3}}$ and also such that, for all its parents, $w_y \si_y \geq 14 H c \sqrt{A}\frac{w_{y}^{2/3}}{n^{1/3}}$. 

Then on $\xi$, at the end $T$ of the Exploration phase phase, node $x$ is open, i.e.~$x \in \T_n^e$, which also implies $T_{x,T} \geq A w_x^{2/3}n^{2/3} (\geq 2)$.
\end{lemma}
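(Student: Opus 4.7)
The plan is to induct on the depth $h$ of $x$. The base case $h=0$ is immediate since $[0,0] \in \T_n^e$ by the initialization of the algorithm. For the inductive step, the parent $y$ of $x$ (at depth $h-1$) lies in $\T_n^e$ by the hypothesis on all of $x$'s ancestors, so it suffices to show that $y$ is actually split during the Exploration Phase: this inserts both children of $y$ (including $x$) into $\N_n^e \subseteq \T_n^e$, and since splitting occurs once $T_{y,t} = 2 t_h$, the BSS distributes the samples evenly so that $T_{x,T} \geq t_h$. The depth constraint $h-1 < H$ required by the splitting rule follows immediately: the second statement of Lemma~\ref{lem:xi} gives $w_h \sigma_{[h,i]} \leq w_h^{2/3}/n^{1/3}$ whenever $h \geq H$, which is incompatible with the assumption $w_h \sigma_x \geq 14Hc\sqrt{A}\, w_h^{2/3}/n^{1/3}$ since $14 H c \sqrt{A} > 1$; hence $h < H$ and in particular $h-1 < H$.

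To argue that $y$ is split, I would reason by contradiction: suppose $y$ remains in $\N_n^e$ without ever being split. Lemma~\ref{lem:finition} then gives, at termination, $r_y/(T_{y,T}+1) \leq 4\tilde\Sigma/n$, i.e., $T_{y,T}+1 \geq r_y\, n /(4\tilde\Sigma)$. Corollary~\ref{lem:r} on $\xi$ provides the lower bound $r_y \geq \tfrac{1}{2}\bigl(w_{h-1}\sigma_y - 2c\sqrt{A}\, w_{h-1}^{2/3}/n^{1/3}\bigr)$, which together with the hypothesis $w_{h-1}\sigma_y \geq 14Hc\sqrt{A}\, w_{h-1}^{2/3}/n^{1/3}$ and $c = (8\tilde\Sigma+1)\sqrt{A}$ yields $r_y \geq 6H(8\tilde\Sigma+1) A\, w_{h-1}^{2/3}/n^{1/3}$. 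Substituting back shows $T_{y,T} \geq 2 t_h$, so $T_y$ necessarily passes through the exact value $2 t_h$ at some earlier time $t^*$, at which moment the algorithm inspects the splitting condition. On $\xi$, the concentration built into the definition~(\ref{def:xi}) controls the empirical standard deviation by $|\hat\sigma_{y,t^*} - \sigma_y| \leq \sqrt{A}\, w_{h-1}^{-1/3}/n^{1/3}$, so
\[
w_{h-1}\hat\sigma_{y,t^*} \;\geq\; w_{h-1}\sigma_y - \sqrt{A}\, w_{h-1}^{2/3}/n^{1/3} \;\geq\; (14Hc - 1)\sqrt{A}\, w_{h-1}^{2/3}/n^{1/3} \;\geq\; 6Hc\sqrt{A}\, w_{h-1}^{2/3}/n^{1/3},
\]
using $c,H \geq 1$. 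Together with $h-1 < H$ this triggers the split of $y$ at $t^*$, contradicting the assumption and completing the induction.

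The main obstacle is a careful accounting of the constants: the factor $14H$ in the hypothesis on $\sigma_y$ must be large enough to dominate both the concentration error $\sqrt{A}\, w^{2/3}/n^{1/3}$ incurred when passing from $\sigma_y$ to $\hat\sigma_y$ and the threshold $6Hc\sqrt{A}$ in the splitting rule, while at the same time the $r_y$-based analysis (through Corollary~\ref{lem:r} and the specific form $c = (8\tilde\Sigma+1)\sqrt{A}$) must guarantee, via Lemma~\ref{lem:finition}, that $T_y$ reaches $2t_h$ before exploration ends. A minor technicality concerns the floors in $t_h = \lfloor A w_h^{2/3} n^{2/3} \rfloor$ and in the BSS allocation, which lose at most an additive constant to $T_{x,T}$ and are absorbed into the final inequality (the ``$\geq 2$'' safety clause in the statement).
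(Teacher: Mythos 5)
Your proof is correct and takes essentially the same route as the paper's: an induction along the ancestors of $x$, using Corollary~\ref{lem:r} to lower-bound $r_y$ via the variance hypothesis, Lemma~\ref{lem:finition} to force $T_{y,T}$ past $2t_h$, and the concentration event $\xi$ to guarantee the empirical splitting threshold $6Hc\sqrt{A}\,w^{2/3}/n^{1/3}$ is met when $T_{y,t}$ hits $2t_h$ exactly. Your two small refinements --- phrasing the splitting step as an explicit contradiction (which is what legitimizes invoking Lemma~\ref{lem:finition}, stated only for nodes still in $\N_n^e$ at time $T$) and verifying the depth constraint $h<H$ from the second statement of Lemma~\ref{lem:xi} --- are details the paper's own proof leaves implicit, and the minor constant slips (e.g.\ $\sqrt{A}$ versus $2\sqrt{A}$ in the deviation bound) do not affect the conclusion since $14Hc-2\geq 6Hc$.
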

\begin{proof}
The result is proven by induction. Assume that there is a node $x$ that satisfies the Assumptions of Lemma~\ref{lem:open}. Then $w_{[0,0]} \si_{[0,0]} \geq 14 H c \sqrt{A}\frac{w_{[0,0]}^{2/3}}{n^{1/3}}$. Note first that after the Initialization, i.e.~at the time $t = \lfloor A n^{2/3} \rfloor$ when $T_{[0,0],t} = \lfloor A n^{2/3} \rfloor$, i.e.~when the decision of opening or not the node is made, we have on $\xi$ that
\begin{align*}
w_{[0,0]} \hsi_{[0,0]} &\geq w_{[0,0]} \si_{[0,0]} - 2  \sqrt{A}\frac{w_{[0,0]}^{2/3}}{n^{1/3}}\\ 
&\geq 12 H c \sqrt{A}\frac{w_{[0,0]}^{2/3}}{n^{1/3}}\\ 
&\geq 6 H c \sqrt{A}\frac{w_{[0,0]}^{2/3}}{n^{1/3}}.
\end{align*}
The node $[0,0]$ is thus opened on $\xi$ .

Assume now that an ancestor $[h,i]$ of node $x$ is open. By Lemma~\ref{lem:r}, we now that on $\xi$
\begin{align*}
r_{[h,i]} &\geq  \frac{(2H-h)}{2H}\Big(w_{[h,i]} \si_{[h,i],t_{[h,i]}} -  2c\sqrt{A} \frac{w_{[h,i]}^{2/3}}{n^{1/3}}\Big)\\
&\geq \frac{1}{2}\Big(14 H c \sqrt{A}\frac{w_{x}^{2/3}}{n^{1/3}} -  2c\sqrt{A} \frac{w_{[h,i]}^{2/3}}{n^{1/3}}\Big)\\
&\geq 6 H c \sqrt{A}\frac{w_{[h,i]}^{2/3}}{n^{1/3}}.
\end{align*}
By Lemma~\ref{lem:open}, we know that at the end $T$ of the Exploration Phase, with $T <n$ on $ \xi$, we have $\frac{r_{[h,i]}}{T_{[h,i],T}+1} \leq \frac{4 \tilde \Sigma}{n}$. As $c > 8 \tilde \Sigma \sqrt{A}$, we have by using the previous result that $T_{[h,i],T} \geq 6 H A w_{[h,i]}^{2/3}n^{2/3}$. By the definition of $A$ and the fact that $h \leq H$, we know also that $A w_{[h,i]}^{2/3}n^{2/3} \geq 2$, which implies that $T_{[h,i],T}\geq 2$. This, together with the fact that $w_{[h,i]} \hsi_{[h,i],T} \geq 12 H A w_{[h,i]}^{2/3}n^{2/3}$ on $\xi$, implies that node $[h,i]$ is open and split in its too children.

We have thus proved the result of the Lemma by induction.
\end{proof}

\begin{lemma}\label{lem:bosup}
 Let $T$ be the end of the Exploration Phase, and let $x\in \T_n^e$. Then on $\xi$,
\begin{align*}
T_{x,T}  \leq \max\Big( \frac{5w_x \si_x n}{6\tilde \Sigma} , 15 c \sqrt{A} \frac{w_x^{2/3} n^{2/3}}{\tilde \Sigma}\Big).
\end{align*}
\end{lemma}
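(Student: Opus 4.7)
The plan is to combine the Exploration-Phase termination inequality $r_x/T_{x,T} > 4\tilde\Sigma/n$ from Lemma~\ref{lem:finition} with the upper bound on $r_x$ provided by Corollary~\ref{lem:r}, and then split into two cases according to which of the two summands in that bound is dominant, matching the two arguments of the $\max$.

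First I would show that on $\xi$,
$$T_{x,T} < \frac{n\, r_x}{4\tilde\Sigma} \qquad (\star)$$
holds for every $x \in \T_n^e$, not merely for the current leaves $x \in \N_n^e$, where it follows by directly rearranging Lemma~\ref{lem:finition}. For an internal node $x \in \T_n^e \setminus \N_n^e$, the stratum $\X_x$ is partitioned by its descendant leaves of $\T_n^e$, so $T_{x,T} = \sum_{y \in \N_n^e,\, y \subseteq x} T_{y,T}$; applying Lemma~\ref{lem:finition} to each such $y$ and then iterating Lemma~\ref{lem:additiv} up the sub-tree rooted at $x$ yields $\sum_y r_y \leq r_x$, which delivers $(\star)$ at $x$.

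Since every opened node has depth $h \leq H$, the prefactor $(H+2h)/H$ in Corollary~\ref{lem:r} is at most $3$, giving $r_x \leq 3 w_x \sigma_x + 6 c\sqrt{A}\, w_x^{2/3}/n^{1/3}$. A case split then closes the argument: if $w_x \sigma_x \geq 18\, c\sqrt{A}\, w_x^{2/3}/n^{1/3}$, the noise-like summand is at most $w_x\sigma_x/3$, so $r_x \leq (10/3)\, w_x \sigma_x$, which via $(\star)$ produces the first argument of the $\max$; otherwise the noise-like term dominates and one gets $r_x \leq 60\, c\sqrt{A}\, w_x^{2/3}/n^{1/3}$, yielding the second argument. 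Taking the maximum of the two cases matches the stated bound.

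The only mildly delicate point is extending $(\star)$ from $\N_n^e$ to the full $\T_n^e$, since Lemma~\ref{lem:finition} only speaks about the current leaves; the recursive use of the sub-additivity $r_{[h+1,2i]} + r_{[h+1,2i+1]} \leq r_{[h,i]}$ from Lemma~\ref{lem:additiv} is what propagates the bound up to internal nodes. The remainder of the argument is routine constant tracking.
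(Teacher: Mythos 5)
Your proposal is correct and follows essentially the same route as the paper's proof: the paper likewise obtains $T_{x,T} \leq \frac{r_x n}{4\tilde\Sigma}$ by applying Lemma~\ref{lem:finition} to the leaves of $\N_n^e$ covering $x$ and summing via Lemma~\ref{lem:additiv}, then bounds $r_x \leq 3\bigl(w_x\si_x + 2c\sqrt{A}\,w_x^{2/3}/n^{1/3}\bigr)$ using Corollary~\ref{lem:r}. The only difference is cosmetic: you make explicit the final case split (threshold $w_x\si_x \gtrless 18\,c\sqrt{A}\,w_x^{2/3}/n^{1/3}$) that the paper leaves implicit when passing to the $\max$, and your constants check out.
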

\begin{proof}
Let $T$ be the end of the exploration phase.

Let $x\in \T^e_n$. Let $\N$ be the subset of the partition $\N^e_n$ that covers $x$. Let $y \in \N$. By Lemma~\ref{lem:finition} we have on $\xi$
\begin{align*}
 \frac{r_y}{T_{y,T}} > 4\frac{\tilde{\Sigma}}{n},
\end{align*}
which leads directly to
\begin{align*}
T_{y,T} < \frac{r_y n}{4\tilde{\Sigma}}.
\end{align*}
Note that by Lemma~\ref{lem:additiv} one has $\sum_{y \in \N} r_y \leq r_x$. One thus has 
\begin{align}\label{eq:borneT}
T_{x,T} = \sum_{y \in \N} T_{y,T} \leq \sum_{y \in \N} \frac{r_y n}{4\tilde{\Sigma}} \leq \frac{r_x n}{4\tilde{\Sigma}}.
\end{align}

Note now that by Corollary~\ref{lem:r}, we have on $\xi$ $r_x \leq 3\Big( w_x \si_x + 2 c \sqrt{A} \frac{w_x^{2/3}}{n^{1/3}}\Big)$. From that and Equation~\ref{eq:borneT}, we deduce that on $\xi$
\begin{align*}
T_{x,T}  &\leq 3\Big( w_x \si_x + 2 c \sqrt{A} \frac{w_x^{2/3}}{n^{1/3}}\Big) \frac{n}{4\tilde{\Sigma}} \nonumber \\
&\leq \max\Big( \frac{5w_x \si_x n}{6\tilde \Sigma} , 15 c \sqrt{A} \frac{w_x^{2/3} n^{2/3}}{\tilde \Sigma}\Big).
\end{align*}
% Note now that a node $x$ only belongs to $\T__n^e$ if $w_x\hsi_x \geq 6Hc\sqrt{A}\frac{w_x^{2/3}}{n^{1/3}}$, which could happen on $\xi$ only if
% \begin{align*}
%  w_x\si_x \geq 4Hc\sqrt{A}\frac{w_x^{2/3}}{n^{1/3}}.
% \end{align*}
% Note that by the definition of $H$
This concludes the proof.

% and as $w_x \si_x \geq C \frac{w_x^{2/3}}{n^{1/3}}$ and $C \geq 4\sqrt{A}$, one has $r_x \leq 2 w_x \si_x$. Note also that $\tilde{\Sigma} \geq \Sigma$. Both these results lead to
% \begin{align*}
% T^e_x  \leq \frac{w_x\si_x n}{2\Sigma}.
% \end{align*}

% 
% Let $x$ be a node in such that $w_x \si_x < C \frac{w_x^{2/3}}{n^{1/3}}$. Let $\N$ be the subset of the partition $\N^e_T$ that covers $x$. Let $y \in \N$. By the definition of $T^e_y$, one has
% \begin{align*}
%  \frac{r_y}{T_y^e} \geq 4\frac{\tilde{\Sigma}}{n},
% \end{align*}
% which leads directly to
% \begin{align*}
% T^e_y \leq \frac{r_y n}{4\tilde{\Sigma}}.
% \end{align*}
% 
% 
% Note that by construction of the $r_y$, one has $\sum_{y \in \N} r_y \leq r_x$. One thus has 
% \begin{align*}
% T_x = \sum_{y \in \N} T^e_y \leq \sum_{y \in \N} \frac{r_y n}{4\tilde{\Sigma}} \leq \frac{r_x n}{4\tilde{\Sigma}}.
% \end{align*}
% 
% Note now that $r_x \leq  w_x \si_x + 2 \sqrt{A} \frac{w_x^{2/3}}{n^{1/3}}$ and as $w_x \si_x < C \frac{w_x^{2/3}}{n^{1/3}}$, one has $r_x \leq (C + 2\sqrt{A})\frac{w_x^{2/3}}{n^{1/3}}$. Note also that $\tilde{\Sigma} \geq \Sigma$. Both these results lead to
% \begin{align*}
% T^e_x  \leq \frac{(C + 2\sqrt{A})w_x^{2/3}n^{2/3}}{4\Sigma}.
% \end{align*}

\end{proof}

% \subsection{Study of the Filling Phase}
% 
% 
% \begin{lemma}\label{lem:finition2}
%  At the end $T$ of the Filling phase all the nodes $x$ of the tree $\T_n^e(\N_n)$ are such that $\frac{r_x}{T_{x,T}} \leq \frac{4\tilde{\Sigma}}{n}$, and $T <n$.
% \end{lemma}
% 
% \begin{proof}
% Let $x \in \T_n^e(\N_n)$. Let $T_{x,T}$ be the number of points needed in order for node $x$ to be such that $\frac{r_x}{T_{x,T}+1} \leq \frac{4\tilde{\Sigma}}{n}$. Note that by Lemma~\ref{lem:additiv} and Lemma~\ref{lem:finition}, we know that at the end $t$ of the Exploration Phase
% 
% 
% Let $x$ be in $\T_n^e(\N_n)$. Let $T_{x,T}$ be the number of points needed in order for node $x$ to be such that $\frac{r_x}{T_{x,T}+1} \leq \frac{4\tilde{\Sigma}}{n}$.
%  In order to prove the Lemma, it is sufficient to prove that after having pulled all nodes such that $r_x \leq \frac{4\tilde{\Sigma}}{n}$, one has for any partition $\N$ of nodes in $\T_n^e$ that $\sum_{x \in \N} T_{x,T} < n$.
% 
% Note first that one needs to pull a node $x \in \T^e_n$ if and only if
% \begin{align*}
%  \frac{r_x}{T_{x,T}} \geq \frac{4\tilde{\Sigma}}{n}.
% \end{align*}
% 
% 
% Note now that $\sum_{x \in \N} r_x \leq r_{[0,0]} = \tilde{\Sigma}$: it is straightforward from Lemma~\ref{lem:additiv}. This directly leads to:
% \begin{align*}
%  \tilde{\Sigma} \geq \sum_{x \in \N} r_x \geq \frac{4\tilde{\Sigma}}{n} \sum_{x \in \N}T_{x,T}.
% \end{align*}
% 
% This directly implies that $\sum_{x \in \N_n}T_{x,T} \leq \frac{n}{4} < n$, which leads to the desired result.
% 
% \end{proof}
% 
% 

\subsection{Characterization of the $\Sigma_{\N_n}$}

%Note first that if at a moment a node is such that $w_x \si_x \geq \frac{w_x^{2/3}}{n^{1/3}}$, then it means that it is not worth opening it: indeed $C_{\min} \geq 1$ because $A \geq 1$. 

%Let $\mathbb{S}$ be the set of all nodes $x$ such that all their ancestors $y$ are such that $w_y \si_y \geq 14 H c\sqrt{A} \frac{w_x^{2/3}}{n^{1/3}}$. Lemma~\ref{lem:open} states that on $\xi$, $\mathbb{S} \subset \T^e_n$.% Additionally, the same Lemma states that on $\xi$, all nodes in $\T^e_n$ are such that  $T_{x,n} \geq A w_x^{2/3}n^{2/3}$.

The algorithm selects a partition $\N_n$ such that
\begin{align*}
 \N_n \in  \arg\min_{\N \in \T^e_n} \Big( \hat{\Sigma}_{\N} + (C_{\max}' - \sqrt{A}) \sum_{y\in{\N}}\frac{w_y^{2/3}}{n^{1/3}} \Big),
\end{align*}
%with $C_{\max}' = 4 \sqrt{2A} + f_{\max} A + 2\sqrt{A}$ and 
with $C_{\max}' = \max(B,14 H c\sqrt{A}) + 2\sqrt{A}$ and $B =16\sqrt{2A} c  (1 + \frac{1}{\tilde\Sigma})$.

Note that for every partition $\N \in  \T_n^e$, as all the nodes of $ \T_n^e$ are such that  $T_{x,n} \geq A w_x^{2/3}n^{2/3}\geq 2$ by the structure of the algorithm. One thus has on $\xi$, for any $\N$ partition included in $\T^e_n$, that
\begin{align*}
|\hat{\Sigma}_{\N} - \Sigma_{\N}| \leq \sqrt{A} \sum_{y\in{\N}}\frac{w_y^{2/3}}{n^{1/3}},
\end{align*}
because by construction every node of $\T^e_n$ has depth smaller than $H$.

We thus have for the selected partition $\N_n$ that, on $\xi$,
\begin{align}\label{eq:1}
 \Sigma_{\N_n} +  (C_{\max}' - 2\sqrt{A}) \sum_{y\in{\N_n}}\frac{w_y^{2/3}}{n^{1/3}} \leq \min_{\N \in \T^e_n}  \Bigg[\Sigma_{\N} +  C_{\max}' \sum_{y\in{\N}}\frac{w_y^{2/3}}{n^{1/3}} \Bigg].
\end{align}

Let $\mathbb{S}$ be the set of all nodes $x$ such that all their ancestors $y$ are such that $w_y \si_y \geq 14 H c\sqrt{A} \frac{w_x^{2/3}}{n^{1/3}}$. This implies because $\si_y$ is positive, and because $C'_{\max} \geq 14 H c \sqrt{A}$ that
\begin{align}\label{eq:2}
 \min_{\N \in \mathbb{S}}  \Bigg[\Sigma_{\N} +    C_{\max}'  \sum_{y\in{\N}}\frac{w_y^{2/3}}{n^{1/3}} \Bigg]  = \min_{\N}  \Bigg[\Sigma_{\N} +    C_{\max}'  \sum_{y\in{\N}}\frac{w_y^{2/3}}{n^{1/3}} \Bigg],
\end{align}
where $\min_{\N}$ is the minimum over all the partitions in the entire hierarchical partitioning.

Lemma~\ref{lem:open} states that on $\xi$, $\mathbb{S} \subset \T^e_n$. This implies that
\begin{align}\label{eq:3}
 \min_{\N \in \T^e_n}  \Bigg[\Sigma_{\N} +  C_{\max}' \sum_{y\in{\N}}\frac{w_y^{2/3}}{n^{1/3}} \Bigg] \leq \min_{\N \in \mathbb{S}}  \Bigg[\Sigma_{\N} +  C_{\max}' \sum_{y\in{\N}}\frac{w_y^{2/3}}{n^{1/3}} \Bigg].
\end{align}

By combining Equations~\ref{eq:1}, \ref{eq:2} and~\ref{eq:3}, we obtain on $\xi$
\begin{align}\label{eq:sigmasigma}
 \Sigma_{\N_n} +  B \sum_{y\in{\N_n}}\frac{w_y^{2/3}}{n^{1/3}} \leq \min_{\N}  \Bigg[\Sigma_{\N} +  C_{\max}' \sum_{y\in{\N}}\frac{w_y^{2/3}}{n^{1/3}} \Bigg].
\end{align}
since $C_{\max}' - 2\sqrt{A}\geq B$.

% 
% and as $C_{\max}' - 2 \sqrt{A} \geq C_{\min}$, and as also the nodes not in $\mathbb{S}$ are not worth opening, one has on any partition of the hierarchical partitioning that
% \begin{align*}
%  \Sigma_{\N_n} +  C_{\min} \sum_{y\in{\N_n}}\frac{w_y^{2/3}}{n^{1/3}} \leq \min_{\N}\Big[ \Sigma_{\N} + C_{\max}' \sum_{y\in{\N}}\frac{w_y^{2/3}}{n^{1/3}} \Big].
% \end{align*}
% 
% 
% Note that if this holds for any $\N \in \mathbb{S}$, this also holds for any $\N$, because the nodes that are not in $\mathbb S$ are such that $w_x \si_x \geq \frac{w_x^{2/3}}{n^{1/3}}$.

\subsection{Study of the Exploitation phase}

\begin{lemma}\label{lem:explo}
 At the end of the Exploitation phase (end of the algorithm) one has $\forall x \in \N_n$ 
\begin{align*}
 \frac{w_x\si_{x}}{T_{x,n}} 
&\leq \frac{\Sigma{\N_n}}{n} + B \sum_{y \in \N_n} \frac{w_y^{2/3}}{n^{1/3}} ,
\end{align*}
where $B =  16\sqrt{2A} c  (1 + \frac{1}{\tilde\Sigma})$.
\end{lemma}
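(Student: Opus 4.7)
The plan is to adapt the analysis of Proposition~\ref{thm:m2-regret} (MC-UCB) to the Exploitation phase of MC-ULCB, working throughout on the event $\xi$. The main new ingredient is that at the start of Exploitation each stratum $x\in\N_n$ already carries $T_{x,T}$ samples collected during Exploration. The upper bound on $T_{x,T}$ from Lemma~\ref{lem:bosup} plays the role that the uniform initialization $\lfloor A w_x^{2/3}n^{2/3}\rfloor$ played in the proof of Proposition~\ref{thm:m2-regret}.

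The first step is to reproduce Step~1 of Proposition~\ref{thm:m2-regret} for $\N_n$. Because $\N_n\subset\T_n^e$, every $x\in\N_n$ was opened during Exploration and hence has $T_{x,T}\geq\lfloor A w_x^{2/3}n^{2/3}\rfloor\geq 2$. Combining with the concentration of $\hsi_x$ on $\xi$ gives
$$\frac{w_x\si_x}{T_{x,t}} \;\leq\; B_{x,t+1} \;\leq\; \frac{w_x}{T_{x,t}}\Bigl(\si_x+2\sqrt{A}\,w_x^{-1/3}n^{-1/3}\Bigr)$$
for every $x\in\N_n$ and $t\geq T$. Next, I invoke the UCB argmax property. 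Fix $p\in\N_n$ and let $\mathcal{P}\subseteq\N_n$ be the set of strata that receive at least one Exploitation pull. For each $q\in\mathcal{P}$, at its last Exploitation pull $t_q+1>T$ we have $T_{q,t_q}=T_{q,n}-1$ and $B_{p,t_q+1}\leq B_{q,t_q+1}$; using $T_{p,t_q}\leq T_{p,n}$ and $T_{q,n}\geq 2$ (so $T_{q,n}-1\geq T_{q,n}/2$) yields the core inequality
$$\frac{w_p\si_p}{T_{p,n}}(T_{q,n}-1) \;\leq\; w_q\si_q + 2\sqrt{2A}\,\frac{w_q^{2/3}}{n^{1/3}}.$$
Summing over $q\in\mathcal{P}$ gives $\frac{w_p\si_p}{T_{p,n}}\sum_{q\in\mathcal{P}}(T_{q,n}-1)\leq\Sigma_{\N_n}+2\sqrt{2A}\sum_{q\in\N_n}w_q^{2/3}/n^{1/3}$.

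The final step is to lower bound $\sum_{q\in\mathcal{P}}(T_{q,n}-1)$. Since $T_{q,n}=T_{q,T}$ for $q\notin\mathcal{P}$ and $T_{q,T}\geq 1$ for every $q\in\N_n\subset\T_n^e$, we have $\sum_{q\in\mathcal{P}}(T_{q,n}-1)\geq n-\sum_{q\in\N_n}T_{q,T}$. Applying Lemma~\ref{lem:bosup} term-by-term and using $\Sigma_{\N_n}\leq\tilde\Sigma$ (which itself follows from Cauchy--Schwarz and the law of total variance, giving $\Sigma_{\N_n}\leq\si_{[0,0]}\leq\tilde\Sigma$) delivers
$$\sum_{q\in\N_n}T_{q,T} \;\leq\; \frac{5n\,\Sigma_{\N_n}}{6\tilde\Sigma} \;+\; \frac{15 c\sqrt{A}\,n^{2/3}}{\tilde\Sigma}\sum_{q}w_q^{2/3}.$$
Dividing the argmax inequality by $n-\sum T_{q,T}$, rearranging $\Sigma_{\N_n}/(n-\sum T_{q,T})$ as $\Sigma_{\N_n}/n$ plus a correction, and repackaging every numerical constant into $B=16\sqrt{2A}\,c(1+1/\tilde\Sigma)$ produces the stated bound.

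The main obstacle is precisely this last bookkeeping step. Because Exploration can consume up to a fraction $5\Sigma_{\N_n}/(6\tilde\Sigma)$ of the total budget, the denominator $n-\sum T_{q,T}$ is only guaranteed to exceed $n/6$ in the worst case $\Sigma_{\N_n}\approx\tilde\Sigma$, which naively inflates the leading $\Sigma_{\N_n}/n$ term. Splitting $\Sigma_{\N_n}/(n(1-\alpha))=\Sigma_{\N_n}/n+\alpha\Sigma_{\N_n}/(n(1-\alpha))$ with $\alpha=5\Sigma_{\N_n}/(6\tilde\Sigma)$ produces a correction of order $\Sigma_{\N_n}^2/(n\tilde\Sigma)$; the $1/\tilde\Sigma$ factor built into $B$ is exactly what is needed to absorb this correction, together with the analogously divided $2\sqrt{2A}\sum w_q^{2/3}/n^{1/3}$ term, into the additive $B\sum w_y^{2/3}/n^{1/3}$ slack on the right-hand side.
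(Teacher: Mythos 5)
Your skeleton (UCB comparison at each stratum's last exploitation pull, summing over the pulled strata, then lower-bounding the remaining budget) matches the paper's Steps 2--3, but the step you yourself flag as ``the main obstacle'' is a genuine gap, and the fix you propose does not close it. The paper never replaces $\Sigma^- := \sum_{q \in \mathcal{P}} w_q \si_q$ by $\Sigma_{\N_n}$. It keeps $\Sigma^-$ in the numerator and, for the denominator, charges each \emph{unpulled} stratum its Lemma~\ref{lem:bosup} cap expressed through its optimal share $\lambda_{y,\N_n} n = w_y \si_y n / \Sigma_{\N_n}$, which gives a lower bound of the form $n \Sigma^- / \Sigma_{\N_n} + n(\Sigma_{\N_n} - \Sigma^-)/(4\Sigma_{\N_n}) - (\text{lower order})$. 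The whole point is the cancellation $\Sigma^- \big/ \bigl( n \Sigma^- / \Sigma_{\N_n} \bigr) = \Sigma_{\N_n}/n$: the leading constant is exactly $1$, however much budget exploration consumed, because the unpulled strata never consume more than a fraction of \emph{their own} optimal share. By decoupling the two bounds --- numerator $\leq \Sigma_{\N_n}$, denominator $\geq n(1-\alpha)$ with $\alpha = 5\Sigma_{\N_n}/(6\tilde\Sigma)$ --- you destroy this cancellation, and your correction $\alpha \Sigma_{\N_n}/(n(1-\alpha)) = \Theta\bigl(\Sigma_{\N_n}^2/(n\tilde\Sigma)\bigr)$ is of the \emph{same} order $1/n$ as the leading term: on $\xi$ one only knows $\Sigma_{\N_n} \leq \tilde\Sigma$, and $\Sigma_{\N_n} = \Theta(\tilde\Sigma)$ is the generic case.

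That correction cannot be absorbed into the slack term. The $n^{1/3}$ in the lemma's display is a typo: the paper's own proof concludes (Equation~\ref{eq:almost-lower-bound}) with slack $B \sum_y w_y^{2/3} / n^{4/3}$, and this $n^{-4/3}$ form is what the proof of Theorem~\ref{th:algo.2} needs after multiplying by $w_x \si_x$ and summing. For a partition $\N_n$ with a bounded number of strata, $B \sum_y w_y^{2/3}/n^{4/3}$ is of order $\sqrt{\log n}\, n^{-4/3} = o(1/n)$, while your correction is $\Theta(1/n)$, so the absorption fails for all large $n$; your argument only goes through against the mis-stated $n^{-1/3}$ slack, which is too weak to be of any use downstream. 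Two further points: (i) your denominator bound relies on Lemma~\ref{lem:bosup} alone, but since every node of $\T^e_n$ carries at least $\lfloor A w^{2/3} n^{2/3} \rfloor$ samples and $c \geq 8 \tilde\Sigma \sqrt{A}$, the subtracted term $15 c \sqrt{A}\, n^{2/3} \sum_q w_q^{2/3}/\tilde\Sigma$ can exceed $n$ when $\N_n$ has many strata, making the bound vacuous --- the paper guards against this by taking the maximum with $3n/4$, which comes from Lemma~\ref{lem:finition} ($\sum_x T_{x,T} \leq n/4$); (ii) even with that max, a decoupled argument only caps the leading term at $4\Sigma_{\N_n}/(3n)$, i.e.\ a constant-factor inflation that still cannot be written as $\Sigma_{\N_n}/n$ plus an $o(1/n)$ remainder.
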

\begin{proof}

{\bf Step~1.~Lower Bound in each node} Let us first note that by Lemma~\ref{lem:finition}, we know that on $\xi$, at the end $T<n$ of the Exploration Phase, we have $\sum_{x \in \N^e_x} T_{x,T} < \frac{n}{4}$. There is still a budget of at least $\frac{3n}{4}$ pulls left for the Exploitation phase.% Note first that by the definition of $\N_n$ and as $\tilde \Sigma \leq f_{\max}^2 \leq \frac{n}{8}$, we have $\frac{n}{4} > \sum_{x \in \N_n} \frac{2c\sqrt{A}}{\tilde \Sigma}w_x^{2/3} n^{2/3}$.
Note first that as a node $x$ is opened only when there are $\lfloor A w_x^{2/3}n^{2/3}\rfloor$ points in it, so $\forall x \in \N_n, T_{x,T} > \frac{A}{2} w_x^{2/3}n^{2/3}$. %This implies that $\frac{n}{4} > \sum_{x \in \N_n} \frac{A}{2}w_x^{2/3} n^{2/3}$. Note also that by definition of $c$ and $A$, we have $\frac{n}{8} > \sum_{x \in \N_n} \frac{15c\sqrt{A}}{\tilde \Sigma}w_x^{2/3} n^{2/3}$.
%\todo{A revoir}

% Note also that by the definition of the algorithm, there are at least $A w_x^{2/3} n^{2/3}$ points in each stratum in $\N_n$.

% It is thus possible to add up to $\sqrt{A}w_x^{2/3} n^{2/3}$ points in each node $x \in \N_n$ and have still some budget left.

{\bf Step~2.~Properties of the algorithm.} We first recall the definition of $B_{q,t+1}$ used in the MC-UCB algorithm for a node $q \in \N_n$
\begin{equation*}
B_{q,t+1} = \frac{w_q}{T_{q,t}} \Bigg( \hsi_{q} + \sqrt{A}\frac{1}{w_q^{1/3}n^{1/3}}\Bigg).
\end{equation*}

Using the definition of $\xi$ together with the fact that, by construction, at a time $t$ of the Exploration Phase, $T_{q,t} \geq \lfloor A w_q^{2/3} n^{2/3}\rfloor$, it follows that, on $\xi$
\begin{equation}\label{e:bound.B}
 \frac{w_q\si_q}{T_{q,t}} \leq B_{q,t+1} \leq \frac{w_q}{T_{q,t}}\Bigg(\si_{q} + 2\sqrt{A}\frac{1}{w_q^{1/3}n^{1/3}} \Bigg).
\end{equation}
Let $t+1 \geq T+1$ be the time at which an arm $q$ is pulled for the last time, that is $T_{q,t} = T_{q,n}-1$. Note that there is at least one arm such that this happens as $n > T$ by Lemma~\ref{lem:finition}. Since at $t+1$ arm $q$ is chosen, then for any other arm $p$, we have
\begin{equation}\label{e:meca2}
B_{p,t+1} \leq B_{q,t+1}\;.
\end{equation}
From Equation~\ref{e:bound.B} and $T_{q,t} = T_{q,n} -1$, we obtain on $\xi$
\begin{equation}\label{e:meca.lb2}
B_{q,t+1} \leq \frac{w_q}{T_{q,t}}\Bigg(\si_{q} + 2 \sqrt{A}\frac{1}{w_q^{1/3}n^{1/3}}\Bigg) = \frac{w_q}{T_{q,n}-1}\Bigg(\si_{q} + 2\sqrt{2A}\frac{1}{w_q^{1/3}n^{1/3}} \Bigg).
\end{equation}
Furthermore, since $T_{p,t} \leq T_{p,n}$, then on $\xi$
\begin{equation}\label{e:meca.hb2}
B_{p,t+1} \geq \frac{w_p\si_{p}}{T_{p,t}} \geq \frac{w_p\si_{p}}{T_{p,n}}.
\end{equation}
Combining Equations~\ref{e:meca2}--\ref{e:meca.hb2}, we obtain on $\xi$ that if at least one sample is collected from stratum $q$ after the Exploration Phase, then
\begin{equation} \label{eq:dyna}
\frac{w_p\si_{p}}{T_{p,n}}(T_{q,n}-1) \leq w_q\Bigg(\si_{q} + 2\sqrt{2A}\frac{1}{w_q^{1/3}n^{1/3}} \Bigg).
\end{equation}

{\bf Step 3: The Exploration Phase has not deteriorate the performances of the algorithm.}

If $T_{y,n}  > T_{y,T}$, then samples are pulled from $y$ after the Exploration Phase. By summing over these nodes on Equation~\ref{eq:dyna}, we obtain that, on $\xi$, for any $x$,
% 
% $T+1$ is the time at which the Exploration Phase begins. Let us now consider the node (or one of the nodes) $x \in \N_n$ such that $w_x \si_x \geq \frac{w_x^{2/3}}{n^{1/3}}$ and such that $B_{x,T}$ is the smallest among these nodes. Assume that node $x$ is not pulled until the end of the algorithm. This implies for all $y$ such that $T_{y,T} \geq \frac{(C + 2\sqrt{A})w_y^{2/3}n^{2/3}}{4\Sigma}$ that
% \begin{align*}
%  \frac{w_x\si_{x}}{T_{x,T}}(T_{y,n}-1) \leq w_y\Bigg(\si_{y} + 2A \sqrt{\frac{1}{T_{y,n}-1}} \Bigg).
% \end{align*}
% 
% Note that by Lemma~\ref{lem:bosup}, all the nodes such that $w_x \si_x < C \frac{w_x^{2/3}}{n^{1/3}}$ are also such that $T_{x,T} < \frac{(C + 2\sqrt{A})w_x^{2/3}n^{2/3}}{4\Sigma}$. Summing over all $q$ such that $T_{q,T} \geq \frac{(C + 2\sqrt{A})w_q^{2/3}n^{2/3}}{4\Sigma}$ is satisfied, on both sides, we obtain on $\xi$
%
\begin{align}
\frac{w_x\si_{x}}{T_{x,n}}\sum_{y| T_{y,n} > T_{y,T}}(T_{y,n}-1) &\leq \sum_{y|T_{y,n} > T_{y,T}}w_y \Bigg(\si_{y} +2\sqrt{2A}\frac{1}{w_y^{1/3}n^{1/3}} \Bigg)\nonumber \\
&\leq \Sigma^- + \frac{2\sqrt{2A} \sum_{y|T_{y,n} > T_{y,T}} w_y^{2/3}}{n^{1/3}} \nonumber \\
&\leq \Sigma^- + \frac{2\sqrt{2A} \sum_{y \in \N_n} w_y^{2/3}}{n^{1/3}}.\label{eq:equa1}
\end{align}
where $\Sigma^- =  \sum_{y|T_{y,n} > T_{y,T}}w_y \si_{y}$. The passage from line $2$ to line $3$ come from the fact that $T_{y,n}\geq T_{y,T} \geq \frac{A}{2} \frac{w_y^{2/3}}{n^{1/3}}$.

Lemma~\ref{lem:bosup} states that on $\xi$, for all $x \in \N_n \subset \T_n^e$
\begin{align*}
T_{x,T}  &\leq \max\Big( \frac{3}{4}\lambda_{x,\N_n} n, 15 c \sqrt{A} \frac{w_x^{2/3} n^{2/3}}{\tilde \Sigma}\Big).
\end{align*}
Note also that by Step 1, on $\xi$, $\frac{3n}{4} \leq \sum_{y| T_{y,n} > T_{y,T}} T_{y,n}$. %Note also that $\sum_{y| T_{y,n} > T_{y,T}} T_{y,n} =   n - \sum_{y| T_{y,n} = T_{y,T}} \frac{1}{2}\lambda_{y,\N_n} n \geq $. This implies that $\sum_{y| T_{y,n} > T_{y,T}} \lambda_{y,\N_n} n \leq \frac{n}{2}$. From that, we deduce that $\frac{\Sigma^-}{\Sigma} \geq \frac{1}{2}$.
We thus have from these two results that on $\xi$, for any $x\in \N_n$,
\begin{align}
 \frac{w_x\si_{x}}{T_{x,n}}\sum_{y| T_{y,n} > T_{y,T}}(T_{y,n}-1) &\geq \frac{w_x\si_{x}}{T_{x,n}} \max \Big[\Big(n - \sum_{y| T_{y,n} = T_{y,T}} \frac{3}{4}\lambda_{x,\N_n} n - \sum_y  15 c \sqrt{A} \frac{w_y^{2/3} n^{2/3}}{\tilde \Sigma}\Big), \frac{3n}{4}\Big] \nonumber\\
&= \frac{w_x\si_{x}}{T_{x,n}}\max\Big[\Big(n \frac{\Sigma^-}{\Sigma_{\N_n}} + n \frac{(\Sigma_{\N_n}- \Sigma^-)}{4\Sigma_{\N_n}} - \sum_y  15 c \sqrt{A} \frac{w_y^{2/3} n^{2/3}}{\tilde \Sigma}\Big), \frac{3n}{4}\Big]. \label{eq:equa2}
\end{align}
%where the last line comes from the remark in Step 1 that $\frac{n}{4} > \sum_{x \in \N_n} \frac{2c\sqrt{A}}{\tilde \Sigma}w_x^{2/3} n^{2/3}$.

%This last Equation together with the fact that $T_{y,n} \geq A \frac{w_y^{2/3}}{n^{1/3}}$, and also because $\sum_y  2 c \sqrt{A} \frac{w_y^{2/3} n^{2/3}}{\tilde \Sigma} \leq \frac{n}{8}$ (see Step 1) lead to
By combining Equations~\ref{eq:equa1} and Equation~\ref{eq:equa2}, we obtain for every $x \in \N_n$ that on $\xi$
\begin{align*}
\frac{w_x\si_{x}}{T_{x,n}} \leq& \frac{1}{\max\Big[\Big(n \frac{\Sigma^-}{\Sigma_{\N_n}} + n \frac{(\Sigma_{\N_n}- \Sigma^-)}{4\Sigma_{\N_n}} - \sum_y  15 c \sqrt{A} \frac{w_y^{2/3} n^{2/3}}{\tilde \Sigma}\Big), \frac{3n}{4}\Big]}\\
 &\Bigg[\Sigma^- + \frac{2\sqrt{2A} \sum_{y \in \N_n} w_y^{2/3}}{n^{1/3}} \Bigg]\\
\leq&  \frac{\Sigma{\N_n}}{n} + \frac{8\sqrt{2A} \sum_{y \in \N_n} w_y^{2/3}}{n^{4/3}}  + 30 \sum_y  c \sqrt{A} \frac{w_y^{2/3} }{n^{4/3}\tilde \Sigma}\\
\leq& \frac{\Sigma{\N_n}}{n} + \frac{38\sqrt{2A} c \sum_{y \in \N_n} w_y^{2/3}}{n^{4/3}} (1 + \frac{1}{\tilde\Sigma}) ,
\end{align*}
where we use the fact that $n \frac{\Sigma^-}{\Sigma_{\N_n}} + n \frac{(\Sigma_{\N_n}- \Sigma^-)}{4\Sigma_{\N_n}} \geq \frac{n}{4}$ and $\frac{1}{1-x} \leq 1+x$ for $x <1$ for passing from line $1$ to line $2$. We finally have
\begin{align}\label{eq:almost-lower-bound}
 \frac{w_x\si_{x}}{T_{x,n}} 
&\leq \frac{\Sigma{\N_n}}{n} + B \sum_{y \in \N_n} \frac{w_y^{2/3}}{n^{4/3}} ,
\end{align}
where $B =  38\sqrt{2A} c  (1 + \frac{1}{\tilde\Sigma})$.

\noindent
{\bf Step~4.~Lower bound on the number of pulls.} By using Equation~\ref{eq:almost-lower-bound} and the fact that $\frac{1}{1+x} \geq 1 - x$ one gets
\begin{equation*}
T_{p,n} \geq \lambda_{p,\Sigma_{\N_n}} \Big(n - \frac{B}{\Sigma_{\N_n}} \big(\sum_{q\in \N_n} w_q^{2/3}\big) n^{2/3} \Big).
\end{equation*}
%where $B =  \frac{\big(4\sqrt{2A} + \Sigma_{\N_n} A \big)}{\Sigma_{\N_n}}$.
\end{proof}

\begin{lemma}\label{lem:egalite}
 Let $x \in \N_n$. Let $y$ be an open grand-child of $x$, and $y_1$ and $y_2$ be its two children. Then
\begin{align*}
 \frac{r_{y_i}}{T_{y_i,n}} \leq  \frac{r_{y_1} + r_{y_2}}{T_{y,n} - 1},
\end{align*}
where $i \in \{1,2\}$.
\end{lemma}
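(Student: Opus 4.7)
Clearing denominators, the target inequality is, for each $i\in\{1,2\}$, equivalent to the one-sided bound
\[
\frac{r_{y_i}}{T_{y_i,n}} \;\le\; \frac{r_{y_{3-i}}}{T_{y_{3-i},n}-1}.
\]
My plan is to fix $i$ and pinpoint a single step $t^{\star}$ of the run at which the sampler's selection rule forces this. A natural candidate is the time of the \emph{last} sample that enters the subtree of $y_{3-i}$; such a $t^{\star}$ exists because $r_{y_{3-i}}$ being defined implies $y_{3-i}$ has been sampled at least $t_{h+1}\ge 2$ times. By definition of $t^{\star}$, $T_{y_{3-i},t^{\star}-1}=T_{y_{3-i},n}-1$ and $T_{y_i,t^{\star}-1}\le T_{y_i,n}$, so it suffices to establish the single-step inequality
\[
\frac{r_{y_i}}{T_{y_i,t^{\star}-1}} \;\le\; \frac{r_{y_{3-i}}}{T_{y_{3-i},t^{\star}-1}},
\]
since combining it with $T_{y_i,t^{\star}-1}\le T_{y_i,n}$ (which only shrinks the left-hand side) then yields the target bound.

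Every sample landing in $y$'s subtree is either produced during Exploitation by the unique call BSS-A$(x,\T^e)$ whose descent can visit $y$ (because $x\in\N_n$ is the unique ancestor of $y$ in $\N_n$ and both $y_1,y_2$ lie in $\T^e$), or produced during Exploration by a call BSS$(z)$ for some leaf $z\in\N^e_{t^{\star}}$ under $y_{3-i}$. In the Exploitation case the BSS-A rule at $y$ compares $r_{y_1}/T_{y_1}$ and $r_{y_2}/T_{y_2}$ and selects $y_{3-i}$, which is precisely the single-step inequality. In the Exploration case the leaf $z$ is sampled because $r_z/T_{z,t^{\star}-1}>4\tilde\Sigma/n$; I would apply Lemma~\ref{lem:additiv} over the leaves of $\N^e_{t^{\star}}$ below $y_{3-i}$ (so that $\sum r_{z'}\le r_{y_{3-i}}$ and $\sum T_{z',t^{\star}-1}=T_{y_{3-i},t^{\star}-1}$) together with the mediant inequality to lift this to $r_{y_{3-i}}/T_{y_{3-i},t^{\star}-1}>4\tilde\Sigma/n$. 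Symmetrically, no leaf of $\N^e_{t^{\star}}$ under $y_i$ is chosen at step $t^{\star}$, so each such leaf $z'$ satisfies $r_{z'}/T_{z',t^{\star}-1}\le 4\tilde\Sigma/n$, and a reverse aggregation (again through Lemma~\ref{lem:additiv}) should give $r_{y_i}/T_{y_i,t^{\star}-1}\le 4\tilde\Sigma/n$; combining the two yields the single-step inequality.

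I expect the main obstacle to be the reverse-mediant step in the Exploration case: the while-loop of Exploration picks only a single leaf per step from those satisfying $r/T>4\tilde\Sigma/n$, so the inactivity of the chosen leaf at $t^{\star}$ does not \emph{a priori} imply the simultaneous inactivity of \emph{all} leaves below $y_i$, which is what is needed to aggregate upper bounds. Establishing this jointly requires combining the coordinated behaviour of the Exploration loop (in particular Lemma~\ref{lem:finition}) with the additivity of the $r$-values from Lemma~\ref{lem:additiv}, and is the delicate part of the argument.
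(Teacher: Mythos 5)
Your reduction to the one-sided bound $r_{y_i}/T_{y_i,n}\le r_{y_{3-i}}/(T_{y_{3-i},n}-1)$ is correct, and your Exploitation case is exactly the paper's core mechanism: at the last descent through $y$ that enters $y_{3-i}$, the BSS-A comparison gives the single-step inequality, and monotonicity of the counts does the rest. (Note that this requires reading BSS-A as descending into the child with the \emph{larger} ratio $r/T$; the $\arg\min$ written in the pseudo-code would give you the reverse inequality. The paper's own proof uses the same reading as you, so this is a typo in the figure rather than a flaw in your step.)

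The genuine gap is your Exploration case, and it is not a corner case: it occurs exactly when $y_{3-i}$ receives no sample during the Exploitation phase, a situation the paper explicitly allows and treats separately; a priori it could even happen for \emph{both} children, since nothing local forces any Exploitation sample into the subtree of $y$. Your plan to close this case by aggregating leaf-level threshold information through Lemma~\ref{lem:additiv} cannot work, for a structural reason: that lemma gives \emph{sub}additivity, $r_{y_1}+r_{y_2}\le r_y$, so leaf inequalities only transfer \emph{lower} bounds upward (if every exploration leaf $z$ below a node satisfies $r_z/T_z> c$, then the node's ratio exceeds $c$), never the \emph{upper} bound $r_{y_i}/T_{y_i,n}\le c$ you need on the $y_i$ side — the parent's $r$ can strictly exceed the sum of its descendants' $r$'s, so small leaf ratios say nothing about the parent's ratio. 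This is on top of the difficulty you flag yourself (inactivity of the chosen leaf at $t^{\star}$ does not give inactivity of all leaves below $y_i$). What the paper uses instead, and what is absent from your proposal, is \emph{top-down} control seeded outside the subtree: Lemma~\ref{lem:explo} (the MC-UCB-type guarantee for the Exploitation phase on $\N_n$) together with Lemma~\ref{lem:r1} gives $r_x/T_{x,n}\le \tfrac{7}{2}\,w_x\si_x/T_{x,n}\le \tfrac{7}{2}\,\tilde\Sigma/n$ at the root $x\in\N_n$; since end-of-Exploration ratios are $\ge 4\tilde\Sigma/n$ (Lemma~\ref{lem:finition} lifted through Lemma~\ref{lem:additiv}) and $\tfrac{7}{2}<4$, Exploitation samples are \emph{forced} into the subtree, and an induction down the tree propagates the upper bound $\le 4\tilde\Sigma/n$ to the exploited child at every split. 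That propagated bound is precisely what settles your problematic index: when $y_{3-i}$ is never exploited, its ratio stays $\ge 4\tilde\Sigma/n$, while the induction gives $r_{y_i}/T_{y_i,n}\le 4\tilde\Sigma/n$, and the two thresholds compare. Without Lemma~\ref{lem:explo} and this top-down induction, the purely local, bottom-up argument you outline cannot produce the required upper bound.
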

\begin{proof}

We consider $x\in \N_n$ such that $w_x\hsi_x \geq 6Hc\sqrt{A} \frac{w_x^{2/3}}{n^{1/3}}$: otherwise it has no grand-children.
%Note first that 

By Lemma~\ref{lem:bosup}, we know that for any $y$ grand-child of $x$, we have $\frac{r_y n}{4 \tilde \Sigma} \leq A w_y^{2/3} n^{2/3}$. Note that at the moment of a node's opening, the number of points in the node is smaller than $A w_y^{2/3} n^{2/3}$. As the Exploration stops sampling in a stratum $x$ when $ \frac{r_y}{T_{y,n}+1} \leq  4 \frac{\tilde \Sigma}{n}$, we know that at the end $T$ of the Exploration Phase, we have $\frac{r_y}{T_{y,T}} \geq 4 \frac{\tilde \Sigma}{n}$.

We prove by induction that $ \frac{r_y}{T_{y,n}}  \leq 4 \frac{\tilde \Sigma}{n}$ for any grand-child of $x$, and that for its two children $y_1$ and $y_2$, we have $ \frac{r_{y_i}}{T_{y_i,n}} \leq  \frac{r_{y_1} + r_{y_2}}{T_{y,n} - 1}$.

By Lemma~\ref{lem:r1}, we know that as $w_x\hsi_x \geq 6Hc\sqrt{A} \frac{w_x^{2/3}}{n^{1/3}}$, we have on $\xi$
\begin{align*}
r_{x} &\leq 3\Big(w_{x} \si_{x} +  c\sqrt{A} \frac{w_{[h,i]}^{2/3}}{n^{1/3}}\Big) \leq 3 \Big(\frac{7}{6}w_{x} \si_{x} \Big) \leq \frac{7}{2} w_{x} \si_{x}.
\end{align*}
By combining this result with Lemma~\ref{lem:explo} and also with the definition of $\Sigma_{\N_n}$, we have on $\xi$
\begin{align*}
 \frac{r_x}{T_{x,n}} &\leq \frac{7w_x\si_{x}}{2T_{x,n}} \leq \frac{7}{2}\Big(\frac{\Sigma_{\N_n}}{n} + B \sum_{y \in \N_n} \frac{w_y^{2/3}}{n^{4/3}}\Big)
\leq \frac{7}{2}\Big( \frac{w_{[0,0]}\si_{[0,0]}}{n} + \frac{C_{\max}'}{n^{4/3}}\Big)
\leq \frac{7}{2} \frac{\tilde \Sigma}{n},
\end{align*}
because by definition, $\Sigma_{\N_n}+ B \sum_{y \in \N_n} \frac{w_y^{2/3}}{n^{1/3}} \leq \si_{[0,0]} + \frac{C'_{\max}}{n^{1/3}} $, and also because $\tilde \Sigma \leq \si_{[0,0]} +  \frac{C_{\max}'}{n^{1/3}}$.

Let $x_1$ and $x_2$ be the two children of $x$. Note first that at the end $T$ of the Exploration Phase, by Lemma~\ref{lem:finition}, we have $\frac{r_{x_i}}{T_{x_i,T}} \geq 4 \frac{\tilde \Sigma}{n}$, where $i \in \{1,2\}$. By Lemma~\ref{lem:additiv}, we know that $r_x \geq r_{x_1} +r_{x_2} \geq T_{x,T} 4 \frac{\tilde \Sigma}{n}$. This means that as $\frac{7}{2} < 4$, then then a sample will be pulled again in one of the two nodes $\{x_1, x_2\}$ after the Exploration Phase. Assume without risk of generality that it is node $x_1$ that is pulled.
\begin{align*}
 \frac{r_{x_2}}{T_{x_2,n}} \leq  \frac{r_{x_1}}{T_{x_1,n}-1}.
\end{align*}
Note also that $\frac{r_{x_2}}{T_{x_2,n}} \leq  \frac{r_{x_2}}{T_{x_2,n}}$. By summing, we get that
\begin{align*}
 \frac{r_{x_2}}{T_{x_2,n}}(T_{x_1,n} + T_{x_2,n}-1 ) \leq  r_{x_1} + r_{x_2}.
\end{align*}
We thus have
\begin{align*}
 \frac{r_{x_2}}{T_{x_2,n}} \leq  \frac{r_{x_1} + r_{x_2}}{(T_{x_1,n} + T_{x_2,n} -1)} \leq  \frac{r_{x_1} + r_{x_2}}{T_{x,n}-1} .
\end{align*}
If a sample is also collected from stratum $x_2$, then the same result applies also for $x_1$. Otherwise, it means that $\frac{r_{x_2}}{T_{x_2,n}} = \frac{r_{x_2}}{T_{x_2,T}} \geq 4 \frac{\tilde \Sigma}{n}$, and as one sample is collected in $x_1$, we have $\frac{r_{x_1}}{T_{x_1,n}} \leq 4 \frac{\tilde \Sigma}{n}$, so we have in any case
\begin{align*}
 \frac{r_{x_1}}{T_{x_1,n}} \leq  \frac{r_{x_1} + r_{x_2}}{T_{x,n} -1} .
\end{align*}

The recursion continues in the same way for any child $y$ of $x$ such that $w_y \hsi_y \geq 6Hc\sqrt{A} \frac{w_y^{2/3}}{n^{1/3}}$ (otherwise it has no children). Indeed, the budget in the terminal nodes of the Exploration partition $\N_n^e$ does satisfy this property. %Indeed, assume that the property is true for this given $y$, son of $x$.

%The result is thus satisfied for any $x \in \N_n$.
% 
% Note also that by Lemma~\ref{lem:additiv}, if $\frac{r_x}{T_{x,n}} \leq 4 \frac{\tilde \Sigma}{n}$ for all $x \in \N_n$, then there are enough points for all nodes in $\T_n^e(\N_n)$ to satisfy also this property: the only necessity is that the points are allocated correctly among each child of node $x$.
% 
% By Lemma~\ref{lem:bosup}, we know that for any node $x \in \T_n^e$, we have $\frac{r_x n}{4 \tilde \Sigma} \leq A w_x^{2/3} n^{2/3}$. Note that at the moment of a node's opening, the number of points in the node is smaller than $A w_x^{2/3} n^{2/3}$. As the Exploration stops sampling in a stratum $x$ when $ \frac{r_x}{T_{x,n}+1} \leq  4 \frac{\tilde \Sigma}{n}$, we know that at the end $T$ of the Exploration Phase, for any node $x \in \T^e_n$, we have $\frac{r_x}{T_{x,T}} \geq 4 \frac{\tilde \Sigma}{n}$: it is still possible to allocate the samples correctly among the children after the Exploration Phase.
% 
% Note that after the Exploration Phase, the samples are allocated among the children of $x$ hierarchically, each time to one of the two children nodes such that $\frac{r_y}{T_{y,t}}$ is the bigger.
% By combining the results of the two previous paragraphs, together with the sampling scheme during the exploration phase in any node $x \in \N_n$, we know that

\end{proof}

\begin{lemma}\label{lem:regtree}
 Let $x$ be a node of $\N_n$. Let $\N_x$ be the sub-partition of nodes in $\N_n^e$ that cover the domain of $x$. One has on $\xi$:
\begin{align*}
 \sum_{y \in \N_x} \frac{(w_y\si_y)^2}{T_{y,n}} \leq \frac{(w_x\si_x)^2}{T_{x,n}}.
\end{align*}
\end{lemma}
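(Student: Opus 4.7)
My plan is to prove the bound by induction on the depth of the sub-tree of $\T_n^e$ rooted at $x$. The base case is when $x$ is itself a leaf of $\N_n^e$, in which case $\N_x=\{x\}$ and the inequality reduces to equality. Otherwise let $x_1,x_2\in\T_n^e$ be the two children of $x$ (they exist because $x$ was split during the Exploration Phase); applying the inductive hypothesis to the sub-partitions $\N_{x_1},\N_{x_2}$ separately reduces the lemma to the single-split inequality
\begin{equation*}
\frac{(w_{x_1}\sigma_{x_1})^2}{T_{x_1,n}}+\frac{(w_{x_2}\sigma_{x_2})^2}{T_{x_2,n}}\;\leq\;\frac{(w_x\sigma_x)^2}{T_{x,n}}.
\end{equation*}

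I would establish this one-step inequality by combining three ingredients. First, the recursive BSS-A procedure routes every sample inside $x$ to exactly one of $x_1,x_2$, hence $T_{x,n}=T_{x_1,n}+T_{x_2,n}$. Second, Assumption~\ref{ass:stratumequal} together with the conditional-variance identity yields $w_x^2\sigma_x^2 = 2(w_{x_1}^2\sigma_{x_1}^2+w_{x_2}^2\sigma_{x_2}^2)+w_{x_1}^2(\mu_{x_1}-\mu_{x_2})^2$, and therefore the nonnegative split-slack
\begin{equation*}
(w_x\sigma_x)^2-(w_{x_1}\sigma_{x_1}+w_{x_2}\sigma_{x_2})^2 = (w_{x_1}\sigma_{x_1}-w_{x_2}\sigma_{x_2})^2 + w_{x_1}^2(\mu_{x_1}-\mu_{x_2})^2.
\end{equation*}
Third, Lemma~\ref{lem:egalite} shows that the actual allocation ratio $\lambda:=T_{x_1,n}/T_{x,n}$ lies within $O(1/T_{x,n})$ of $r_{x_1}/(r_{x_1}+r_{x_2})$, and by Corollary~\ref{lem:r} this latter value is close to the Neyman-optimal fraction $p^{\star}:=w_{x_1}\sigma_{x_1}/(w_{x_1}\sigma_{x_1}+w_{x_2}\sigma_{x_2})$. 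A direct algebraic computation gives the Cauchy--Schwarz identity
\begin{equation*}
\sum_i\frac{(w_{x_i}\sigma_{x_i})^2}{T_{x_i,n}}-\frac{(w_{x_1}\sigma_{x_1}+w_{x_2}\sigma_{x_2})^2}{T_{x,n}} = \frac{(w_{x_1}\sigma_{x_1}+w_{x_2}\sigma_{x_2})^2\,(p^{\star}-\lambda)^2}{T_{x,n}\,\lambda(1-\lambda)},
\end{equation*}
so the single-split bound reduces to showing this excess is dominated by the split-slack divided by $T_{x,n}$.

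The main obstacle is turning these approximations into the \emph{exact} inequality (no multiplicative slack), so that the induction propagates cleanly through every level of $\N_x$. My plan is to exploit two quantitative features baked into the algorithm: the opening criterion $w_y\hsi_y\geq 6Hc\sqrt{A}w_y^{2/3}/n^{1/3}$, which forces the signal $w_y\sigma_y$ to dominate the additive error in Corollary~\ref{lem:r} at every opened node, and the floor $T_{y,n}\geq A w_y^{2/3}n^{2/3}$, which renders the $T_{y,n}$ versus $T_{y,n}-1$ gap in Lemma~\ref{lem:egalite} negligible. Together with the depth cap $h\leq H=O(\log n)$ that keeps the compounded error over the whole sub-tree harmless, these facts force $|p^{\star}-\lambda|$ to be small enough that the Cauchy--Schwarz excess is absorbed by the split-slack $(w_{x_1}\sigma_{x_1}-w_{x_2}\sigma_{x_2})^2 + w_{x_1}^2(\mu_{x_1}-\mu_{x_2})^2$, closing the induction step and hence the lemma.
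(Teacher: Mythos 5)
Your overall scaffolding (induction over the subtree, reduction to a single split, the exact decomposition $(w_x\si_x)^2=2\big[(w_{x_1}\si_{x_1})^2+(w_{x_2}\si_{x_2})^2\big]+w_{x_1}^2(\mu_{x_1}-\mu_{x_2})^2$) matches the paper, but your key step — dominating the Cauchy–Schwarz excess $\frac{(w_{x_1}\si_{x_1}+w_{x_2}\si_{x_2})^2(p^{\star}-\lambda)^2}{T_{x,n}\,\lambda(1-\lambda)}$ by the split-slack $\frac{(w_{x_1}\si_{x_1}-w_{x_2}\si_{x_2})^2+w_{x_1}^2(\mu_{x_1}-\mu_{x_2})^2}{T_{x,n}}$ — has a genuine gap, and it cannot be repaired by the devices you list. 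The split-slack can be \emph{exactly zero}: take two children with $\si_{x_1}=\si_{x_2}$ and $\mu_{x_1}=\mu_{x_2}$ (nothing in the opening criterion $w_y\hsi_y\geq 6Hc\sqrt{A}w_y^{2/3}/n^{1/3}$ prevents an opened node from having identical children — that criterion lower-bounds the \emph{parent's} dispersion, not the gap between the children). In that case the algorithm sets $r_{x_1}=r_{x_2}$, and when $T_{x,n}$ is odd the counts necessarily satisfy $|T_{x_1,n}-T_{x_2,n}|\geq 1$, so $\lambda\neq \tfrac12=p^{\star}$ and your excess is strictly positive while the slack is zero. The floor $T_{y,n}\geq Aw_y^{2/3}n^{2/3}$ and the depth cap $h\leq H$ make the excess \emph{small}, but small is not enough against a slack of zero in an exact inequality that must be propagated through the induction. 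A secondary issue: you invoke Corollary~\ref{lem:r} to say $r_{x_1}/(r_{x_1}+r_{x_2})$ is close to $p^{\star}$, but that corollary only pins $r$ down to within multiplicative factors in roughly $[1/2,3]$ plus an additive $2c\sqrt{A}w^{2/3}/n^{1/3}$; the tight sibling-ratio control you need comes from the definition of $r$ itself (common factors cancel between siblings), not from that corollary.

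The paper's proof sidesteps exactly this difficulty by never comparing the allocation to the Neyman optimum. Instead it exploits the one-sidedness of the Upper/Lower confidence structure of $r$: on $\xi$, for siblings with $r_{y_1}<r_{y_2}$ one gets $\frac{r_{y_1}}{r_{y_1}+r_{y_2}}\geq\frac{w_{y_1}\si_{y_1}}{w_{y_1}\si_{y_1}+w_{y_2}\si_{y_2}}$ together with $T_{y_1,n}\leq T_{y_2,n}$, i.e.\ the allocation always lies \emph{between} the Neyman fraction and the uniform fraction $1/2$. Any allocation in that whole interval satisfies $\frac{(w_{y_1}\si_{y_1})^2}{T_{y_1,n}}+\frac{(w_{y_2}\si_{y_2})^2}{T_{y_2,n}}\leq\frac{2\big[(w_{y_1}\si_{y_1})^2+(w_{y_2}\si_{y_2})^2\big]}{T_{y,n}}\leq\frac{(w_y\si_y)^2}{T_{y,n}}$ — the comparison point is \emph{uniform} allocation, and the factor-$2$ variance identity supplies the inequality with no quantitative smallness of $|p^{\star}-\lambda|$ required. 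The degenerate equal-$r$ case (your counterexample regime) is then handled separately via the random placement of the odd sample, as in Lemma~\ref{lem:uss}, rather than by a deterministic count inequality. In short: the paper needs only that the allocation err on the correct \emph{side} of Neyman (toward uniform, without overshooting $1/2$), which the algorithm guarantees by construction; your argument needs the error to be quantitatively small relative to a slack that can vanish, which it is not.
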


\begin{proof}
 The result of the Lemma follows by induction.

Let us consider a node $x \in \N_n$, and let $\N_x$ be the sub-partition of nodes in $\N_n^e$ that cover the domain of $x$.

Let $y_1$ and $y_2$ be two nodes of $\N_x$ that have the same father-node $y$. Assume without risk of generality that $r_{y_1} \leq  r_{y_2}$.

Lemma~\ref{lem:egalite} states that
\begin{align*}
 T_{y_1,n} \geq \frac{r_{y_1}}{r_{y_1}+r_{y_2}} (T_{y,n} - 1).
\end{align*}
As $T_{y_1,n} + T_{y_2,n} = T_{y,n}$, we have by the previous Equation
\begin{align*}
 T_{y_2,n} \leq \frac{r_{y_2}}{r_{y_1}+r_{y_2}} (T_{y,n} + 1).
\end{align*}
In the same way, we obtain
\begin{align}\label{eq:y1}
\frac{r_{y_1}}{r_{y_1}+r_{y_2}} (T_{y,n} - 1)\leq  T_{y_1,n} \leq \frac{r_{y_1}}{r_{y_1}+r_{y_2}} (T_{y,n} + 1).
\end{align}
and
\begin{align}\label{eq:y2}
\frac{r_{y_2}}{r_{y_1}+r_{y_2}} (T_{y,n} - 1) \leq  T_{y_2,n} \leq \frac{r_{y_2}}{r_{y_1}+r_{y_2}} (T_{y,n} + 1).
\end{align}
From that we deduce that if $r_{y_1} <  r_{y_2}$, then $T_{y_1,n} \leq T_{y_2,n}$.

If $r_{y_1} = r_{y_2}$, this implies that $|T_{y_2,n} - T_{y_2,n}| \leq 1$, and the last sample is pulled at random between the two strata. From that we deduce that $\frac{(w_{y_1} \si_{y_1})^2}{T_{y_1,n}} + \frac{(w_{y_2} \si_{y_2})^2}{T_{y_2,n}} \leq   \frac{(w_{y} \si_{y})^2}{T_{y,n}}$, in the same way that in Lemma~\ref{lem:uss}.

Assume now that $r_{y_1} < r_{y_2}$. Note now that on $\xi$, because of the definition of $r$, we have on $\xi$
\begin{align*}
 \frac{r_{y_1}}{r_{y_1}+r_{y_2}} \geq \frac{w_{y_1} \si_{y_1}}{w_{y_1} \si_{y_1} + w_{y_2} \si_{y_2}}.
\end{align*}
By combining that with Equation~\ref{eq:y1}, we get on $\xi$
\begin{align*}
 \frac{w_{y_1} \si_{y_1}}{w_{y_1} \si_{y_1} + w_{y_2} \si_{y_2}} (T_{y,n} + 1)\leq  T_{y_1,n},
\end{align*}
which leads to
\begin{align}\label{eq:y11}
 \frac{w_{y_1} \si_{y_1}}{T_{y_1,n}} \leq  \frac{w_{y_1} \si_{y_1} + w_{y_2} \si_{y_2}}{(T_{y,n} + 1)}.
\end{align}

In the same way, as on $\xi$
\begin{align*}
 \frac{r_{y_2}}{r_{y_1}+r_{y_2}} \leq \frac{w_{y_2} \si_{y_2}}{w_{y_1} \si_{y_1} + w_{y_2} \si_{y_2}},
\end{align*}
we have
\begin{align}\label{eq:y22}
 \frac{w_{y_2} \si_{y_2}}{T_{y_2,n}} \geq  \frac{w_{y_1} \si_{y_1} + w_{y_2} \si_{y_2}}{(T_{y,n} - 1)}.
\end{align}
%\todo{huu}

We deduce from Equations~\ref{eq:y11} and~\ref{eq:y22} that on $\xi$
\begin{align*}
 \frac{w_{y_1} \si_{y_1}}{T_{y_1,n}} \leq  \frac{w_{y_2} \si_{y_2}}{T_{y_2,n}}.
\end{align*}
From that, together with the fact that $r_{y_1} <  r_{y_2}$ and $T_{y_1,n} \leq T_{y_2,n}$, we deduce because of variance properties that
\begin{align*}
 \frac{(w_{y_1} \si_{y_1})^2}{T_{y_1,n}}  + \frac{(w_{y_1} \si_{y_2})^2}{T_{y_2,n}} \leq  2\frac{(w_{y_1} \si_{y_1})^2}{T_{y,n}}  + 2\frac{(w_{y_1} \si_{y_2})^2}{T_{y,n}} \leq \frac{(w_{y} \si_{y})^2}{T_{y,n}},
\end{align*}
and note that as $y_1$ and $y_2$ are terminal nodes of $\T^e_n$, then $ \frac{(w_{y_1} \si_{y_1})^2}{T_{y_1,n}}  + \frac{(w_{y_1} \si_{y_2})^2}{T_{y_2,n}}$ correspond to the variance of the stratified estimate on these nodes.

In the same way, by induction, for any child $y$ of $x$ that is in $\T^e_n$, we also have
\begin{align*}
\frac{(w_{y} \si_{y})^2}{T_{y,n}} \geq  \frac{(w_{y_1} \si_{y_1})^2}{T_{y_1,n}}  + \frac{(w_{y_1} \si_{y_2})^2}{T_{y_2,n}} \geq \sum_{z \in \N_x} \frac{(w_x \si_x)^2}{T_{x,n}},
\end{align*}
which is the desired result in the specific case where $y=x$.

\end{proof}

\subsection{Regret of the algorithm}

All the nodes in $\N_n^e$ are sampled in a homogeneous way, so it is coherent to define the risk as
\begin{align*}
 L_n = \sum_{x \in \N_n^e} \frac{(w_x \si_x)^2}{T_{x,n}}.
\end{align*}
By Lemma~\ref{lem:regtree}, we have on $\xi$
\begin{align*}
 L_n = \sum_{x \in \N_n^e} \frac{(w_x \si_x)^2}{T_{x,n}} \leq \sum_{x \in \N_n} \frac{(w_x \si_x)^2}{T_{x,n}}.
\end{align*}
Now by Lemma~\ref{lem:explo}, we have
\begin{align*}
 L_n \leq \sum_{x \in \N_n} \frac{(w_x \si_x)^2}{T_{x,n}} \leq \frac{\Sigma_{\N_n}^2}{n} + B \Sigma_{\N_n} \sum_{y\in{\N_n}}\frac{w_y^{2/3}}{n^{1/3}}.
\end{align*}

Finally, because of Equation~\ref{eq:sigmasigma}
\begin{align*}
 L_n \leq \frac{\Sigma_{\N_n}^2}{n} + B \Sigma_{\N_n} \sum_{y\in{\N_n}}\frac{w_y^{2/3}}{n^{1/3}} \leq \min_{\N} \Bigg[\frac{\Sigma_{\N}^2}{n} + C_{\max}' \Sigma_{\N_n} \sum_{y\in{\N}}\frac{w_y^{2/3}}{n^{1/3}} \Bigg].
\end{align*}
Then by using again that $\N_n$ is the empiric minimizer of the bound, i.e.~Equation~\ref{eq:sigmasigma}, and also by upper bounding $C_{\max}'$, we obtain the final result.

\section{Large deviation inequalities for independent sub-Gaussian random variables}\label{s:tools}

We first state Bernstein inequality for large deviations of independent random variables around their mean.

\begin{lemma}\label{lem:bernstein}
 Let $(X_1,\ldots,X_n)$ be $n$ independent random variables of mean $(\mu_1,\ldots,\mu_n)$ and of variance $(\si_1^2,\ldots,\si_n^2)$. Assume that there exists $b>0$ such that for any $\lambda < \frac{1}{b}$, for any $i \leq n$, it holds that $\E\Big[ \exp(\lambda (X_i-\mu_i)) \Big] \leq \exp\Big( \frac{\lambda^2 \sigma_i^2}{2(1 - \lambda b)}\Big)$. Then with probability $1-\delta$
\begin{equation*}
|\frac{1}{n}\sum_{i=1}^nX_i - \frac{1}{n}\sum_{i=1}^n \mu_i| \leq \sqrt{\frac{2(\frac{1}{n}\sum_{i=1}^n \si_i^2)\log(2/\delta)}{n}} + \frac{b\log(2/\delta)}{n}.
\end{equation*}
\end{lemma}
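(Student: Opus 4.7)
The plan is to carry out a standard Chernoff/Bernstein argument, applied once to the upper tail and once to the lower tail.

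First, I would use independence to control the moment generating function of the centred sum. Writing $S_n = \sum_{i=1}^n (X_i - \mu_i)$ and $V = \sum_{i=1}^n \sigma_i^2$, independence gives, for any $\lambda \in (0, 1/b)$,
\begin{equation*}
\E\bigl[\exp(\lambda S_n)\bigr] = \prod_{i=1}^n \E\bigl[\exp(\lambda(X_i - \mu_i))\bigr] \leq \exp\!\Bigl(\tfrac{\lambda^2 V}{2(1-\lambda b)}\Bigr),
\end{equation*}
by direct multiplication of the hypothesised per-variable MGF bounds.

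Second, I would apply Markov's inequality to $\exp(\lambda S_n)$: for any $t>0$ and $\lambda \in (0,1/b)$,
\begin{equation*}
\Prob(S_n \geq t) \leq \exp\!\Bigl(-\lambda t + \tfrac{\lambda^2 V}{2(1-\lambda b)}\Bigr),
\end{equation*}
and then optimise in $\lambda$. The standard choice $\lambda = t/(V + bt) \in (0,1/b)$ yields the classical Bernstein tail bound $\Prob(S_n \geq t) \leq \exp\!\bigl(-t^2/(2(V+bt))\bigr)$.

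Third, I would invert this bound. Setting the right-hand side equal to $\delta/2$ gives the quadratic $t^2 = 2(V+bt)\log(2/\delta)$; solving (or using the elementary fact that if $t^2 \leq 2V L + 2btL$ with $L = \log(2/\delta)$, then $t \leq \sqrt{2VL} + bL$) yields $\Prob(S_n \geq \sqrt{2V\log(2/\delta)} + b\log(2/\delta)) \leq \delta/2$. Applying the same argument to $-X_i$ (whose MGF bound is identical in form since the hypothesis is one-sided in $\lambda$ but applied to $X_i-\mu_i$; if only one-sided, we replace $X_i$ by $-X_i$ which still satisfies the same assumption under the convention used in this paper, otherwise a symmetric sub-Gaussian-type assumption is standard) and a union bound gives the two-sided inequality at level $\delta$. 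Finally, dividing through by $n$ and noting that $V/n = \tfrac{1}{n}\sum_i \sigma_i^2$ turns $\sqrt{2V\log(2/\delta)}/n$ into $\sqrt{2(\tfrac{1}{n}\sum_i\sigma_i^2)\log(2/\delta)/n}$, producing the stated bound.

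There is no real obstacle: the whole argument is the textbook Bernstein derivation, and the only mild subtlety is inverting the quadratic to separate the sub-Gaussian $\sqrt{V\log(1/\delta)}$ term from the sub-exponential $b\log(1/\delta)$ term cleanly, which is handled by the elementary inequality mentioned above.
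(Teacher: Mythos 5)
Your proof follows the paper's own argument step for step: independence to multiply the per-variable MGF bounds, Markov's inequality applied to $\exp(\lambda S_n)$, the choice $\lambda = t/(V+bt)$ (the paper's $\lambda = n\eps/(\sum_i\si_i^2 + bn\eps)$) yielding the tail $\exp\big(-t^2/(2(V+bt))\big)$, a union bound for the two tails, and inversion of the resulting quadratic. Two remarks. First, your hedging about one-sidedness is unnecessary: the hypothesis is assumed for \emph{every} $\lambda < 1/b$, which includes all negative $\lambda$ (for which $1-\lambda b > 1$), so the lower tail follows by applying the upper-tail argument verbatim to $-(X_i-\mu_i)$. Second --- the only real imprecision --- the ``elementary fact'' you invoke is false as stated: with $V=0$ and $L=\log(2/\delta)$, the value $t=2bL$ satisfies $t^2 = 2VL + 2btL$ yet exceeds $\sqrt{2VL}+bL$. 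Solving $t^2 - 2bLt - 2VL \le 0$ only gives $t \le bL + \sqrt{b^2L^2 + 2VL} \le \sqrt{2VL} + 2bL$, i.e.\ a $2b\log(2/\delta)/n$ term rather than the stated $b\log(2/\delta)/n$. To get the clean constant one must optimise $\lambda$ fully instead of using the convenient suboptimal choice: the optimised Chernoff exponent under the sub-gamma MGF bound is $\frac{V}{b^2}h_1(bt/V)$ with $h_1(u)=1+u-\sqrt{1+2u}$, and inverting $h_1$ gives exactly $\P\big(S_n \ge \sqrt{2VL} + bL\big) \le e^{-L}$. That said, the paper's own proof makes precisely the same leap from the crude tail $\exp\big(-n^2\eps^2/(2(\sum_i\si_i^2 + bn\eps))\big)$ to the final display, so your write-up reproduces the paper's argument faithfully, including this benign factor-2 gloss in the subexponential term.
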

\begin{proof}
If the assumptions of Lemma~\ref{lem:bernstein} are satisfied, then
\begin{eqnarray*}
 \P \Big( \sum_{i=1}^n X_i - \sum_{i=1}^n \mu_i \geq n \epsilon \Big) &=  \P \Bigg[ \exp\Big(\lambda(\sum_{i=1}^n X_i - \sum_{i=1}^n \mu_i)\Big) \geq \exp(n\lambda \epsilon) \Bigg]\\
&\leq \E\Bigg[\frac{\exp\Big(\lambda(\sum_{i=1}^n X_i - \sum_{i=1}^n \mu_i)\Big)}{\exp(n\lambda \epsilon)}\Bigg]\\
&\leq \prod_{i=1}^n \E\Bigg[\frac{\exp\Big(\lambda(X_i - \mu_i)\Big)}{\exp(\lambda \epsilon)}\Bigg]\\
&\leq \exp(\frac{\lambda^2}{2}\sum_{i=1}^n \frac{\si_i^2}{2(1-\lambda b)} - n\lambda\epsilon).
\end{eqnarray*}

By setting $\lambda = \frac{n\epsilon}{\sum_{i=1}^n\sigma_i^2+bn\epsilon}$ we obtain
\begin{equation*}
 \P \Big(  \sum_{i=1}^n X_i -  \sum_{i=1}^n\mu_i \geq  n\epsilon \Big) \leq \exp(-\frac{n^2\epsilon^2}{2(\sum_{i=1}^n\si_i^2+bn\epsilon)}).
\end{equation*}

By an union bound we obtain
\begin{equation*}
 \P \Big(  |\sum_{i=1}^nX_i - \sum_{i=1}^n \mu_i| \geq  n\epsilon \Big) \leq 2\exp(-\frac{n^2\epsilon^2}{2(\sum_{i=1}^n\si_i^2+bn\epsilon)}).
\end{equation*}

This means that with probability $1-\delta$,
\begin{equation*}
|\frac{1}{n}\sum_{i=1}^nX_i - \frac{1}{n}\sum_{i=1}^n \mu_i| \leq \sqrt{\frac{2(\frac{1}{n}\sum_{i=1}^n \si_i^2)\log(2/\delta)}{n}} + \frac{b\log(2/\delta)}{n}.
\end{equation*}
\end{proof}

We also state the following Lemma on large deviations for the variance of independent random variables.

\begin{lemma}\label{ss:variance}
 Let $(X_1,\ldots,X_n)$ be $n$ independent random variables of mean $(\mu_1,\ldots,\mu_n)$ and of variance $(\si_1^2,\ldots,\si_n^2)$. Assume that there exists $b>0$ such that for any $\lambda < \frac{1}{b}$, for any $i \leq n$, it holds that $\E\Big[ \exp(\lambda (X_i-\mu_i)) \Big] \leq \exp\Big( \frac{\lambda^2 \sigma_i^2}{2(1 - \lambda b)}\Big)$ and also $\E\Big[ \exp(\lambda (X_i-\mu_i)^2 - \lambda \si_i^2) \Big] \leq \exp\Big( \frac{\lambda^2 \si_i^2}{2(1 - \lambda b)}\Big)$.

Let $V=\frac{1}{n}\sum_i (\mu_i - \frac{1}{n}\sum_i \mu_i)^2 + \frac{1}{n} \sum_n \si_i^2$ be the variance of a sample chosen uniformly at random among the $n$ distributions, and $\hat V = \frac{1}{n} \sum_{i=1}^n \big(X_i - \frac{1}{n}\sum_{j=1}^n X_j \big)^2$ the corresponding empirical variance. Then with probability $1-\delta$,
\begin{equation}
|\sqrt{\hat{V}} - \sqrt{V}| \leq  2\sqrt{\frac{(1 + 3b + 4V)\log(2/\delta)}{n}}. 
\end{equation}
\end{lemma}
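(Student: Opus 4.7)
The plan is to first control the signed difference $\hat V - V$ by a three-term Bernstein-type bound, and then pass to a bound on $|\sqrt{\hat V} - \sqrt V|$ using elementary Lipschitz-type inequalities for the square root.

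Set $Z_i := X_i - \mu_i$, $a_i := \mu_i - \bar\mu$ with $\bar\mu := \frac{1}{n}\sum_j \mu_j$, and $\bar Z := \frac{1}{n}\sum_i Z_i$. A direct expansion of $\hat V = \frac{1}{n}\sum_i(X_i - \bar X)^2$ around $\bar\mu$ yields
\[
\hat V - V \;=\; \underbrace{\tfrac{1}{n}\textstyle\sum_i(Z_i^2 - \sigma_i^2)}_{T_1} \;+\; \underbrace{\tfrac{2}{n}\textstyle\sum_i a_i Z_i}_{T_3} \;-\; \bar Z^{\,2},
\]
in which $\bar\sigma^2 := \frac{1}{n}\sum \sigma_i^2 \le V$ and $V_\mu := \frac{1}{n}\sum a_i^2 \le V$ by the definition of $V$. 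I would then apply Lemma~\ref{lem:bernstein} three times, with probability parameter $\delta/3$ each. The second MGF hypothesis directly controls $T_1$, giving $|T_1| \le \sqrt{2V\log(6/\delta)/n} + b\log(6/\delta)/n$. The first MGF hypothesis gives the same type of bound on $|\bar Z|$, whence $\bar Z^{\,2}$ is of order $V\log(1/\delta)/n + b^2\log^2(1/\delta)/n^2$. For the weighted cross term $T_3 = \sum_i (2a_i/n)Z_i$, Lemma~\ref{lem:bernstein} applied with weights $c_i = 2a_i/n$ gives a bound featuring $\sum_i c_i^2\sigma_i^2$ and the scale $b\max_i|c_i|$; one uses $\max_i|a_i|^2 \le \sum_j a_j^2 = nV_\mu \le nV$, together with an implicit uniform bound $\sigma_i^2 \le C(b)$ coming from the second MGF assumption (since $\V[Z_i^2-\sigma_i^2]\le \sigma_i^2$ forces a constraint on the fourth moment of $Z_i$, hence on $\sigma_i^2$ itself), to conclude $|T_3| = O(\sqrt{V\log(1/\delta)/n})$.

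A union bound over the three events then yields, with probability at least $1-\delta$, an inequality of the form $|\hat V - V| \le C_1 \sqrt{V\log(2/\delta)/n} + C_2 (1+b)\log(2/\delta)/n$ for explicit constants $C_1, C_2$. To pass to the square root I would use the two complementary inequalities
\[
|\sqrt{\hat V} - \sqrt V|^2 \le |\hat V - V|, \qquad |\sqrt{\hat V} - \sqrt V| \le \frac{|\hat V - V|}{\sqrt{\hat V}+\sqrt V}\le \frac{|\hat V - V|}{\sqrt V},
\]
choosing the sharper one according to whether $V$ is above or below $\log(1/\delta)/n$. In the large-$V$ regime the second inequality immediately gives $|\sqrt{\hat V}-\sqrt V| \le \sqrt{2\log(2/\delta)/n}$ up to lower order, while in the small-$V$ regime the first inequality gives $|\sqrt{\hat V}-\sqrt V| = O(\sqrt{(1+b)\log(1/\delta)/n})$; together the two regimes match the $(1+3b+4V)$ factor in the target, where the three summands $1$, $3b$, $4V$ absorb respectively the constant, scale, and variance contributions from $T_1$, $\bar Z^{\,2}$ and $T_3$.

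The main obstacle is the cross term $T_3$: a naive Bernstein bound on a weighted sum of sub-exponentials features $\sum_i a_i^2\sigma_i^2$ and $\max_i|a_i|$, neither of which is a priori of the right order in $V$ and $n$. The resolution sketched above relies crucially on the \emph{implicit} bound $\sigma_i^2 \lesssim 1+b$ coming from the second MGF assumption, together with the deterministic $\ell_\infty$-control $\max_i|a_i|^2 \le nV$. A secondary delicate point is the small-$V$ regime in the square-root conversion, which forces the use of the coarser inequality $(\sqrt{\hat V}-\sqrt V)^2 \le |\hat V - V|$ and is ultimately what produces the ``$1$'' inside the square-root of the stated bound.
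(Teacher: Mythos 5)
Your decomposition $\hat V - V = T_1 + T_3 - \bar Z^{\,2}$ is the correct one, and your instinct that the cross term $T_3=\frac{2}{n}\sum_i a_i Z_i$ is the main obstacle is exactly right: this is precisely the term on which the paper's own proof stumbles. In the paper's first display the cross term is written as $2\big[\frac{1}{n}\sum_i(X_i-\mu_i)\big]\cdot\big[\frac{1}{n}\sum_i(\mu_i-\frac{1}{n}\sum_j\mu_j)\big]$ and cancelled because $\sum_i(\mu_i-\bar\mu)=0$; but the expansion actually produces $\frac{2}{n}\sum_i(X_i-\mu_i)(\mu_i-\bar\mu)$, which does not factor and does not vanish, so Equation~\ref{eq:varia} there is obtained by an incorrect cancellation and is missing exactly your $T_3$. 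Unfortunately, your resolution of $T_3$ rests on a claim that is false: the second MGF hypothesis does \emph{not} imply any uniform bound $\sigma_i^2\le C(b)$. That hypothesis controls only the fluctuations of $Z_i^2$ around its mean $\sigma_i^2$, not the size of $\sigma_i^2$. Counterexample: $Z_i=\pm\sigma_i$ with probability $1/2$ each; then $Z_i^2-\sigma_i^2\equiv 0$, so the second condition holds trivially, the first holds as well (since $\cosh(\lambda\sigma_i)\le e^{\lambda^2\sigma_i^2/2}$), and $\sigma_i$ is unconstrained. Without such a bound, your weighted Bernstein step controls $T_3$ by $\sqrt{\sum_i c_i^2\sigma_i^2\log(1/\delta)}$ with $\sum_i c_i^2\sigma_i^2=\frac{4}{n^2}\sum_i a_i^2\sigma_i^2$, and when the mean-spread and the noise sit on the same index this is of order $V_\mu\bar\sigma^2\le V^2$, i.e.\ the resulting bound on $|T_3|$ is of order $V\sqrt{\log(1/\delta)}$ rather than $O(\sqrt{V\log(1/\delta)/n})$, and the argument collapses.

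Moreover, the gap cannot be closed under the stated hypotheses, because the lemma itself fails on exactly this configuration. Take $X_i=0$ deterministically for $i\ge 2$ and $X_1=M+Z_1$ with $Z_1=\pm M$: then $V=\frac{M^2(2n-1)}{n^2}\approx\frac{2M^2}{n}$, $T_1=0$, $\bar Z^{\,2}=\frac{M^2}{n^2}$, while $|T_3|=\frac{2M^2(n-1)}{n^2}\approx V$; hence $\hat V=0$ or $\hat V\approx 2V$, and in either case $|\sqrt{\hat V}-\sqrt V|\ge(\sqrt 2-1)\sqrt V\approx 0.58\,M/\sqrt n$. For $n\ge 8M^2$ the right-hand side of the lemma is at most $2\sqrt{(2+3b)\log(2/\delta)/n}$, so the conclusion is violated with probability one as soon as $M\gg\sqrt{(1+b)\log(2/\delta)}$. (This example satisfies the hypotheses under their intended two-sided sub-gamma reading; taken literally, with signed $\lambda<1/b$ and denominator $1-\lambda b$, the hypotheses are essentially unsatisfiable --- even a Gaussian violates them for $\lambda<0$ --- so the literal reading cannot be the meaningful one.) The statement becomes true, and your plan goes through, only with an extra assumption: identical means $\mu_i$, or uniform bounds $|\mu_i-\bar\mu|\le 2f_{\max}$ and $\sigma_i\le f_{\max}$ as Assumption~\ref{ass:boundness} supplies in the paper's application; then $\sum_i a_i^2\sigma_i^2\le 4f_{\max}^2 n\bar\sigma^2$ and your weighted Bernstein treatment of $T_3$ yields the right order, at the price of $f_{\max}$-dependent constants (and your three-way union bound at level $\delta/3$ would in any case give $\log(6/\delta)$ factors rather than the stated constants). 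Your square-root conversion by splitting on $V\gtrless\log(1/\delta)/n$ is fine; the obstruction is entirely in $T_3$.
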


\begin{proof}
By decomposing the estimate of the empirical variance in bias and variance, we obtain with probability $1-\delta$
\begin{align*}
\hat{V} =& \frac{1}{n}\sum_i (X_i - \frac{1}{n}\sum_j \mu_j)^2 - (\frac{1}{n}\sum_i X_i -\frac{1}{n} \sum_i \mu_i)^2\\
=& \frac{1}{n}\sum_i (X_i -  \mu_i)^2 + 2 \frac{1}{n}\sum_i (X_i -  \mu_i)\frac{1}{n}\sum_i (\mu_i - \frac{1}{n}\sum_j \mu_j)\\
&+  \frac{1}{n}\sum_i (\mu_i - \frac{1}{n}\sum_j \mu_j)^2  - (\frac{1}{n}\sum_i X_i -\frac{1}{n} \sum_i \mu_i)^2\\
=&\frac{1}{n}\sum_i (X_i -  \mu_i)^2 +  \frac{1}{n}\sum_i (\mu_i - \frac{1}{n}\sum_j \mu_j)^2  - (\frac{1}{n}\sum_i X_i -\frac{1}{n} \sum_i \mu_i)^2.
\end{align*}

We then have by the definition of $V$ that with probability $1-\delta$
\begin{equation}\label{eq:varia}
 \hat{V} - V = \frac{1}{n}\sum_{i=1}^n (X_i -  \mu_i)^2 -\frac{1}{n} \sum_{i=1}^n \si_i^2  - (\frac{1}{n}\sum_i X_i -\frac{1}{n} \sum_i \mu_i)^2.
\end{equation}

If the assumptions of Lemma~\ref{ss:variance} are satisfied, we have with probability $1-\delta$
\begin{align*}
 \P \Big( \sum_{i=1}^n (X_i - \mu_i)^2 - \sum_{i=1}^n\si_i^2 \geq n \epsilon \Big) &=  \P \Bigg[ \exp\Big(\lambda(\sum_{i=1}^n |X_i - \mu_i|^2 - \sum_{i=1}^n \si_i^2)\Big) \geq \exp(n\lambda \epsilon) \Bigg]\\
&\leq \E\Bigg[\frac{\exp\Big(\lambda(\sum_{i=1}^n |X_i - \mu_i|^2 - \sum_{i=1}^n\si_i^2)\Big)}{\exp(n\lambda \epsilon)}\Bigg]\\
&\leq \prod_{i=1}^n \E\Bigg[\frac{\exp\Big(\lambda (|X_i - \mu_i|^2 - \si_i^2)\Big)}{\exp(\lambda \epsilon)}\Bigg]\\
&\leq 2\exp(\frac{\lambda^2}{2}\sum_{i=1}^n \frac{\si_i^2}{2(1-\lambda b)} - n\lambda\epsilon).
\end{align*}

If we take $\lambda = \frac{n\epsilon}{\sum_{i=1}^n\sigma_i^2+nb\epsilon}$ we obtain with probability $1-\delta$
\begin{equation}\label{proba:sibgaussvar}
 \P \Big(  \sum_{i=1}^n (X_i -  \mu_i)^2 - \sum_{i=1}^n \si_i^2 \geq  n\epsilon^2 \Big) \leq \exp(-\frac{n^2\epsilon^2}{2(\sum_{i=1}^n\si_i^2+bn\epsilon)}).
\end{equation}

By a union bound we get with probability $1-\delta$ that
\begin{equation*}
 \P \Big(  |\sum_{i=1}^n(X_i - \mu_i)^2 - \sum_{i=1}^n \si_i^2| \geq  n\epsilon \Big) \leq 2\exp(-\frac{n^2\epsilon^2}{2(\sum_{i=1}^n\si_i^2+bn\epsilon)}).
\end{equation*}

This means that with probability $1-\delta$,
\begin{equation}\label{eq:gdev2}
 |\frac{1}{n}\sum_{i=1}^n(X_i - \mu_i)^2 - \frac{1}{n} \sum_{i=1}^n \si_i^2| \leq \sqrt{\frac{2(\frac{1}{n}\sum_{i=1}^n \si_i^2)\log(2/\delta)}{n}} + \frac{b\log(2/\delta)}{n}.
\end{equation}

Finally, by combining Equations \ref{eq:varia} and \ref{eq:gdev2} with Lemma~\ref{lem:bernstein}, we obtain with probability $1-\delta$
\begin{align*}
 |\hat{V} - V| &\leq \frac{4(\frac{1}{n}\sum_{i=1}^n \si_i^2)\log(2/\delta)}{n} + \frac{2b^2\log(2/\delta)^2}{n^2} +\sqrt{\frac{2(\frac{1}{n}\sum_{i=1}^n \si_i^2)\log(2/\delta)}{n}} + \frac{b\log(2/\delta)}{n}\\
&\leq \sqrt{\frac{2(\frac{1}{n}\sum_{i=1}^n \si_i^2)\log(2/\delta)}{n}} + \frac{(3b + 4\frac{1}{n}\sum_{i=1}^n \si_i^2)\log(2/\delta)}{n}\\
&\leq \sqrt{\frac{2V\log(2/\delta)}{n}} + \frac{(3b + 4V)\log(2/\delta)}{n},
\end{align*}
when $n \geq b\log(2/\delta)$ and because $V \geq \frac{1}{n}\sum_{i=1}^n \si_i^2$.

This implies with probability $1-\delta$ that
\begin{align*}
 &V -  \sqrt{\frac{2V\log(2/\delta)}{n}} + \frac{\log(2/\delta)}{2n}  \leq \hat{V} +  \frac{(3b + 4V)\log(2/\delta)}{n} + \frac{\log(2/\delta)}{2n}\\
&\Leftrightarrow \sqrt{V} - \sqrt{\frac{\log(2/\delta)}{2n}}  \leq \sqrt{\hat{V} +  \frac{(1 + 3b + 4V)\log(2/\delta)}{n}} \\
&\Rightarrow \sqrt{V} - \sqrt{\frac{\log(2/\delta)}{2n}}  \leq \sqrt{\hat{V}} +  \sqrt{\frac{(1 + 3b + 4V)\log(2/\delta)}{n}}\\
&\Rightarrow \sqrt{V}  \leq \sqrt{\hat{V}} +  2\sqrt{\frac{(1 + 3b + 4V)\log(2/\delta)}{n}}.
\end{align*}

On the other hand, we have also with probability $1-\delta$
\begin{align*}
 &\hat{V} \leq V+ \sqrt{\frac{2V\log(2/\delta)}{n}} + \frac{(3b + 4V)\log(2/\delta)}{n}\\
&\Rightarrow \sqrt{\hat{V}} \leq \sqrt{V} +  2\sqrt{\frac{(1 + 3b + 4V)\log(2/\delta)}{n}}.
\end{align*}

Finally, we have with probability $1-\delta$
\begin{equation}
|\sqrt{\hat{V}} - \sqrt{V}| \leq  2\sqrt{\frac{(1 + 3b + 4V)\log(2/\delta)}{n}}. 
\end{equation}
\end{proof}]

\end{document}